\def\eqref#1{equation~\ref{#1}}
\def\1{\bm{1}}
\DeclareMathAlphabet{\mathsfit}{\encodingdefault}{\sfdefault}{m}{sl}
\SetMathAlphabet{\mathsfit}{bold}{\encodingdefault}{\sfdefault}{bx}{n}
\definecolor{PaleGreen}{rgb}{0.76, 1.0, 0.76}
\definecolor{Orangelight}{rgb}{1,0.85,0.6}
\definecolor{LightCyan}{rgb}{0.90,1,1}
\definecolor{LightGreen}{rgb}{0.88, 0.95, 0.88}
\titlespacing*{\section}{0pt}{1ex}{0.5ex}
\titlespacing*{\subsection}{0pt}{0.9ex}{0.2ex}
\titlespacing*{\subsubsection}{0pt}{1.0ex}{0.8ex}
\newtheorem{proposition}{Proposition}[section]
\newtheorem{theorem}{Theorem}[section]
\title{\textbf{AdaFisher}: Adaptive Second Order Optimization via
Fisher Information}
\author{%
  Damien Martins Gomes\thanks{To my father and grandmother, whose strength and love continue to inspire me, this work is dedicated.}\\
  Concordia University and IPSA Toulouse\\
  \texttt{damien.martinsgomes@mail.concordia.ca} \\
   \And
   Yanlei Zhang \\
   Université de Montréal and Mila \\
   \texttt{yanlei.zhang@mila.quebec} \\
   \AND
   Eugene Belilovsky \\
   Concordia University and Mila \\
   \texttt{eugene.belilovsky@concordia.ca} \\
   \And
   Guy Wolf \\
   Université de Montréal and Mila \\
   \texttt{wolfguy@mila.quebec} \\
   \And
   {Mahdi S. Hosseini\thanks{Corresponding Author}} \\
   {Concordia University and Mila} \\
   {\texttt{mahdi.hosseini@concordia.ca}} \\
}
\begin{document}
\maketitle
\begin{abstract}
First-order optimization methods are currently the mainstream in training deep neural networks (DNNs). Optimizers like Adam incorporate limited curvature information by employing the diagonal matrix preconditioning of the stochastic gradient during the training. Despite their widespread, second-order optimization algorithms exhibit superior convergence properties compared to their first-order counterparts e.g. Adam and SGD. However, their practicality in training DNNs is still limited due to increased per-iteration computations compared to the first-order methods. We present \emph{AdaFisher}--an adaptive second-order optimizer that leverages a \emph{diagonal block-Kronecker} approximation of the Fisher information matrix for adaptive gradient preconditioning. AdaFisher aims to bridge the gap between enhanced \emph{convergence/generalization} capabilities and computational efficiency in second-order optimization framework for training DNNs. Despite the slow pace of second-order optimizers, we showcase that AdaFisher can be reliably adopted for image classification, language modeling and stands out for its stability and robustness in hyper-parameter tuning. We demonstrate that AdaFisher \textbf{outperforms the SOTA optimizers} in terms of both accuracy and convergence speed. Code is available from \href{https://github.com/AtlasAnalyticsLab/AdaFisher}{https://github.com/AtlasAnalyticsLab/AdaFisher}.
\end{abstract}
\section{Introduction}
Deep Neural Network (DNN) optimization often struggles with the challenge of generalizing across varied architectures and complex data distributions. Current methods such as Adam optimizer \citep{kingma2017adam} and its variants (AdamP \citep{heo2021adamp}, AdaInject \citep{dubey2022adainject}, AdaBelief \citep{NEURIPS2020_d9d4f495} and YOGI \cite{zaheer2018adaptive}) require extensive Hyper-Parameter (HP) tuning and often fail to generalize efficiently. DNN training typically minimizes a highly non-convex loss function $\mathcal{L}(\theta)$, updating parameters $\theta$ using the expression $\theta^{(t+1)} = \theta^{(t)} - \alpha (\mathcal{G}^{(t)})^{-1}\nabla \mathcal{L}(\theta^{(t)})$ at time step $t$, where $\mathcal{G}^{(t)}$ represents the curvature information. Here, $\mathcal{G}$ is the identity matrix for first-order optimizations such as SGD \citep{kiefer1952stochastic}, and Hessian or Fisher Information Matrix (FIM) for the second-order case \citep{Amari2000MethodsOI}. The Hessian matrix interfaces with the deterministic Newton-Raphson method \citep{Holmgren1996}, whereas the FIM harmonizes with the statistical measure of the Natural Gradient Descent (NGD) approach \citep{Amari2000MethodsOI}. This curvature information crucially optimizes the gradient's preconditioning by accurately rescaling and orienting it. This adjustment significantly accelerates convergence by ensuring more direct progress towards minima, thus enhancing training efficiency and reducing the number of required iterations \citep{kashyap2022survey}.

As mentioned, the second-order methods employ curvature matrices for $\mathcal{G}$ to enhance the optimization process. Although these matrices accelerate convergence by effectively navigating through saddle points and swiftly moving towards minima \citep{foret2021sharpnessaware}, they require higher computational resources for the inverse computation. In fact, when the number of learnable parameters increases, the curse of dimensionality associated with curvature matrix $\mathcal{G}$ makes the entire training process completely intractable by a commodity hardware platform.

Noteworthy approaches, such as Adagrad \citep{JMLR:v12:duchi11a}, Adadelta \citep{zeiler2012adadelta}, RMSProp \citep{hinton2012neural}, and Adam family utilize a simple diagonal approximation of the empirical FIM, which often results in convergence to suboptimal local minima and poor generalization \citep{wilson2018marginal, luo2019adaptive}. Advanced methods like AdaHessian \citep{yao2021adahessian} and Shampoo \citep{gupta2018shampoo} improve this by integrating structured matrices such as the diagonal Hessian or tensor-based preconditioners to enhance optimization. However, these second-order approaches, including K-FAC \citep{martens2020optimizing, eschenhagen2024kronecker}, still face challenges of high computational demands, lacking generalization and needing extensive HP tuning when applied to large-scale models \citep{ma2019inefficiency}.

\begin{wrapfigure}{r}{0.4\textwidth} 
  \vspace{-6mm}
  \includegraphics[width=0.4\textwidth]{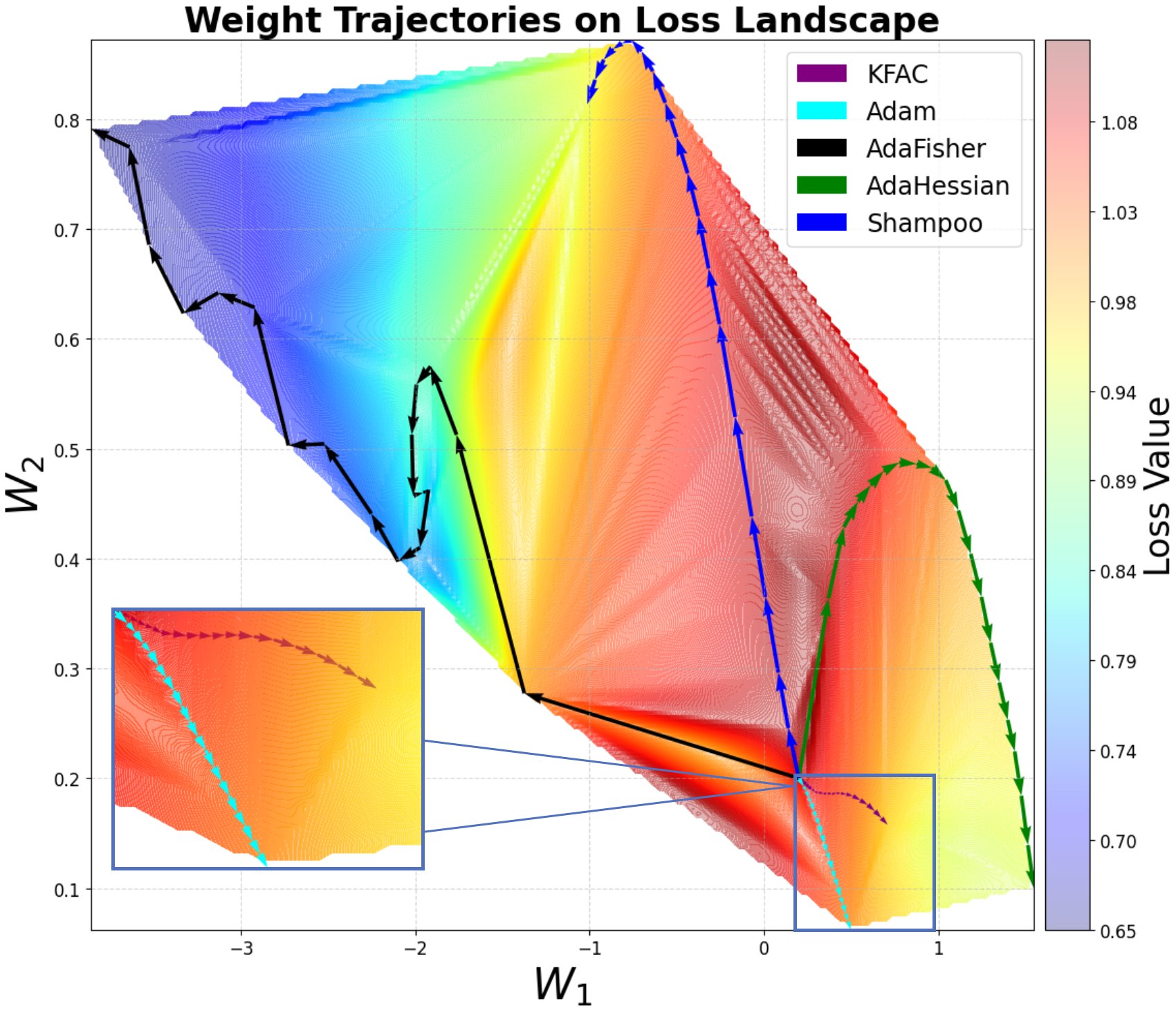}
  \caption{\small Visualizing optimization trajectories for various optimizers overlaid a loss landscape.}
  \label{fig:heatloss}
  \vskip -0.1in
\end{wrapfigure}
To address these challenges, we present AdaFisher, an adaptive second-order optimizer, as an innovative solution to address the generalization challenges raised in training DNNs. Substituting the second moment of Adam by a novel \emph{diagonal block-Kronecker} approximation of the FIM, AdaFisher strikes a balance between simplicity and generalization by introducing an extra HP compared to Adam but fewer than K-FAC, AdaHessian or Shampoo. With memory and time requirements on par with first-order methods, AdaFisher remains a practical choice for achieving effective generalization in DNN optimization. As illustrated in Figure~\ref{fig:heatloss}, AdaFisher not only converges more rapidly but also reaches a superior local minimum by effectively navigating through saddle points compared to its counterparts. Further details regarding the visualization can be found in Appendix~\ref{sec:Appendixvisualization}. 

In summary, our contributions build upon these findings as follows: \textbf{[C1]} We empirically showcase the energy of the Kronecker Factors (KF) is mainly concentrated along the diagonal and provide fresh insights of FIM in optimization; \textbf{[C2]} We introduce a diagonal block-Kronecker approximation of the FIM applicable to various layers, including normalization layers, enhancing model adaptability; \textbf{[C3]} We demonstrate AdaFisher’s robustness and stability across diverse settings, proving its effectiveness; \textbf{[C4]} We showcase AdaFisher's empirical performance against SOTA optimizers in image classification and language modeling, highlighting its superior performance; \textbf{[C5]} We develop a new technique that visualizes trajectories across different optimizers for better understanding of model behavior in the loss landscape. Additionally, we introduce an explainable FIM measure from AdaFisher, enabling comparative analysis of optimizer behavior.

\section{Background}
We consider a supervised learning framework with a dataset $\mathbf{D}$ containing $N$ i.i.d samples, $\mathbf{D} \coloneq \{x_{n}, y_{n}\}_{n=1}^{N}$ where $x_{n} \in \mathbb{R}^{d}$ and $y_{n} \in \mathbb{R}^C$. Let $f_{\theta}: \mathbb{R}^{d} \rightarrow \mathbb{R}^C$ be a L-layer neural network  parametrized by $\theta$ where $\theta_i = \text{concat}(W_{i}, b_i) \in \mathbb{R}^{P_i}$, and $P_i = P_{i}^{out} \times (P_{i}^{in} + 1)$. Let $\mathcal{L}: \mathbb{R}^C \times \mathbb{R}^C \rightarrow \mathbb{R}$ be the loss function defined by the negative log-likelihood, i.e. $\mathcal{L}(y, f_{\theta}(x)) \coloneq - \log p_{\theta}(y | x)$ where $p_{\theta}(y|x)$ is the likelihood of the neural network $f_{\theta}$. The network computes its output $h_{L} = f_{\theta}(x)$ according to 
\begin{align*}
    a_{i} = \theta_{i} \bar{h}_{i-1}, \, \, h_{i} = \phi_{i}(a_{i}),  \, \, \forall \, i \in \{1, \dots, L\} \, \, | \, \, h_{0} = x_{n},
\end{align*}
where $\bar{h}_{i-1} = [h_{i-1}^\top,1]^\top \in \mathbb{R}^{P_{i-1}^{in}+1}$, and $\phi_i$ is an element-wise nonlinearity applied at layer $i$. For a given input target pair $(x,y)$, the gradient of the loss $\mathcal{L}(y, f_{\theta}(x))$ concerning the weights are computed by the backpropagation algorithm \citep{Lecun2001}. For convenience, we adopt the special symbol $s_{i} = \nabla_{a_{i}}\mathcal{L}$ for the pre-activation derivative. Starting from $\nabla_{h_{L}}\mathcal{L} = \partial_{h_{L}}\mathcal{L}(y,h_{L})$, we perform
\begin{align*}
    s_{i} \coloneq \nabla_{a_{i}}\mathcal{L} = \nabla_{h_{i}}\mathcal{L} \odot \phi_{i}'(a_{i}), \, \nabla_{\theta_{i}}\mathcal{L} = s_{i} \bar{h}_{i-1}^\top, \, \nabla_{\bar{h}_{i-1}}\mathcal{L} = \theta_{i}^\top s_{i} \quad | \, \, \forall i \in \{L, \dots, 1\}, 
\end{align*}
where $\odot$ denotes the element-wise product. Finally, the gradient $\nabla_{\theta} \mathcal{L}$ is retrieved by: $\nabla_{\theta}\mathcal{L} = [\text{vec}(\nabla_{\theta_{1}}\mathcal{L})^\top,\text{vec}(\nabla_{\theta_{2}}\mathcal{L})^\top, \dots, \text{vec}(\nabla_{\theta_{L}}\mathcal{L})^\top]^\top$; $\text{vec}(\cdot)$ denotes the Kronecker vectorization operator which stacks the columns of a matrix into a vector. Optimization of a DNN can be recast as a problem of finding the parameter set \(\theta\) that maximizes the likelihood, or equivalently, minimizes the negative log-likelihood of the observed data. This Maximum Likelihood Estimation can be expressed as an unconstrained composite optimization problem: $\min_{\theta} \, J(\theta) = \sum_{n=1}^{N} \mathcal{L}(y_{n}, f_{\theta}(x_{n}))$, where \(J(\theta)\) denotes the objective function, corresponding to the negative log-likelihood of the data. For notation convenience, we define $g_i = \nabla_{\theta_i}J(\theta)$. The FIM, utilized in lieu of the Hessian for Newton-Raphson's method, approximates the curvature of the log-likelihood function \citep{6790500},
\begin{align}
    F = \sum_{n=1}^{N} \mathbb{E}_{y \sim p(y|f_{\theta}(x_n))} \left[ \nabla_{\theta}\log p_{\theta}(y|x_n)\nabla_{\theta}\log p_{\theta}(y|x_n)^\top\right] &= \mathbb{E}\left[\nabla_{\theta}J (\nabla_{\theta}J)^\top \right] = \mathbb{E}[gg^\top],  \label{eq:fishermatrix}
\end{align}
where \(F\) measures the expected information that an observable \(y\) conveys about the parameter \(\theta\). For brevity, we write $\mathbb{E}$ instead of $\mathbb{E}_{y \sim p(y|f_{\theta}(x_n))}$. The K-FAC approach further simplifies FIM calculation using a block-diagonal approximation in DNNs, known as Empirical FIM (EFIM), denoted by \(\hat{F}\). In Eq. (\ref{eq:fishermatrix}), \( F \) is construed as a block matrix with dimensions \( L \times L \), where each \( (i,j) \)th block \( F_{i,j} \) is articulated by \( F_{i,j} = \mathbb{E}[\text{vec}(g_i) \text{vec}(g_j)^\top] \). From the Kronecker-vectorization equality \( \text{vec}(uv^\top) = v \otimes u \), we express \( \text{vec}(g_i) \) as \( \bar{h}_{i-1} \otimes s_{i} \) \citep{Petersen2008}, where \( g_i \) is defined as \( s_{i} \bar{h}_{i-1}^\top \). By segmenting the FIM into discrete layer-specific blocks, a systematic factorization of each block yields as 
\begin{align*}
    \hat{F}_{i,j} = \mathbb{E}[\text{vec}(g_i)\text{vec}(g_j)^\top] = \mathbb{E}[\bar{h}_{i-1}\bar{h}_{j-1}^\top \otimes s_{i}s_{j}^\top] \approx \mathbb{E}[\bar{h}_{i-1}\bar{h}_{j-1}^\top] \otimes \mathbb{E}[s_{i} s_{j}^\top],
\end{align*}
\begin{wrapfigure}{l}{0.45\textwidth}
    \vskip -0.1in
  \includegraphics[width=0.45\textwidth]{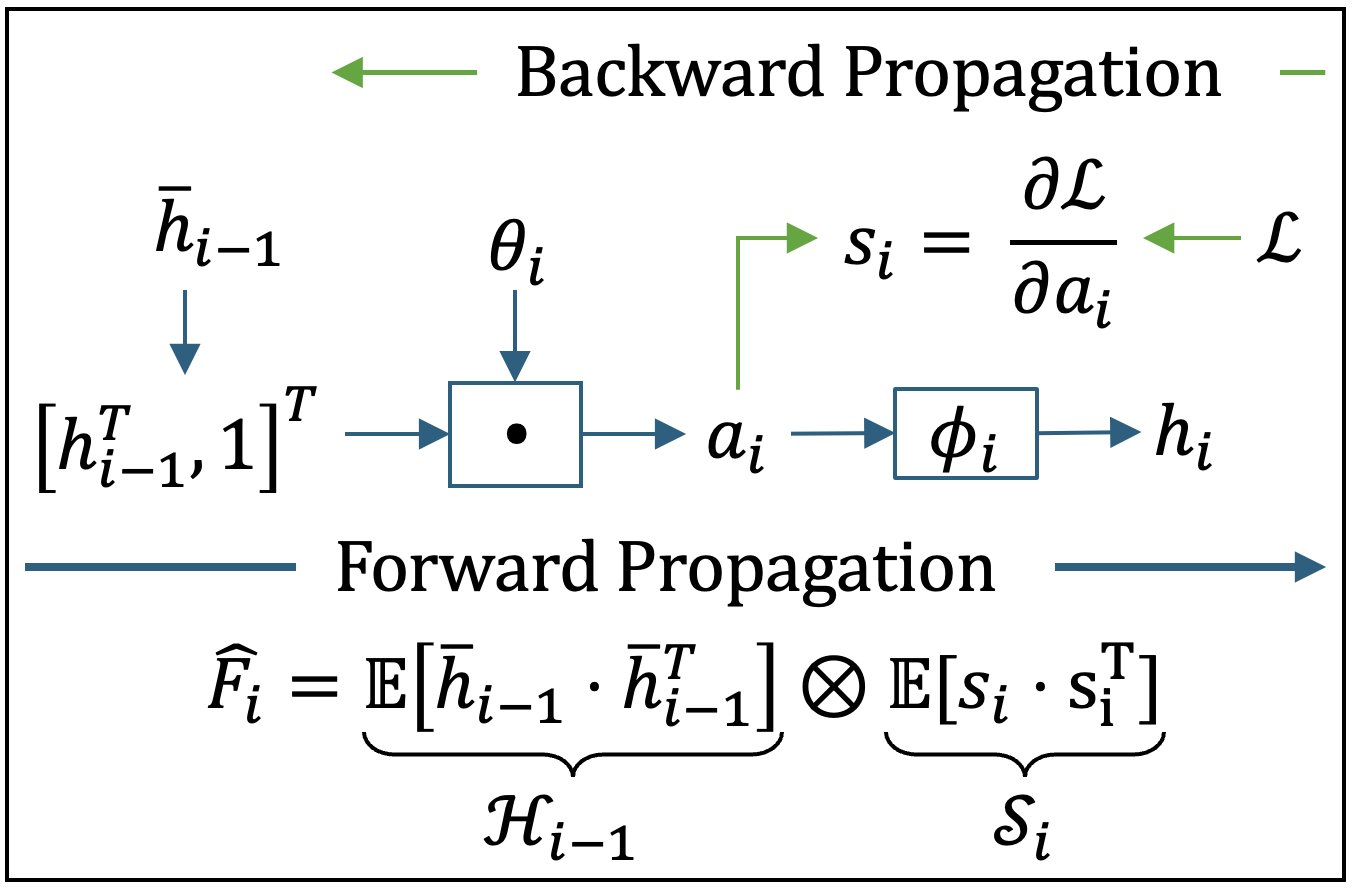}
  \caption{\small Illustration of EFIM computation using K-FAC for a given layer $i$.}
  \label{fig:background_saved}
  \vskip -0.1in
\end{wrapfigure}
where $i, j$ span the layer indices from 1 to $L$. Given the dimensionality of $\bar{h}_{i-1} \in \mathbb{R}^{P_{i}^{in}+1}$ and $s_{i} \in \mathbb{R}^{P_{i}^{out}}$, the Fisher matrix dimension will be $\hat{F}_{i,j} \in \mathbb{R}^{P_i \times P_{i}}$. Initially, K-FAC estimates the expectation of the Kronecker product under the presumption that activations and pre-activation derivatives are mutually independent. This can be represented as the Kronecker product of the individual expectations $\hat{F}_{i,j} = \mathcal{H}_{i-1, j-1} \otimes \mathcal{S}_{i,j}$, where \( \mathcal{H}_{i-1,j-1} = \mathbb{E}[\bar{h}_{i-1}\bar{h}_{j-1}^\top] \) and \( \mathcal{S}_{i,j} = \mathbb{E}[s_{i}s_{j}^\top] \), denoting the KFs. The assumption for the block-diagonal structure posits that weight derivatives across distinct layers are uncorrelated, expressed by $ \hat{F} = \text{diag}(\hat{F}_{1,1}, \ldots, \hat{F}_{L,L}) = \text{diag}(\hat{F}_{1}, \ldots, \hat{F}_{L})$. Figure~\ref{fig:background_saved} illustrates the EFIM computation via K-FAC for a given layer $i$.

\section{Methodology}
Our methodology consists of four primary components: \textbf{(i)} Analyzing the KFs’ structure in Section~\ref{sec:kroneckerdetail} and showcase their diagonal dominance; \textbf{(ii)} Introducing a novel approximation of the FIM that retains only the diagonals of the KFs, detailed in Section~\ref{sec:effcomputFIM}; \textbf{(iii)} Incorporating the diagonal FIM approximation within the adaptive optimization framework as an alternative to the conventional second moment used in Adam, described in Section~\ref{sec:augmentingFIMadam}; \textbf{(iv)} Providing a theoretical proof of AdaFisher’s convergence under both convex and non-convex conditions in Section~\ref{sec:convergenceanalysis}.

\subsection{Diagonal Concentration of KFs}\label{sec:kroneckerdetail}
Inspired by Gershgorin circle theorem \citep{Horn_Johnson_2012}, we empirically conclude the KFs are diagonally concentrated by studying their eigenvalue distribution and perturbation under Gaussian noise. For demonstration, we focus on the eigenvalue spectrum of weight matrices from the $37$th layer of ResNet-18 \citep{he2016deep} after training for 50 epochs on CIFAR-10 \citep{krizhevsky2009learning}. As illustrated in Figure~\ref{fig:gershgorin_and_perturbation}, the eigenvalues (denoted as red crosses) predominantly cluster within the Gershgorin discs, which are centered along the matrix's diagonal elements (denoted as black circles), signifying substantial diagonal dominance. This phenomenon is quantitatively supported by the Gershgorin circle theorem, which posits that every eigenvalue \(\lambda\) of a complex square matrix \(\mathcal{A}\) lies within at least one of the  Gershgorin discs \(D(a_{ii}, R_i)\), where \(R_i = \sum_{j \neq i} |a_{ij}|\) represents the radius computed as the sum of the absolute values of the off-diagonal entries of the \(i\)th row. Next, we add Gaussian noise \(\mathcal{N}(0, \sigma^2)\), \(\sigma = 10^{-3}\) on off-diagonal elements. The perturbed matrix \(\hat{\mathcal{M}}\) is then expressed as $\hat{\mathcal{M}} = \mathcal{A} + \mathcal{E}, \quad \text{where } \mathcal{E} = [e_{ij}] \text{ and } e_{ij} \sim \mathcal{N}(0, \sigma^2) \text{ for } i \neq j$. 
\begin{wrapfigure}{r}{0.6\textwidth} 
\vskip -0.1in
  \includegraphics[width=0.6\textwidth]{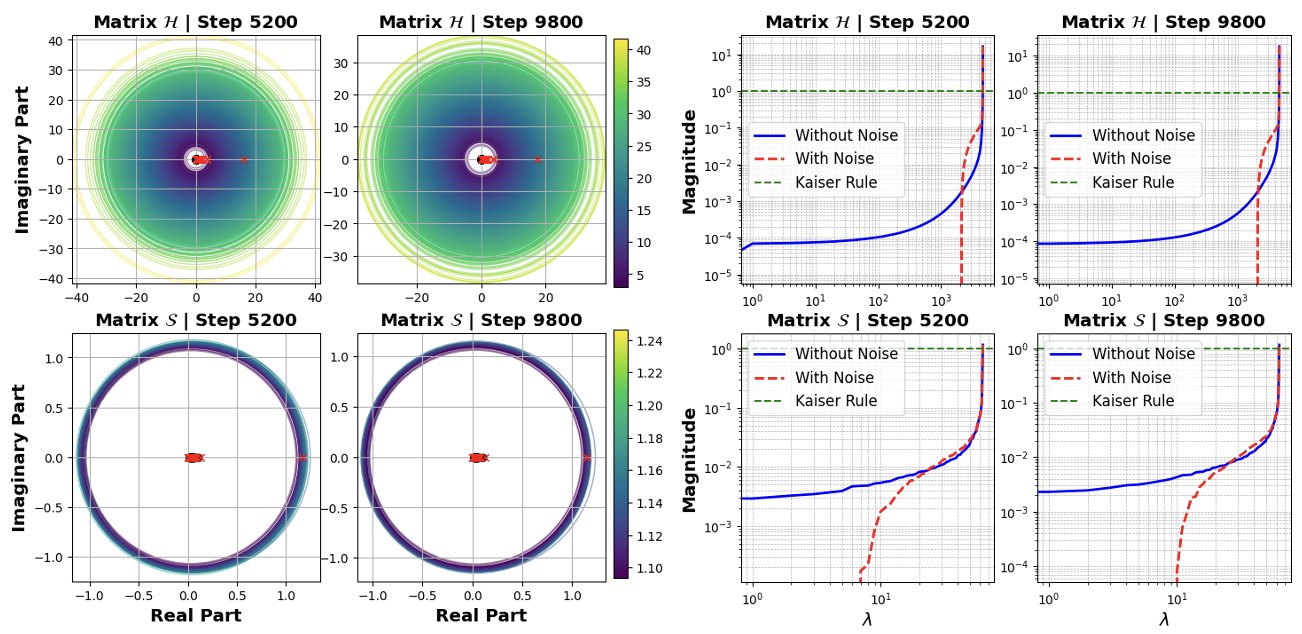} 
  \caption{\small  Gershgorin disks and eigenvalue perturbations from the $37$th Convolutional Layer of ResNet-18 at steps 5200 (middle of training) and 9800 (end of training). Left: Gershgorin circles; Right: Eigenvalue spectrum w/w-o noise.}
    \vskip -0.1in
    \label{fig:gershgorin_and_perturbation}
\end{wrapfigure}
The noise perturbation on KF eigenvalues is critical to comprehend the dynamics and stability of the matrix. This demonstrates the introduction of noise to the off-diagonal elements yields minimal eigenvalue perturbations, particularly those surpassing the Kaiser criterion (i.e. most dominant eigenvalues), which remain virtually unchanged \citep{braeken2017empirical}. Both the above understandings from Gershgorin disc analysis and eigenvalue perturbation corroborate the robustness of the matrix's \textbf{diagonal dominance}. Extensive discussions of these analyses, including the KFs Fourier analysis, are available in Appendix~\ref{sec:kroneckercontinueappendix}.

\subsection{Efficient Computation of the FIM} \label{sec:effcomputFIM}
In the realm of optimization, NGD offers a geometrically nuanced adaptation of the classical steepest descent approach (in Euclidean space), transitioning the focus from parameter space to the model's distribution space underpinned by the adoption of a \emph{Riemannian metric}, \cite{Amari2000MethodsOI}. The formulation of the \emph{preconditioned} gradient $\bar{g}^{(t)}$ given the NGD method is articulated as
\begin{align}\label{eq:NGD}
\bar{g}^{(t)} = (F^{(t)})^{-1} g^{(t)},
\end{align}
where $F^{(t)}$ denotes the FIM at time step $t$ distinguished from $F_i$, the FIM at layer $i$. One of the distinguishing features of NGD within this framework is its re-parametrization invariance, a direct consequence of leveraging the model's distribution properties rather than its parameters. Nevertheless, the direct FIM computation is highly demanding, and we solve this by adopting the diagonal approximation of the KFs as supported by our analyses from Section~\ref{sec:kroneckerdetail}. In addition, a critical component of modern DNNs is known to be the normalization of the layers by introducing scale and shift parameters (e.g. batch-normalization \citep{ioffe2015batch}, layer-normalization \citep{ba2016layer}). This is to adjust the network's dynamics (e.g., reducing covariance shift) in a non-trivial way \cite{Huang2023NormalizationTechniques} where the lack of FIM approximation on such normalization layers can lead to suboptimal preconditioning. Therefore, we introduce a method for calculating the KFs for normalization layers detailed in Proposition~\ref{prop:proposition_normalization}.
\begin{proposition}[EFIM for normalization layer]
\label{prop:proposition_normalization}
Let $(\nu_i, \beta_i) \in \mathbb{R}^{C_i}$ be the scale and shift parameters of a normalization layer $i$. The empirical KFs for the FIM approximation are
\begin{align}
\mathcal{H}_{i-1}\Bigr|_{\nu_i} = \frac{1}{|\mathcal{T}_i|} \sum_{x \in \mathcal{T}_i} h_{i-1,x}h_{i-1,x}^\top, \, \mathcal{H}_{i-1}\Bigr|_{\beta_i} = \mathbf{1}\mathbf{1}^\top, \quad   
\mathcal{S}_i = \frac{1}{|\mathcal{T}_i|} \sum_{x \in \mathcal{T}_i} s_{i,x}s_{i,x}^\top, \notag
\end{align}
where $h_{i-1}, s_i \in \mathbb{R}^{C_i \times |\mathcal{T}_i|}$ represent the pre-normalized activations and gradients, respectively. Here, $\mathcal{T}_i$ is the set of dimensions over which normalization statistics are computed, and $C_i$ is the channels/features size.
\end{proposition}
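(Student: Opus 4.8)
The plan is to read off the two Kronecker factors directly from the forward and backward passes of the normalization layer, specializing the K-FAC factorization of Section~2 to the per-channel (element-wise) structure of the scale and shift. First I would write the forward map explicitly: for each channel $c \in \{1,\dots,C_i\}$ and each position $x \in \mathcal{T}_i$ over which the normalization statistics are pooled, the pre-activation is $a_{i,c,x} = \nu_{i,c}\,h_{i-1,c,x} + \beta_{i,c}$, where $h_{i-1,c,x}$ is the (normalized) activation entering the affine transform. Differentiating the loss through this map and using $s_i = \nabla_{a_i}\mathcal{L}$ gives $g_{\nu_{i,c}} = \sum_{x \in \mathcal{T}_i} s_{i,c,x}\,h_{i-1,c,x}$ and $g_{\beta_{i,c}} = \sum_{x \in \mathcal{T}_i} s_{i,c,x}$, since $\partial a_{i,c,x}/\partial \nu_{i,c} = h_{i-1,c,x}$ and $\partial a_{i,c,x}/\partial \beta_{i,c} = 1$. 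The crucial observation is that, just as the affine layer of Section~2 has the form $a_i = \theta_i \bar h_{i-1}$ with $\bar h_{i-1} = [h_{i-1}^\top, 1]^\top$, here the scale plays the role of a weight whose effective input is the activation $h_{i-1,x}$, while the shift plays the role of a bias whose effective input is the constant $1$.

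Next I would form the EFIM block $\hat F = \mathbb{E}[\text{vec}(g)\text{vec}(g)^\top]$, pooling the per-position terms over $x \in \mathcal{T}_i$ as the Monte-Carlo samples of the expectation, and then apply the two approximations already used for standard layers: the Kronecker identity $\text{vec}(s\,\bar h^\top) = \bar h \otimes s$ to expose the Kronecker structure, followed by the K-FAC independence assumption that decouples activations from pre-activation gradients, $\mathbb{E}[\bar h \bar h^\top \otimes s s^\top] \approx \mathbb{E}[\bar h \bar h^\top] \otimes \mathbb{E}[s s^\top]$. Reading off the activation factor for each parameter group then yields the claim: for the scale the effective input is $h_{i-1,x}$, so $\mathcal{H}_{i-1}|_{\nu_i} = \mathbb{E}[h_{i-1}h_{i-1}^\top]$, estimated by the empirical average $\tfrac{1}{|\mathcal{T}_i|}\sum_x h_{i-1,x}h_{i-1,x}^\top$; for the shift the effective input is identically $1$ in every coordinate, so its outer product collapses to $\mathcal{H}_{i-1}|_{\beta_i} = \mathbf{1}\mathbf{1}^\top$; and the gradient factor $\mathcal{S}_i = \mathbb{E}[s_i s_i^\top]$, shared by both groups, is estimated by $\tfrac{1}{|\mathcal{T}_i|}\sum_x s_{i,x}s_{i,x}^\top$.

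The main obstacle I anticipate is conceptual rather than computational: faithfully mapping the element-wise, per-channel scale/shift operation onto the matrix-multiplication template for which K-FAC was originally derived, so that the independence factorization is legitimate. Concretely, the scale is effectively a \emph{diagonal} weight $\mathrm{diag}(\nu_i)$, so the honest object is the $C_i\times C_i$ per-position outer products $s_{i,x}h_{i-1,x}^\top$; I would argue that the full $C_i \times C_i$ factors $\mathcal{H}$ and $\mathcal{S}$ are formed exactly as in the dense case, with the restriction to the diagonal of the weight deferred to the diagonal reduction of Section~\ref{sec:effcomputFIM}, which recovers the intended $C_i$ per-channel curvature values for both $\nu_i$ and $\beta_i$. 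I would also need to justify identifying the pooled sum over $\mathcal{T}_i$ with the estimator of the FIM expectation (hence the $1/|\mathcal{T}_i|$ normalization), and in particular that the shift's constant unit Jacobian is precisely what forces $\mathcal{H}_{i-1}|_{\beta_i} = \mathbf{1}\mathbf{1}^\top$ rather than an identity-like factor; a final dimensional-consistency check confirms these collapse to per-channel scalars after diagonalization.
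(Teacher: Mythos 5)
Your proposal follows essentially the same route as the paper's proof: derive the parameter gradients $g_{\nu_i}=\sum_x h_{i-1,x}\odot s_{i,x}$ and $g_{\beta_i}=\sum_x s_{i,x}$ from the affine map, expand the expected outer products over the pooled positions $\mathcal{T}_i$, invoke the K-FAC independence approximation to split activations from pre-activation gradients, and read off $\mathcal{H}_{i-1}|_{\nu_i}$, $\mathcal{H}_{i-1}|_{\beta_i}=\mathbf{1}\mathbf{1}^\top$ (from the constant unit input of the bias) and the shared $\mathcal{S}_i$, excluding $\nu$--$\beta$ cross-terms under the block-diagonal assumption. Your extra care in mapping the element-wise scale onto the dense-layer template (treating $\nu_i$ as a diagonal weight and deferring the diagonal reduction) is a more explicit justification of a step the paper passes over silently, but it is the same argument, not a different one.
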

The proof of this proposition and the extended computation of other type of layers are given in Appendix~\ref{sec:proof} and Section~\ref{sec:kroneckerfcators}, respectively. Note that in the context of online and stochastic optimization, the KFs for a given layer \( i \) can be estimated using an Exponentially Moving Average (EMA) scheme across batches defined by
\begin{align}
    \mathcal{H}_{i-1}^{(t)} = \gamma \mathcal{H}_{i-1}^{(t)} + (1-\gamma) \mathcal{H}_{i-1}^{(t-1)}, \, \mathcal{S}_{i}^{(t)} = \gamma \mathcal{S}_{i}^{(t)} + (1-\gamma) \mathcal{S}_{i}^{(t-1)}, \label{eq:expkronfactors}
\end{align}
where \( 0 < \gamma \leq 1 \) is the exponential decay factor at step time $t$, $ \mathcal{H}_{i-1}^{(t)}$ and $\mathcal{S}_{i}^{(t)}$ in the right-hand side are new KFs calculated during each forward and backward pass computation. This EMA scheme is commonly used in methods involving diagonal or block-diagonal approximations to the curvature matrix (e.g. \cite{8c8eccbbe8a040118afa8f8423da1fe2, PARK2000755, schaul2013pesky}). Such schemes have the desirable property that they allow the curvature estimation to depend on much more data than what can be reasonably processed in a single mini-batch.

Our study from Section~\ref{sec:kroneckerdetail} suggests that the FIM's critical information predominantly resides along its diagonal. Building upon this, we propose a novel approximation for the FIM, described in Proposition~\ref{prop:proposition1}, that conceptualizes the KFs as diagonal matrices denoted as $\tilde{F}_{D_{i}}$ for layer $i$.
\begin{proposition}[Efficient EFIM] \label{prop:proposition1}
Assume that $\mathcal{H}_{i-1}$ and $\mathcal{S}_{i}$ can be closely approximated by diagonal matrices, denoted by $\mathcal{H}_{D_{i-1}}$ and $\mathcal{S}_{D_{i}}$ respectively at layer $i$, such that $\mathcal{H}_{D_{i-1}} = \text{Diag}(\mathcal{H}_{i-1})$, $\mathcal{S}_{D_{i}} = \text{Diag}(\mathcal{S}_{i})$ where $\text{Diag}$ denote the diagonal of a matrix. Accordingly, the Empirical FIM is defined by
\begin{align}
\Tilde{F}_{D_{i}} \triangleq \mathcal{H}_{D_{i-1}}' \otimes \mathcal{S}_{D_{i}}' + \lambda \mathbf{I},\label{eq:FIMDiag}
\end{align}
where $\mathcal{H}_{D_{i-1}}'$ and $\mathcal{S}_{D_{i}}'$ denote the Min-Max normalization of \(\mathcal{H}_{D_{i-1}}\) and \(\mathcal{S}_{D_{i}}\) \citep{patro2015normalization} and $\lambda$ is a regularization parameter. 
\end{proposition}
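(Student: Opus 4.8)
The plan is to derive the form of $\tilde{F}_{D_i}$ in Eq.~(\ref{eq:FIMDiag}) directly from the block-Kronecker factorization established in the Background, so the argument is constructive rather than an error estimate. First I would recall that the per-layer EFIM block factorizes as $\hat{F}_i = \mathcal{H}_{i-1} \otimes \mathcal{S}_i$ with $\mathcal{H}_{i-1} = \mathbb{E}[\bar{h}_{i-1}\bar{h}_{i-1}^\top]$ and $\mathcal{S}_i = \mathbb{E}[s_i s_i^\top]$. Invoking the diagonal-concentration conclusion of Section~\ref{sec:kroneckerdetail}, I replace each factor by its diagonal, $\mathcal{H}_{i-1} \approx \mathcal{H}_{D_{i-1}} = \text{Diag}(\mathcal{H}_{i-1})$ and $\mathcal{S}_i \approx \mathcal{S}_{D_i} = \text{Diag}(\mathcal{S}_i)$.

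The key algebraic step is that the Kronecker product of two diagonal matrices is again diagonal, with entries equal to the pairwise products of the factors: writing $\mathcal{H}_{D_{i-1}} = \text{diag}(\alpha_1, \dots, \alpha_m)$ and $\mathcal{S}_{D_i} = \text{diag}(\beta_1, \dots, \beta_n)$, the definition of $\otimes$ gives a diagonal matrix whose $(p,q)$-entry is $\alpha_p \beta_q$ and whose off-diagonal entries all vanish. This is the crux of the efficiency claim: unlike full K-FAC, where damping breaks the Kronecker structure so that $(\mathcal{H}_{i-1}\otimes\mathcal{S}_i + \lambda\mathbf{I})^{-1}$ no longer factors, here adding $\lambda\mathbf{I}$ merely shifts a diagonal matrix and its inverse is the entrywise reciprocal $\text{diag}(1/(\alpha_p\beta_q + \lambda))$, obtainable in $O(P_i)$ time.

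Next I would account for the Min-Max normalization. Since the $\alpha_p, \beta_q$ are raw curvature magnitudes that can span many orders across layers and iterations, rescaling each diagonal by $(\cdot - \min)/(\max - \min)$ to obtain $\mathcal{H}_{D_{i-1}}'$ and $\mathcal{S}_{D_i}'$ with entries in $[0,1]$ keeps the preconditioner bounded and makes $\lambda$ a scale-consistent damping term; assembling $\mathcal{H}_{D_{i-1}}' \otimes \mathcal{S}_{D_i}' + \lambda\mathbf{I}$ then yields exactly Eq.~(\ref{eq:FIMDiag}).

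The main obstacle is not the algebra --- the diagonal Kronecker identity is elementary --- but justifying that the two reductions (diagonal truncation and Min-Max rescaling) still yield a faithful, positive-definite surrogate for the true curvature. Positive-definiteness is the easier half: $\lambda > 0$ forces every shifted diagonal entry strictly positive regardless of the normalization, so $\tilde{F}_{D_i}$ is always invertible. Fidelity is the genuinely delicate part, and here I would lean on the Gershgorin-disc and eigenvalue-perturbation evidence of Section~\ref{sec:kroneckerdetail} to argue that the discarded off-diagonal mass is small --- so that the error incurred in replacing $\hat{F}_i$ by $\tilde{F}_{D_i}$ is governed by the off-diagonal radii rather than by the dominant eigenvalues that actually drive the preconditioning.
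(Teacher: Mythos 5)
Your proposal is correct and follows essentially the same route as the paper's proof: both arguments rest on (i) the diagonal truncation of the Kronecker factors justified by the independence assumption and the empirical diagonal concentration of Section~\ref{sec:kroneckerdetail}, (ii) the elementary fact that the Kronecker product of diagonal matrices is diagonal, and (iii) Min-Max normalization plus the $\lambda\mathbf{I}$ shift to bound the spectrum in $[\lambda, 1+\lambda]$ and guarantee invertibility. Your added remark that the diagonal structure preserves cheap entrywise inversion even after damping (unlike full K-FAC) is a nice efficiency observation but does not change the substance of the argument.
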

The proof of this proposition is given in Appendix~\ref{sec:proof}. This approximation strikes a balance between computational time and space complexity and the accuracy of performance, as discussed in Section \ref{sec:results}. We set the regularization parameter \(\lambda = 0.001\), which acts as a damping factor following the Tikhonov regularization principle \cite{pmlr-v37-martens15}, enhancing computational stability and conditioning of the FIM. The closed-form solution for the preconditioned gradient \(\bar{g}^{(t)}\) is derived from the diagonal approximation of the FIM, given by $\bar{g}^{(t)} = (\tilde{F}_{D}^{(t)})^{-1} g^{(t)}$, for time step $t$. This represents the AdaFisher augmented gradient and incorporates local loss \emph{curvature information}. It focuses on the diagonal elements to reduce computational overhead while maintaining a reasonable FIM approximation. This simplification enhances the efficiency of the optimization process, which is crucial for training DNNs where computational resources are limited.
\begin{table}[hpt]
\centering
\caption{\small Summary of the first, second moments, regret bound and Applicability used in Adam \cite{kingma2017adam}, AdaHessian \cite{yao2021adahessian}, K-FAC \cite{martens2020optimizing}, Shampoo  \cite{gupta2018shampoo}, and AdaFisher for updating model parameters \(\theta^{(t+1)} = \theta^{(t)} - \alpha m^{(t)} / \sqrt{v^{(t)}}\). Here \(\beta_1\) and \(\beta_2\) are first and second moment HPs. $L^{(t)}$ and $R^{(t)}$ refer to the preconditioning method used by Shampoo \cite{gupta2018shampoo}, $g^{(t)} = \text{vec}(G^{(t)})$, and $T$ denotes the total number of steps. Note that Transf. denotes Transformers.}
\label{tab:summaryopt}
\resizebox{\textwidth}{!}{
\begin{tabular}{|l|c|c|c|c|c|}
\hline
\textbf{Optimizer} & \(m^{(t)}\) & \(v^{(t)}\) & \textbf{Regret Bound} & \multicolumn{2}{c|}{\textbf{Applicability}} \\
\cline{5-6}
 & & & & CNNs & Transf. \\
\hline
Adam & \(\frac{(1-\beta_1) \sum_{i=1}^t \beta_1^{t-i} g_i}{1-\beta_1^t}\) & \(\left(\frac{(1-\beta_2) \sum_{i=1}^t \beta_2^{t-i} g_i g_i}{1-\beta_2^t}\right)^{1/2}\)  & $O(\log T\sqrt{T})$ & $\checkmark$ & $\checkmark$ \\
AdaHessian  & \(\frac{(1-\beta_1) \sum_{i=1}^t \beta_1^{t-i} g_i}{1-\beta_1^t}\) & \(\left(\frac{(1-\beta_2) \sum_{i=1}^t \beta_2^{t-i} D_i^{(s)} D_i^{(s)}}{1-\beta_2^t}\right)^{1/2}\) &  $O(\log T\sqrt{T})$ & $\checkmark$ & $\checkmark$ \\
K-FAC  &\((\hat{F}^{(t)})^{-1} g^{(t)}\)&1 & $O(\sqrt{T})$ & $\checkmark$ & $\times$ \\
Shampoo &\((L^{(t)})^{\frac{-1}{4}} G^{(t)} (R^{(t)})^{\frac{-1}{4}}\)&1 & $O(\sqrt{T})$ & $\checkmark$ & $\times$ \\
\hline
\rowcolor{PaleGreen}
AdaFisher    & \(\frac{(1-\beta_1) \sum_{i=1}^t \beta_1^{t-i} g_i}{1-\beta_1^t}\) & \(\tilde{F}_{D}^{(t)}\) & $O(\log T\sqrt{T})$ & $\checkmark$ & $\checkmark$ \\
\hline
\end{tabular}
}
\end{table}
\subsection{Integrating FIM into Adaptive Optimization Framework} \label{sec:augmentingFIMadam}
Following the spirit of the adaptive optimization framework from Adam, which combines momentum and RMSProp principles \citep{pmlr-v28-sutskever13}, the parameters are updated via $\theta^{(t+1)} = \theta^{(t)} - \alpha \frac{m^{(t)}}{v^{(t)}}$. Here, $\alpha$ represents the learning rate, while $m^{(t)}$ and $v^{(t)}$ denote the first and second moment estimates, respectively, for time step $t$. Although Adam is widely used, its approximation of the second moment using simple diagonal elements of second-order statistics through squared gradients can mirror stability challenges \citep{NEURIPS2019_46a558d9}. 
\begin{wrapfigure}{r}{0.35\textwidth} 
  \includegraphics[width=0.35\textwidth]{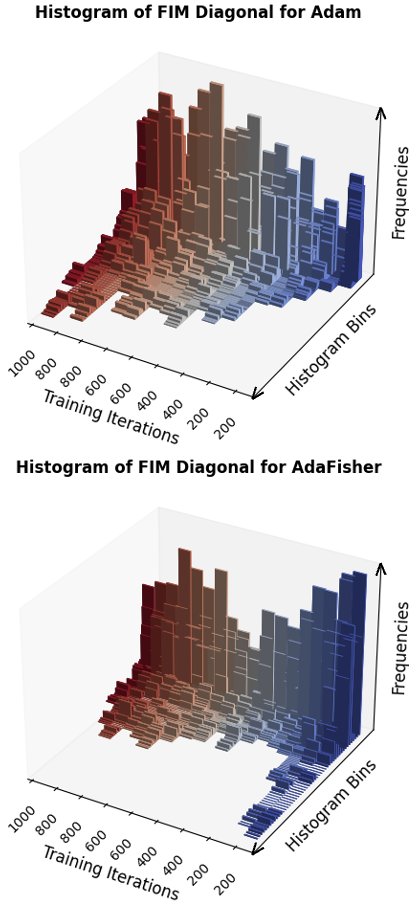} 
  \caption{\small Comparison of FIM diagonal histograms during ResNet18 training on CIFAR10: The figure displays the FIM diagonal elements for the first convolutional layer with Adam and AdaFisher over 1,000 training iterations.}
    \vskip -0.15in
    \label{fig:hist_fisher}
\end{wrapfigure}
We overcome these challenges by utilizing a more refined diagonal block-Kronecker approximation of the FIM introduced in Section \ref{sec:effcomputFIM}, a more precise approximation of the Hessian from Taylor series expansion viewpoint. This structured alternative to Adam’s diagonal approximation enables precise curvature estimation, mitigating stability issues and improving convergence in non-convex settings. AdaFisher distinguishes itself from Adam by incorporating a higher fidelity approximation of the FIM, enhancing both optimization efficiency and model robustness in complex scenarios. As demonstrated in Figure~\ref{fig:hist_fisher}, AdaFisher’s FIM values exhibit narrow variations and lower mean values during training, suggesting a convergence towards \textbf{flatter minima} and an \textbf{implicit regularization} effect. In contrast, Adam shows broader variations, indicating less efficient generalization \citep{NEURIPS2024_d712c862}, for more details regarding the convergence behavior refer to Appendix~\ref{sec:lrschedulersCE}. Moreover, AdaFisher omits the square root and the traditional EMA applied over the second moment since the FIM naturally incorporates an EMA of its KFs (detailed in Eq. (\ref{eq:expkronfactors})). The exclusion of the square root aligns with the theoretical definition of second-order methods, as using a square root deviates from the second-order Taylor expansion approximation that these methods aim to follow. A comparative summary of different moment estimates, $m^{(t)}$ and $v^{(t)}$, along with their regret bounds and applicability across various optimizers, is presented in Table~\ref{tab:summaryopt}. Building on the principles of AdamW \citep{loshchilov2019decoupled}, which modifies Adam by integrating weight decay directly into the weight update step to counteract suboptimal decay behaviors and boost optimization performance, we introduce AdaFisherW. This variant adapts the AdamW framework to further enhance the optimizer by leveraging curvature information from AdaFisher. Finally, AdaFisher is compatible with \textbf{multi-GPU environments}, with a distributed version detailed in Appendix~\ref{sec:distributedAdaFisher}. The implementation for both AdaFisher variants is delineated in the pseudo-code presented in Algorithm~\ref{alg:adaFisher1}. 
\subsection{Convergence Analysis} \label{sec:convergenceanalysis}
In this section, we provide a theoretical analysis of AdaFisher's convergence in both convex optimization and non-convex stochastic optimization. We first present a standard convergence behavior of Eq. (\ref{eq:NGD}) for a simple strongly convex and strictly smooth function $f(J)$.  
\begin{proposition}[Convergence in convex optimization]\label{prop:convex-case}
For FIM defined in Eq. (\ref{eq:FIMDiag}), the updating scheme $\theta^{(t+1)} = \theta^{(t)} - \alpha (\tilde{F}^{(t)})^{-1} \nabla J(\theta^{(t)})$ converges. Further, if $\nabla J$ is Lipschitz, the convergence rate is bounded.
\end{proposition}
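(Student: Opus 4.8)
The plan is to exploit the regularized, Min--Max-normalized diagonal structure of $\tilde{F}_D$ to reduce the analysis to preconditioned gradient descent with a \emph{uniformly well-conditioned} preconditioner, and then apply the standard strong-convexity / smoothness machinery. Throughout, write $P^{(t)} = (\tilde{F}_D^{(t)})^{-1}$ for the preconditioner, let $\theta^*$ denote the minimizer of $J$, and assume (as in the statement) that $J$ is $m$-strongly convex and $M$-smooth, i.e. $\nabla J$ is $M$-Lipschitz.

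First I would pin down uniform spectral bounds on $\tilde{F}_D^{(t)}$. Since each block $\tilde{F}_{D_i}^{(t)} = \mathcal{H}_{D_{i-1}}' \otimes \mathcal{S}_{D_i}' + \lambda \mathbf{I}$ is diagonal, its eigenvalues are exactly its diagonal entries. The Min--Max normalization forces every entry of $\mathcal{H}_{D_{i-1}}'$ and $\mathcal{S}_{D_i}'$ into $[0,1]$, so each diagonal entry of the Kronecker product lies in $[0,1]$, and adding $\lambda\mathbf{I}$ gives per block, hence (the assembly being block-diagonal over layers) globally,
\begin{align*}
\lambda \mathbf{I} \preceq \tilde{F}_D^{(t)} \preceq (1+\lambda)\mathbf{I}, \qquad \tfrac{1}{1+\lambda}\mathbf{I} \preceq P^{(t)} \preceq \tfrac{1}{\lambda}\mathbf{I},
\end{align*}
uniformly in $t$. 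The lower bound $\lambda>0$ guarantees $\tilde{F}_D^{(t)}$ is positive definite and invertible at every step, so the update is always well defined.

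Next I would run the descent-lemma argument. Smoothness yields
\begin{align*}
J(\theta^{(t+1)}) \le J(\theta^{(t)}) - \alpha\, \nabla J(\theta^{(t)})^\top P^{(t)} \nabla J(\theta^{(t)}) + \tfrac{M\alpha^2}{2}\,\|P^{(t)}\nabla J(\theta^{(t)})\|^2,
\end{align*}
and substituting the two spectral bounds turns this into
\begin{align*}
J(\theta^{(t+1)}) \le J(\theta^{(t)}) - \Big(\tfrac{\alpha}{1+\lambda} - \tfrac{M\alpha^2}{2\lambda^2}\Big)\,\|\nabla J(\theta^{(t)})\|^2.
\end{align*}
For any step size with $0 < \alpha < \tfrac{2\lambda^2}{M(1+\lambda)}$ the bracketed constant $c$ is strictly positive, so $J$ is strictly decreasing and, being bounded below by $J(\theta^*)$, the iterates converge. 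To bound the rate I would invoke the Polyak--{\L}ojasiewicz inequality implied by strong convexity, $\|\nabla J(\theta)\|^2 \ge 2m\big(J(\theta)-J(\theta^*)\big)$, which upgrades the decrease to the geometric contraction
\begin{align*}
J(\theta^{(t+1)}) - J(\theta^*) \le (1 - 2mc)\big(J(\theta^{(t)}) - J(\theta^*)\big),
\end{align*}
with $1-2mc \in (0,1)$ for admissible $\alpha$, i.e. a linear rate.

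The main obstacle I anticipate is the uniform-conditioning step rather than the textbook descent analysis: the contraction factor $1-2mc$ requires the spectrum of $P^{(t)}$ to stay bounded away from $0$ and $\infty$ \emph{simultaneously for all $t$}, and this hinges entirely on the Min--Max normalization together with the $\lambda\mathbf{I}$ damping behaving as claimed across iterations. In particular one must check that the EMA of Eq. (\ref{eq:expkronfactors}) does not break these bounds, which it does not, since it is a convex combination and therefore preserves the entrywise $[0,1]$ range after normalization. A final subtlety is that the realized conditioning enters both the admissible step-size window and the rate, so the conclusion is naturally stated as a linear rate \emph{with constants depending on $\lambda$, $m$, $M$} rather than a dimension-free bound, and I would phrase the claim in that qualified form.
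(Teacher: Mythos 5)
Your proof is correct, and it differs from the paper's in two substantive ways that are worth noting. The paper's proof follows the damped-Newton template of Boyd and of the AdaHessian paper: it works with the Newton decrement $\lambda(\theta^{(t)}) = ((g^{(t)})^\top(\tilde F^{(t)})^{-1}g^{(t)})^{1/2}$, fixes the step size at the ratio of the strong-convexity and smoothness constants, and—crucially—simply \emph{assumes} that $\tilde F^{(t)}$ shares the Hessian's spectral bounds $\alpha I \preceq \tilde F^{(t)} \preceq \beta I$, before deferring the $O(1/k)$ rate to a citation (Ryu and Boyd, 2016). You instead derive the preconditioner's conditioning $\lambda I \preceq \tilde F_D^{(t)} \preceq (1+\lambda)I$ directly from the Min--Max normalization and the Tikhonov damping (which is exactly the eigenvalue bound the paper establishes separately in its proof of Proposition 3.2 but never wires into the convergence argument), and you then run a self-contained preconditioned-gradient-descent analysis, upgrading to a linear rate via the Polyak--{\L}ojasiewicz inequality rather than citing an $O(1/k)$ bound. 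What your route buys is that the well-conditioning hypothesis is actually \emph{proved} from the construction of $\tilde F_D$ rather than postulated, the admissible step-size window is explicit in $\lambda$ and $M$, and under the strong convexity both proofs already assume you get the stronger geometric rate; what it costs is that the constants degrade like $\lambda^2$ (since the normalization can push diagonal entries to zero, leaving only the damping as a lower bound), so the contraction factor is close to $1$ for the paper's default $\lambda = 10^{-3}$, whereas the paper's (unjustified) identification of the preconditioner spectrum with the Hessian spectrum yields friendlier-looking constants. One minor point: the normalization in Algorithm 1 is applied after the EMA of the Kronecker factors, so your remark about convex combinations preserving the $[0,1]$ range is not even needed—the entrywise bounds hold by construction at every step regardless.
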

For non-convex cases, we adopt the similar derivations of \cite{chen2018on} since AdaFisher belongs to the family of generalized Adam-type methods. 
\begin{proposition}[Convergence in non-convex stochastic optimization]\label{prop:non-convex} Under the assumptions:\\
(i) $J$ is lower bounded and differentiable; $||\nabla J(\theta) - \nabla J(\theta')||_2\leq L ||\theta - \theta'||_2$, $||\tilde{F}_{D}^{(t)}||_\infty<L,\, \forall t, \theta, \theta'$, (ii) Both the true and stochastic gradient are bounded, i.e. $||\nabla J(\theta^{(t)})||_2 \leq \lambda$ and $||g^{(t)}||_2 \leq \lambda$, $\forall t$ for some $\lambda>0$, (iii) Unbiased and independent noise in $g^{(t)}$, i.e. $g^{(t)} = \nabla J(\theta^{(t)}) + \zeta^{(t)}$, $\mathbb{E}[\zeta^{(t)}] = 0$, and $\zeta^{(i)}\perp\zeta^{(j)}$, $\forall i \neq j$. Assume $\eta^{(t)} = \frac{\eta}{\sqrt{t}}$, $\beta^{(t)}\leq \beta\leq 1$ is non-increasing, $\frac{\tilde{F}_{D}^{(t-1)}[j]}{\eta^{(t-1)}}\leq \frac{\tilde{F}_{D}^{(t)}[j]}{\eta^{(t)}}$, $\forall t\in [T], j\in [d]$, we then have 
\begin{equation}
\min_{t\in [T]}\mathbb{E}[||\nabla J(\theta^{(t)})||_2^2] \leq \frac{L}{\sqrt{T}}(C_1 \eta^2 \lambda^2(1+ \log T) + C_2 d\eta + C_3 d\eta^2 + C_4) \notag 
\end{equation}
where $C_1, C_2, C_3$ are constants independent of $d$ and $T$, $C_4$ is a constant independent of $T$, the expectation is taken with respect to all the randomness corresponding to $\{g^{(t)}\}$.
\end{proposition}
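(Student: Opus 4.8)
The plan is to cast AdaFisher's update into the generalized Adam-type template analyzed by \cite{chen2018on} and then check that hypotheses (i)--(iii) supply exactly the structural conditions that framework requires. Writing the update (with the square root omitted, as in Table~\ref{tab:summaryopt}) as $\theta^{(t+1)} = \theta^{(t)} - \eta^{(t)} (\tilde{F}_D^{(t)})^{-1} m^{(t)}$, the role of the square-rooted second moment in Adam is played here by the diagonal preconditioner $\tilde{F}_D^{(t)}$. Because of the Tikhonov damping term in Eq. (\ref{eq:FIMDiag}), every diagonal entry of $\tilde{F}_D^{(t)}$ is bounded below by a fixed positive constant, while assumption (i) supplies the upper bound $\|\tilde{F}_D^{(t)}\|_\infty < L$; together these confine the eigenvalues of $(\tilde{F}_D^{(t)})^{-1}$ to a compact interval bounded away from $0$ and $\infty$, which is precisely the boundedness of the preconditioner the Adam-type analysis needs. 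The assumed relation $\tilde{F}_D^{(t-1)}[j]/\eta^{(t-1)} \leq \tilde{F}_D^{(t)}[j]/\eta^{(t)}$ is the non-increasing effective-stepsize condition that governs how the geometry is allowed to drift between iterations.

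Next I would invoke the $L$-smoothness of $J$ from assumption (i) to write the descent inequality $J(\theta^{(t+1)}) \leq J(\theta^{(t)}) + \langle \nabla J(\theta^{(t)}), \theta^{(t+1)}-\theta^{(t)}\rangle + \tfrac{L}{2}\|\theta^{(t+1)}-\theta^{(t)}\|_2^2$, substitute the update direction, and take the conditional expectation over the stochastic gradient. Using assumption (iii) ($\mathbb{E}[\zeta^{(t)}]=0$ and cross-step independence), the inner-product term yields a negative multiple of $\|\nabla J(\theta^{(t)})\|_2^2$ weighted by $\eta^{(t)}$ and the inverse preconditioner, which is the quantity driving the decrease; the momentum recursion is unrolled so that $m^{(t)}$ appears as a $\beta$-weighted sum of past gradients, and the bias it introduces relative to $\nabla J(\theta^{(t)})$ is isolated into error terms controlled via assumption (ii).

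I would then sum the per-step inequality over $t=1,\dots,T$ and identify each contribution with a constant in the stated bound. The quadratic term $\tfrac{L}{2}\|\theta^{(t+1)}-\theta^{(t)}\|_2^2$ is bounded by $\|m^{(t)}\|_2 \leq \lambda$ (a convex combination of the bounded gradients in (ii)) together with the eigenvalue bound on $(\tilde{F}_D^{(t)})^{-1}$, producing a term of order $(\eta^{(t)})^2\lambda^2 = \eta^2\lambda^2/t$; summing gives $\sum_{t=1}^T 1/t \leq 1+\log T$, the source of the $C_1\eta^2\lambda^2(1+\log T)$ piece. The telescoping of $J(\theta^{(t)})-J(\theta^{(t+1)})$ against the lower bound on $J$ produces the $T$-independent constant $C_4$, and the cumulative drift of the effective preconditioner supplies the $C_2 d\eta$ and $C_3 d\eta^2$ terms. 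Finally, dividing the telescoped sum by $\sum_{t=1}^T \eta^{(t)} \gtrsim \eta\sqrt{T}$ and lower-bounding the left side by $\min_{t\in[T]} \mathbb{E}[\|\nabla J(\theta^{(t)})\|_2^2]$ yields the $1/\sqrt{T}$ prefactor and the claimed estimate.

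I expect the main obstacle to be the bookkeeping of the preconditioner-drift terms, i.e. controlling the quantities arising because $\tilde{F}_D^{(t)} \neq \tilde{F}_D^{(t-1)}$: the argument must show that the cumulative $\ell_1$ change $\sum_t \|\eta^{(t)}(\tilde{F}_D^{(t)})^{-1} - \eta^{(t-1)}(\tilde{F}_D^{(t-1)})^{-1}\|_1$ telescopes into an $O(d\eta)$ quantity (and its square into $O(d\eta^2)$) rather than growing with $T$. This is exactly where the monotonicity hypothesis on $\tilde{F}_D^{(t)}/\eta^{(t)}$ combines with the uniform eigenvalue bounds to collapse the sum, and coupling this drift control with the momentum-induced bias so that the negative descent term genuinely dominates is the delicate step; the boundedness furnished by the damping regularizer is what makes it go through, mirroring the role of the $\epsilon$-regularizer in the original Adam-type proof.
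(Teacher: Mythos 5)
Your proposal is correct and follows essentially the same route as the paper: both reduce AdaFisher to the generalized Adam-type framework of \cite{chen2018on}, bound the squared-update term by $\eta^2\lambda^2(1+\log T)$ via $\sum_t 1/t$, telescope the preconditioner-drift terms into $O(d\eta)$ and $O(d\eta^2)$ using the monotonicity of $\tilde{F}_D^{(t)}[j]/\eta^{(t)}$, and lower-bound the weighted gradient sum by $\tfrac{\sqrt{T}}{L}\min_t \mathbb{E}[\|\nabla J(\theta^{(t)})\|_2^2]$ using $\|\tilde{F}_D^{(t)}\|_\infty \leq L$. The only difference is presentational: the paper imports the master inequality of Chen et al.\ as a black box and only verifies the residual bounds, whereas you sketch re-deriving that inequality from the descent lemma and momentum unrolling, which is the same argument carried one level deeper.
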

The proofs of propositions~\ref{prop:convex-case} and \ref{prop:non-convex} are given in Appendix~\ref{sec:proof}. Proposition \ref{prop:non-convex} implies the convergence rate for AdaFisher in the non-convex case is at $O(\log T /\sqrt{T})$, which is similar to Adam-type optimizers. While DNNs often include non-smooth components like ReLU and max pooling, which create non-differentiable points in the loss landscape, optimizers like AdaFisher handle these cases effectively, as shown by our results in Section~\ref{sec:results}. 
\begin{algorithm}[!h]
   \caption{\small AdaFisher optimization algorithm. Good default settings for the tested machine learning problems are $\alpha = 0.001$ (learning rate), $\lambda = 0.001$ (Tikhonov damping parameter),$\gamma = 0.8$ (Exponentially decaying factor). [Default parameters are: $\beta = 0.9$ (Exponentially decaying factor of Adam), $\kappa$ (weight decay) (\citet{kingma2017adam}, \citet{loshchilov2019decoupled})].}
   \label{alg:adaFisher1}
   \small
\begin{algorithmic}[1]
    \REQUIRE Step size $\alpha$; Exponential decay rate for KFs $\gamma \in [0,1)$; Tikhonov damping parameter $\lambda$; Exponential decay rate for first moments $\beta$ in $[0,1)$; Initial parameters $\theta$ \\
    \textbf{Initialize} 1st moment variable $m = 0$;  FIM  $\Tilde{F}_{D_{i}} = \mathbf{I}$; time step $t = 0$\\
    \WHILE{stopping criterion not met}
    \STATE Sample a minibatch of $M$ examples from the training set $\{(x_n,y_n)\}_{n=1}^{M}$ \\
    \vspace{1mm}
    \STATE Compute $\mathcal{H}_{D_{i-1}}$, $\mathcal{S}_{D_{i}}$ for $i \in \{1, \dots, L\}$ using Section~\ref{sec:kroneckerfcators} (notice that: $\mathcal{H}_{D_{0}} = x$) \\
    \vspace{1mm}
    \STATE  Compute EMAs of $\mathcal{H}_{D_{i-1}}$ and $\mathcal{S}_{D_{i}}$ using Eq. (\ref{eq:expkronfactors})\\
    \vspace{1mm}
    \STATE Compute $\tilde{F}_{D_{i}}$ for $i \in \{1, \dots, L\}$ using Eq. (\ref{eq:FIMDiag})\\
    \vspace{1mm}
    \STATE $g^{(t)} \leftarrow \frac{1}{M} \sum_{n} \nabla_{\theta^{(t)}}\mathcal{L}(f(x_n; \theta^{(t)}), y_n)$ (Compute gradient) \\
    \vspace{1mm}
    \STATE $m^{(t+1)} \leftarrow \frac{\beta m^{(t)} + (1 - \beta)h^{(t)}}{1 - \beta^{t}}$ (Update and correct biased first moment) \\
    \vspace{1mm}
    \STATE \textbf{Case AdaFisher:} $\Delta \theta^{(t)} = - \alpha (\tilde{F}_{D}^{(t)})^{-1}m^{(t)}$\\ \textbf{Case AdaFisherW:} $\Delta \theta^{(t)} = - \alpha \left((\tilde{F}_{D}^{(t)})^{-1}m^{(t)} + \kappa \theta^{(t)}\right)$ \\
    \vspace{1mm}
    \STATE  $\theta^{(t+1)} \leftarrow \theta^{(t)} + \Delta \theta^{(t)}$ (Apply update)\\
    \vspace{1mm}
    \STATE $t \leftarrow t + 1$ \\
    \ENDWHILE
\end{algorithmic}
\end{algorithm}
\section{Results}\label{sec:results}
To evaluate AdaFisher, we conduct experiments on six benchmark datasets across Image Classification for Computer Vision (CV) and Language Modeling for Natural Language Processing (NLP) that are commonly used to evaluate optimization algorithms: CIFAR-10, CIFAR100 \citep{krizhevsky2009learning}, Tiny ImageNet \citep{le2015tiny}, and ImageNet-1k \citep{5206848} for image classification; Wikitext-2 \citep{merity2017pointer} and Penn Treebank (PTB) \citep{marcus-etal-1993-building} for language modeling. The six baseline methods we compare with are SGD, Adam/AdamW, K-FAC, AdaHessian, and Shampoo. For CIFAR experiments, we report the average over five runs. We also perform a transfer learning task using the ImageNet-1k weights from \cite{paszke2019pytorch}. Detailed descriptions of the experimental setup (including HP tuning, datasets, and data augmentation), results, and analyses are provided in Appendix~\ref{sec:experimentsappendix}.
\begin{table*}[ht]
    \caption{\small Performance metrics (mean, std) of different networks and optimizers on CIFAR10 and CIFAR100 using batch size 256 with a 200-epoch AdaFisher training cutoff.}
    \label{table_cutout}
    \noindent
    \setlength{\tabcolsep}{0.0001pt}
    \scriptsize
    \centering    
    \begin{tabular}{c|c|c|c|c|c|c||c|c|c|c|c|c}
    &\multicolumn{6}{c||}{CIFAR10}&\multicolumn{6}{c}{CIFAR100}\\ \cline{2-13} Network&SGD&Adam&AdaHessian&K-FAC&Shampoo&AdaFisher&SGD&Adam&AdaHessian&K-FAC&Shampoo&AdaFisher\\
    \midrule ResNet18&$95.64_{0.1}$&$94.85_{ 0.1}$&$95.44_{0.1}$&$95.17_{0.2}$&$94.08_{0.2}$&$\mathbf{96.25_{ 0.2}}$&$76.56_{0.2}$ &$75.74_{0.1}$&$71.79_{0.2}$&$76.03_{0.3}$&$76.78_{0.2}$ & $\mathbf{77.28_{0.2}}$\\ ResNet50 &$95.71_{0.1}$&$94.45_{0.2}$&$95.54_{0.1}$&$95.66_{0.1}$&$94.59_{0.1}$&$\mathbf{96.34_{0.2}}$&$78.01_{0.1}$&$74.65_{0.5}$&$75.81_{0.3}$&$77.40_{0.4}$&$78.07_{0.4}$ & $\mathbf{79.77_{0.4}}$\\ ResNet101&$95.98_{0.2}$&$94.57_{0.1}$&$95.29_{0.6}$&$96.01_{0.1}$&$94.63_{0.1}$&$\mathbf{96.39_{0.1}}$&$78.89_{0.2}$&$75.56_{0.3}$&$73.38_{0.2}$&$77.01_{0.4}$&$78.83_{0.2}$ & $\mathbf{80.65_{0.4}}$\\ DenseNet121&$96.09_{0.1}$&$94.86_{0.1}$&$96.11_{0.1}$&$96.12_{0.1}$&$95.66_{0.1}$&$\mathbf{96.72_{0.1}}$&$80.13_{0.4}$&$75.87_{0.4}$&$74.80_{0.9}$&$79.79_{0.2}$&$80.24_{0.3}$ & $\mathbf{81.36_{0.3}}$\\
    MobileNetV3&$94.43_{0.2}$&$93.32_{0.1}$&$92.86_{3.1}$&$94.34_{0.1}$&$93.81_{0.2}$&$\mathbf{95.28_{0.1}}$&$73.89_{0.3}$&$70.62_{0.3}$&$56.58_{4.5}$&$73.75_{0.3}$&$70.85_{0.3}$ & $\mathbf{77.56_{0.1}}$\\
    \hline
    Tiny Swin&$82.34_{0.2}$&$87.37_{0.6}$&$84.15_{0.2}$&$64.79_{0.5}$&$63.91_{0.4}$&$\mathbf{88.74_{0.4}}$&$54.89_{0.4}$ &$60.21_{0.4}$&$56.86_{0.5}$&$34.45_{0.4}$&$30.39_{1.2}$ & $\mathbf{66.05_{0.5}}$\\
    FocalNet&$82.03_{0.2}$&$86.23_{0.1}$&$64.18_{0.2}$&$38.94_{0.8}$&$37.96_{0.7}$&$\mathbf{87.90}_{0.1}$&$47.76_{03}$&$52.71_{0.5}$&$32.33_{0.3}$&$9.98_{0.6}$&$9.18_{0.1}$ & $\mathbf{53.69_{0.3}}$\\
    CCT-2/3$\times$2&$78.76_{0.3}$&$83.89_{0.4}$&$-$&$33.08_{2.3}$&$35.16_{0.4}$&$\mathbf{84.94}_{0.3}$&$54.05_{0.4}$&$59.78_{0.5}$&$-$&$7.17_{0.2}$&$8.60_{0.1}$ & $\mathbf{62.91_{0.5}}$
    \end{tabular}\\
    $^*$Note that Adam and AdaFisher were used for all CNN architectures, while AdamW and AdaFisherW were applied for all ViT experiments.
\end{table*}
\subsection{Image Classification}
We commence our analysis by assessing the convergence and generalization capabilities of various models on image classification tasks. Specifically, we deploy ResNet architectures (ResNetX where $X \in \{18, 50, 101\}$), DenseNet121~\citep{huang2017densely}, MobileNetV3 \citep{howard2019searching}, Tiny Swin~\citep{liu2021swin}, FocalNet~\citep{yang2022focal} and CCT-2/3$\times$2~\citep{hassani2021escaping} on CIFAR10 and CIFAR100, while utilizing standard ResNet50 for Tiny ImageNet and ImageNet-1k. The performance outcomes for CIFAR datasets are detailed in Table~\ref{table_cutout}. Our empirical evaluation of  AdaFisher optimizer across these models and datasets illustrates its efficiency in optimizing image classification, surpassing all SOTA optimizers. We employ the Wall-Clock-Time (WCT) method with a cutoff of 200 epochs for AdaFisher's training, except for ImageNet-1k, where we use a 90-epoch WCT for Adam, which surprisingly matched AdaFisher's training duration. Results confirm AdaFisher's superior classification accuracy on both CNNs and ViTs. Note that the results for Tiny ImageNet are described in Appendix~\ref{sec:resultsappendix}.

\begin{wrapfigure}{r}{0.45\textwidth} 
\vskip -0.15in
\captionof{table}{\small Validation of ImageNet-1k / ResNet50 by different optimizers reported on Top-1 and Top-5 accuracy.}
        \label{tab:imagenetresults}
        \scriptsize
        \centering
        \setlength{\tabcolsep}{4pt}
        \begin{tabular}{c|c|c|c}
        \toprule
        Optimizers & Batch size & Top-1 & Top-5 \\
        \midrule
        \rowcolor{LightCyan}
        Adam   &  $256$ & $67.78$ & $88.37$ \\
        \hline
        \rowcolor{LightCyan}
        K-FAC  & $256$ & $70.96$ & $89.44$  \\
        \hline
        \rowcolor{LightCyan}
        Shampoo & $256$ & $72.82$ & $91.42$ \\
        \hline
        \rowcolor{LightCyan}
        AdaFisher & $256$ & $\mathbf{76.95}$& $\mathbf{93.39}$\\
        \midrule
        \rowcolor{LightGreen}
        AdaFisher & $512$ & $\mathbf{77.01}$& $\mathbf{93.45}$\\
        \hline
        \rowcolor{LightGreen}
        AdaFisher & $1024$ & $\mathbf{77.09}$& $\mathbf{93.56}$\\
        \midrule
        \rowcolor{Orangelight}
        SGD \cite{goyal2017accurate}& $256$ & $76.40$ & -  \\
        \hline
        \rowcolor{Orangelight}
        AdamW \cite{chen2024symbolic} &$1024$&$76.34$ & -\\
        \hline
        \rowcolor{Orangelight}
        LAMB \cite{You2020Large}& $16K$ & $76.66$ & $93.22$ \\
        \hline
        \rowcolor{Orangelight}
        SGD \cite{You2020Large}& $16K$ & $75.20$ & -  \\
        \hline
        \rowcolor{Orangelight}
        LARS \cite{huo2021large}& $16K$ & $75.1$ & - \\
        \bottomrule
        \end{tabular}
        \vspace{2mm}
        \includegraphics[width=0.45\textwidth]{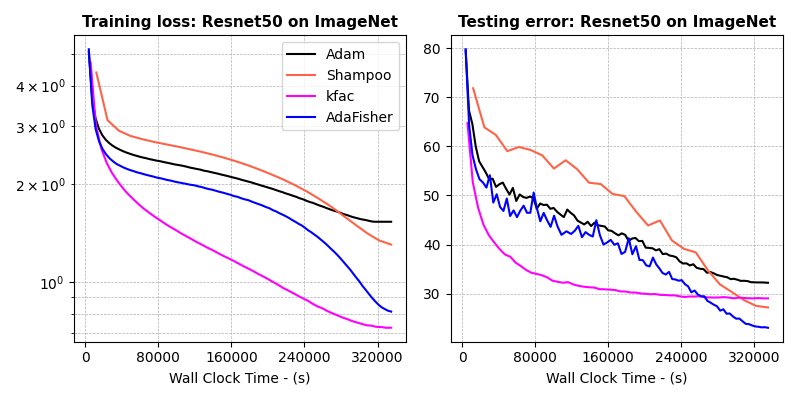}
        \captionof{figure}{\small Training loss and validation error of ResNet-50 on ImageNet-1k. AdaFisher consistently achieves lower test error as compared to its counterparts.}
        \label{fig:result_ImageNet_TinyImageNet}
\end{wrapfigure}
\textbf{ImageNet-1k Training.} Training on ImageNet-1k typically requires multiple GPUs and large batch sizes. Our study showcases that AdaFisher achieves superior validation accuracy on a single GPU than its counterparts in scenarios marked by the light blue region. This performance outstrips traditional approaches like SGD, LAMB \citep{You2020Large}, and LARS \citep{you2017large}, which typically utilize batch sizes of 16K. While AdaFisher attains SOTA results on a single GPU, it further excels when scaled up in a distributed setting with larger batch sizes. The results, benchmarked using 256 batch size and a WCT of 90 Adam training epochs, are detailed in Table~\ref{tab:imagenetresults} and illustrated in Figure~\ref{fig:result_ImageNet_TinyImageNet}. Distributed AdaFisher curves are illustrated in Figure~\ref{fig:results_imageNet_dist}. The light blue highlights in the table represent our experiments with a batch size of 256 on a single GPU. The light green indicates results from a distributed version of AdaFisher employing larger batch sizes, whereas the orange reflects results from SOTA methods using a higher batch size of 16K, SGD with a batch size of 256 and  AdamW with a batch size of 1024. It is important to note, however, that the training setups and augmentation techniques for the results highlighted in orange, taken from the literature, may differ from those in our study. These results are included to provide a broader context and intuition regarding AdaFisher’s performance compared to other experiments. Overall, by balancing curvature-aware updates with parameter efficiency, AdaFisher maintains strong generalization even under constrained computational budgets, a critical advantage over methods like K-FAC that struggle with over-parameterized models.
\begin{table*}[!ht]
    \centering
    \caption{\small Performance comparison of different networks and optimizers on CIFAR10 and CIFAR100 using ImageNet-1k pretrained weights. Evaluation is based on wall clock time of 50 training epochs with AdaFisher.}
    \scriptsize
    \setlength{\tabcolsep}{0.001pt}
    \begin{tabular}{c|c|c|c|c|c|c||c|c|c|c|c|c}
    &\multicolumn{6}{c||}{CIFAR10}&\multicolumn{6}{c}{CIFAR100}\\ \cline{2-13} Network&SGD&Adam&AdaHessian&K-FAC&Shampoo&AdaFisher&SGD&Adam&AdaHessian&K-FAC&Shampoo&AdaFisher\\
    \midrule
    ResNet50 &$96.50_{0.2}$& $96.45_{0.2}$ & $96.35_{0.3}$ & $96.45_{0.1}$ & $96.03_{0.4}$ & $\mathbf{97.13_{0.2}}$ &$82.12_{0.1}$& $82.01_{0.4}$ & $80.64_{0.9}$ & $80.55_{0.4}$ & $81.70_{0.2}$ & $\mathbf{82.23_{0.2}}$ \\
    ResNet101 &$97.07_{0.2}$& $96.70_{0.1}$ & $96.65_{0.2}$ & $96.84_{0.1}$ & $96.63_{0.1}$ & $\mathbf{97.22_{0.1}}$ &$84.01_{0.1}$& $82.43_{0.2}$ & $81.36_{0.8}$ & $82.26_{0.3}$ & $82.65_{0.2}$ & $\mathbf{84.47_{0.2}}$ \\
    DenseNet121 & $94.80_{0.1}$&$94.77_{0.1}$ & $93.08_{0.1}$ & $94.41_{0.2}$ & $94.76_{0.1}$ & $\mathbf{95.03_{0.1}}$  & $75.98_{0.2}$& $75.65_{0.3}$ & $71.06_{0.9}$ & $76.10_{0.3}$ & $76.08_{0.2}$ & $\mathbf{76.92_{0.3}}$ \\
    MobileNetV3 & $91.76_{0.3}$& $90.92_{0.3}$ & $86.45_{2.5}$ & $91.72_{0.2}$ & $91.39_{0.3}$ & $\mathbf{92.78_{0.2}}$  & $71.86_{0.4}$& $66.11_{0.8}$ & $59.69_{2.3}$ & $69.85_{0.4}$ & $68.87_{0.3}$ & $\mathbf{72.38_{0.4}}$ 
    \label{pretrained_cifar}
    \end{tabular}
\end{table*}
\subsection{Transfer Learning}
Following a more sustainable practice of training DNNs, we employ pretrained models from ImageNet-1k in PyTorch on datasets like CIFAR10 and CIFAR100 to showcase AdaFisher's generalization capability for transfer learning. We applied these pretrained weights across various CNN architectures to train on these datasets. The results, presented in Table~\ref{pretrained_cifar}, highlight the significant advantages of using AdaFisher, consistently achieving top accuracy across both datasets. More details can be found in Appendix~\ref{sec:transferlearningappendix}.
\subsection{Language Model}
\begin{wraptable}[12]{r}{0.3\textwidth}
\vskip -0.10in
  \caption{\small Language Modeling performance (PPL) on Wikitext-2 and PTB test dataset (lower is better).}
  \label{tab:wikitext2}
  \centering
  \scriptsize
    \setlength{\tabcolsep}{1pt}
      \begin{tabular*}{\linewidth}{@{\extracolsep{\fill}}lcc}
    \toprule
    Optimizer & \multicolumn{2}{c}{Test PPL} \\
    \cmidrule{2-3}
    & WikiText-2 & PTB \\
    \midrule
    AdamW     & $175.06$ & $44.70$ \\  
    AdaHessian & $407.69$ & $59.43$ \\ 
    Shampoo   & $1727.75$ & $-$ \\ 
    \midrule
    AdaFisherW  & $\mathbf{152.72}$ & $\mathbf{41.15}$ \\ 
    \bottomrule
\end{tabular*}
\end{wraptable}
We employ the WikiText-2 dataset, which encompasses approximately 100 million tokens derived from over 2 million words extracted from a curated set of ``Good'' and ``Featured'' articles on Wikipedia. Additionally, we utilize the PTB dataset, renowned for its extensive collection of English words with part-of-speech tags, which has been widely used in NLP tasks for training and benchmarking language models. Our experiments utilize a scaled-down version of GPT-1 \citep{radford2019language}, featuring four self-attention layers with masking capabilities with more than 28 million learnable parameters. More details about tuning HPs and models can be found in Appendix~\ref{sec:languagemodelling}. The perplexity (PPL) on the test set, corresponding to the best-performing model during validation, is documented in Table~\ref{tab:wikitext2}. Similar to approaches in image classification, we apply the WCT method with 50 epochs training time of AdaFisher as the cutoff period. Notice that Shampoo did not achieve convergence despite using optimal HPs, and the K-FAC was unable to train with ASDL library \citep{osawa2023asdl}.
\begin{figure}[!h]
    \centering
    \includegraphics[width=\textwidth]{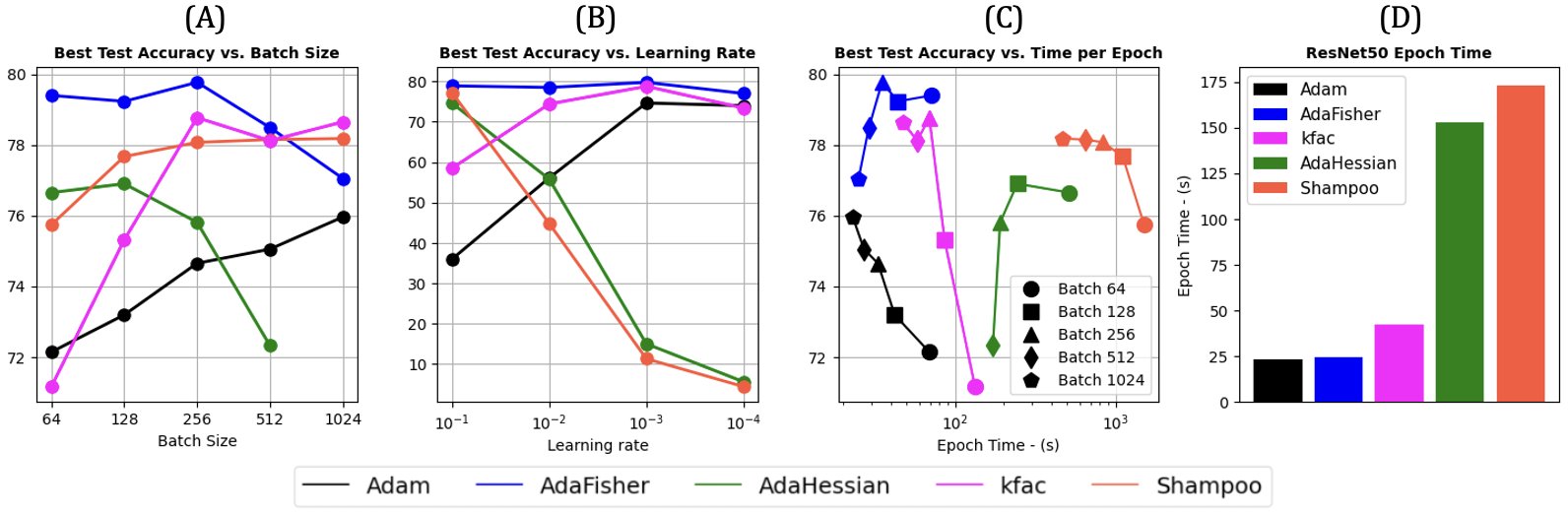}
    \caption{\small Performance comparison of AdaFisher and other optimizers using the ResNet50 network on the CIFAR100 dataset. (A) Test accuracy by batch size. (B) Accuracy vs. learning rates. (C) Accuracy related to epoch time across batch sizes. (D) Epoch time for different optimizers with a batch size of 256.}
    \label{fig:stability_cifar100}
\end{figure}
\subsection{Stability Analysis} \label{sec:stabilityanalysis}
In this section, we assess AdaFisher's stability under varying learning rates and batch sizes using ResNet50 on CIFAR100 and compare its performance to other optimizers. Improved stability indicates a reduced need for HP tuning while maintaining high performance. To ensure a fair comparison, all methods were evaluated using a consistent experimental setup, with parameters tailored to each optimizer's strengths. However, we exclude AdaHessian results for a batch size of 1024 due to its significant computation cost.

\textbf{Batch Size Analysis.} We examine the impact of batch size on AdaFisher’s performance, as shown in Panels (A) and (C) of Figure~\ref{fig:stability_cifar100}. AdaFisher maintains high test accuracy across various batch sizes, excelling particularly at smaller sizes despite some sensitivity to larger ones. Panel (C) highlights AdaFisher’s efficiency, achieving high accuracy with shorter epoch times compared to Adam, detailed further in Panel (D), where AdaFisher shows competitive epoch durations against other optimizers. These results, discussed in Appendix~\ref{sec:complexitycost}, underscore AdaFisher’s effective performance across batch size variations without adjusting other HPs.

\textbf{Learning Rate Stability.} This analysis evaluates the impact of learning rate variations on AdaFisher's performance, as depicted in Panel (B) of Figure~\ref{fig:stability_cifar100}. AdaFisher demonstrates superior stability, particularly at lower learning rates, maintaining consistent performance across a broad spectrum. This stability alleviates the need for meticulous learning rate adjustments, thereby streamlining model training in various computational environments. Additionally, AdaFisher's stability across various learning rates can be attributed to its effective approximation of the curvature matrix.

\textbf{Ablation Studies.} We further conduct extensive ablation studies on additional components of AdaFisher, including the convergence efficiency, our novel approximation of the FIM, the significance of EMA for Kronecker factors, the impact of the square root, the stability across learning rate schedulers and the updated computation of the FIM for normalization layers. These analyses are thoroughly detailed in Appendix~\ref{sec:appendixablationstudies}.
\section{Related work}
\label{sec:relatedwork}
\textbf{Tractable Approximation of the FIM.}
Efficient approximations of the FIM for neural network optimization have evolved significantly, beginning with block-diagonal strategies exemplified by TONGA~\citep{roux2007topmoumoute} and extending to the Kronecker-factored approaches like K-FAC. Further innovations have emerged, such as SK-FAC~\citep{tang2021skfac}, EVA~\citep{zhang2023eva}, which accelerates the computation of the FIM, and \citet{eschenhagen2024kronecker}, who propose a generalized framework for FIM computation that enhances preconditioning methods like Shampoo. More recently, \citet{liusophia}, \citet{huang2024scalable} and \citet{duvvuricombining} have introduced tractable solutions for computing the FIM. AdaFisher distinguishes itself by integrating enhanced FIM computations with novel diagonal Kronecker factors, enriching the Adam optimization framework. This integration, outlined in Proposition~\ref{prop:proposition1} and detailed in Appendix~\ref{sec:kroneckerfcators}, advances the fusion of second-order optimization principles with first-order methods. This builds upon innovations like AdaHessian, which incorporates Hessian diagonals into the Adam framework.

\textbf{Adaptive First-Order Methods.}
Building upon the diagonal approximation heritage of the FIM, AdaFisher extends traditional diagonally-scaled first-order methods such as AdaGrad~\citep{JMLR:v12:duchi11a}, AdamP, AdaInject, AdaBelief, and Adam. These methods have inspired advancements like AMSGrad~\citep{reddi2019convergence}, AdaBound~\citep{luo2019adaptive}, RAdam~\citep{liu2020variance}, and enhanced AdaBelief, FOOF~\citep{benzing2022gradient}, improving both theoretical rigor and practical effectiveness. A recent study by \citet{jiang2024does} illustrates that first-order adaptive methods can bias training trajectories and effectively navigate through saddle points. In response, \citet{mishchenko2023noise} propose an empirical solution involving the addition of noise to mitigate these biases. \citet{leplat2022nag} introduces a novel method accelerating convergence via Gauss-Seidel type discretization. AdaFisher differentiates itself by eliminating the conventional square root in the second moment calculation, with benefits underscored by \citet{lin2024can} and \citet{malladi2022sdes} in CNN architectures, and \citet{zhang2024transformers} demonstrates the critical role of Adam family optimizers in Transformer models. Its unique preconditioning, based on the Fisher Information, is elaborated in Algorithm~\ref{alg:adaFisher1}.
\section{Conclusion, Limitations and Future Research}
In this work, we introduced AdaFisher--an adaptive optimizer that utilizes the Fisher Information Matrix (FIM) with a new diagonal block-Kronecker approximation to enhance gradient rescaling and improve descent directions. Integrated within the adaptive optimization framework, AdaFisher not only accelerates training but also minimizes the need for hyper-parameter tuning, thereby achieving higher accuracy and stability in tasks like image classification and language modeling. Empirical and theoretical analyses confirm its superiority over existing optimizers, and its ease of implementation, along with modest space and time requirements, allows it to adapt across various tasks. Notably, AdaFisher outperforms SOTA optimizers on ImageNet-1k when trained using both single and multi-GPU configurations. AdaFisher is optimized for statistical learning tasks where the deep neural network outputs parameterize distributions within the exponential family, and the loss function is the negative log-likelihood (or its derivatives). Outside these conditions, where the FIM no longer coincides with the Generalized Gauss-Newton matrix, the method may not capture the true curvature information of the loss landscape.

Looking ahead, extensive testing across a wider range of models and domains, including generative modeling and graph neural networks, will further validate AdaFisher's effectiveness. Additionally, implementing CUDA kernels for efficient computation of Kronecker factors could significantly enhance AdaFisher’s scalability and performance.

\section*{Acknowledgment}
This research is funded by Fonds de recherche du Québec (FRQNT) Masters (B1X) Scholarship [D.M.G]; Natural Sciences \& Engineering Research Council (NSERC)-Discovery Grant RGPIN‐2022‐05378 [M.S.H.]; FRQNT-NSERC grant 2023-NOVA-329125 [E.B.\& G.W.].

\section*{Impact Statement}
AdaFisher represents a significant advancement in training efficiency, achieving superior accuracy on the ImageNet-1K dataset using only a single GPU. This optimization is particularly beneficial for academia and students who may not have access to extensive computational resources. By enabling effective training with fewer GPUs, AdaFisher offers an accessible yet powerful solution, reducing hardware costs and making advanced machine learning more attainable for those with limited resources. This capability underscores AdaFisher’s potential as a valuable tool in democratizing machine learning technology.


\bibliography{iclr2025_conference}
\bibliographystyle{iclr2025_conference}
\newpage
\appendix
\setcounter{tocdepth}{0} 
\setcounter{secnumdepth}{3} 

\begin{center}
    \newcommand{\HRule}{\rule{\linewidth}{0.5mm}}
    \HRule \\[0.4cm]

    {\Large \textbf{\uppercase{Appendix}}} \\[0.2cm]
    
    \HRule \\[1.5cm]
    
    \addtocontents{toc}{\protect\setcounter{tocdepth}{5}} 
    \tableofcontents 
\end{center}

\clearpage
\renewcommand{\thesection}{\Alph{section}}
\appendix

\section{Theory} \label{sec:Appendix}
\subsection{KFs: A Structural Examination (Continue)} \label{sec:kroneckercontinueappendix}
In the realm of matrix theory, the Gershgorin circle theorem offers a principle for localizing the eigenvalues of a complex square matrix, asserting that each eigenvalue is situated within at least one  Gershgorin disk. These disks are defined by the matrix's diagonal elements and the sum of the absolute values of the respective off-diagonal row entries. Formally, the theorem is stated as follows:
\begin{theorem}[ Gershgorin Circle Theorem] \label{theo:gersgorin}
Let $\mathcal{A}$ be a complex square matrix with eigenvalues $\lambda$. For each $\lambda$, there exists an index $i$ such that
\begin{equation*}
|\lambda - \mathcal{A}_{ii}| \leq \sum_{\substack{j=1 \\ j \neq i}}^{n} |\mathcal{A}_{ij}|,
\end{equation*}
where the summation excludes the diagonal entry $\mathcal{A}_{ii}$.
\end{theorem}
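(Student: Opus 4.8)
The plan is to give the standard eigenvector-based argument, whose single clever ingredient is the choice of index $i$. I would begin by fixing an arbitrary eigenvalue $\lambda$ of $\mathcal{A}$, together with an associated nonzero eigenvector $v = (v_1, \dots, v_n)^\top$, so that $\mathcal{A}v = \lambda v$. The crucial move is to select the index $i$ at which the eigenvector attains its largest modulus, i.e. $|v_i| \geq |v_j|$ for all $j$; since $v \neq 0$ this guarantees $|v_i| > 0$, which is exactly what lets us divide at the end.

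Next I would read off the $i$-th scalar equation from $\mathcal{A}v = \lambda v$, namely $\sum_{j=1}^{n} \mathcal{A}_{ij} v_j = \lambda v_i$, and peel off the diagonal contribution to rewrite it as
\begin{equation*}
(\lambda - \mathcal{A}_{ii}) v_i = \sum_{\substack{j=1 \\ j \neq i}}^{n} \mathcal{A}_{ij} v_j.
\end{equation*}
Taking moduli and applying the triangle inequality to the right-hand sum gives $|\lambda - \mathcal{A}_{ii}|\,|v_i| \leq \sum_{j \neq i} |\mathcal{A}_{ij}|\,|v_j|$.

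Finally, I would invoke the maximality of $|v_i|$ to bound each $|v_j| \leq |v_i|$, factor $|v_i|$ out of the sum, and divide through by the strictly positive quantity $|v_i|$ to obtain $|\lambda - \mathcal{A}_{ii}| \leq \sum_{j \neq i} |\mathcal{A}_{ij}|$, which is precisely the claimed inequality. Since the index $i$ was produced directly from $\lambda$, the statement ``there exists an index $i$'' is satisfied, completing the argument.

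There is no genuine obstacle here: the result is elementary and the proof is short. The only point requiring care is the justification that one \emph{may} choose the dominant-coordinate index and that it is nonzero, so that the division by $|v_i|$ at the last step is legitimate; everything else is the triangle inequality together with the coordinatewise bound $|v_j| \le |v_i|$.
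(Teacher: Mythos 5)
Your proof is correct and complete: the choice of the maximal-modulus coordinate, the triangle inequality, and the division by $|v_i| > 0$ are exactly the standard argument. The paper itself does not prove this theorem but defers to Horn and Johnson, whose proof is precisely the one you give, so there is nothing to reconcile.
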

For a detailed proof of Theorem~\ref{theo:gersgorin}, the reader is referred to the seminal work by Horn and Johnson~\cite{Horn_Johnson_2012}.
Extending the application of the Gershgorin circle theorem to the study of KFs within deep neural networks, we analyze these factors from both convolutional ($37$th) and linear ($41$st) layers of a ResNet-18 network through different training phases on CIFAR10 dataset. As elucidated in Section~\ref{sec:kroneckerdetail}, leveraging Theorem~\ref{theo:gersgorin} demonstrates that the eigenvalues of the KFs from the convolutional layer are predominantly concentrated along the diagonal. This observation is analogously applicable to the linear layer. Figure~\ref{fig:gershgorin_and_perturbation} showcases the  Gershgorin disks for the 41st (linear) layer, with the eigenvalues (red crosses) significantly clustered within these disks (centered at the black circles), underscoring a pronounced diagonal dominance. Moreover, upon introducing Gaussian noise to the off-diagonal elements following this scheme: $\hat{\mathcal{M}} = \mathcal{A} + \mathcal{E}, \quad \text{where } \mathcal{E} = [e_{ij}] \text{ and } e_{ij} \sim \mathcal{N}(0, \sigma^2) \text{ for } i \neq j$, the perturbation analysis elucidates that such stochastic variations engender only marginal displacements in the eigenvalues. Notably, those eigenvalues fulfilling the Kaiser criterion are minimally affected, substantiating the resilience of the diagonal dominance against noise-induced perturbations.
\begin{figure}[!h]
\centering
\includegraphics[width=\textwidth]{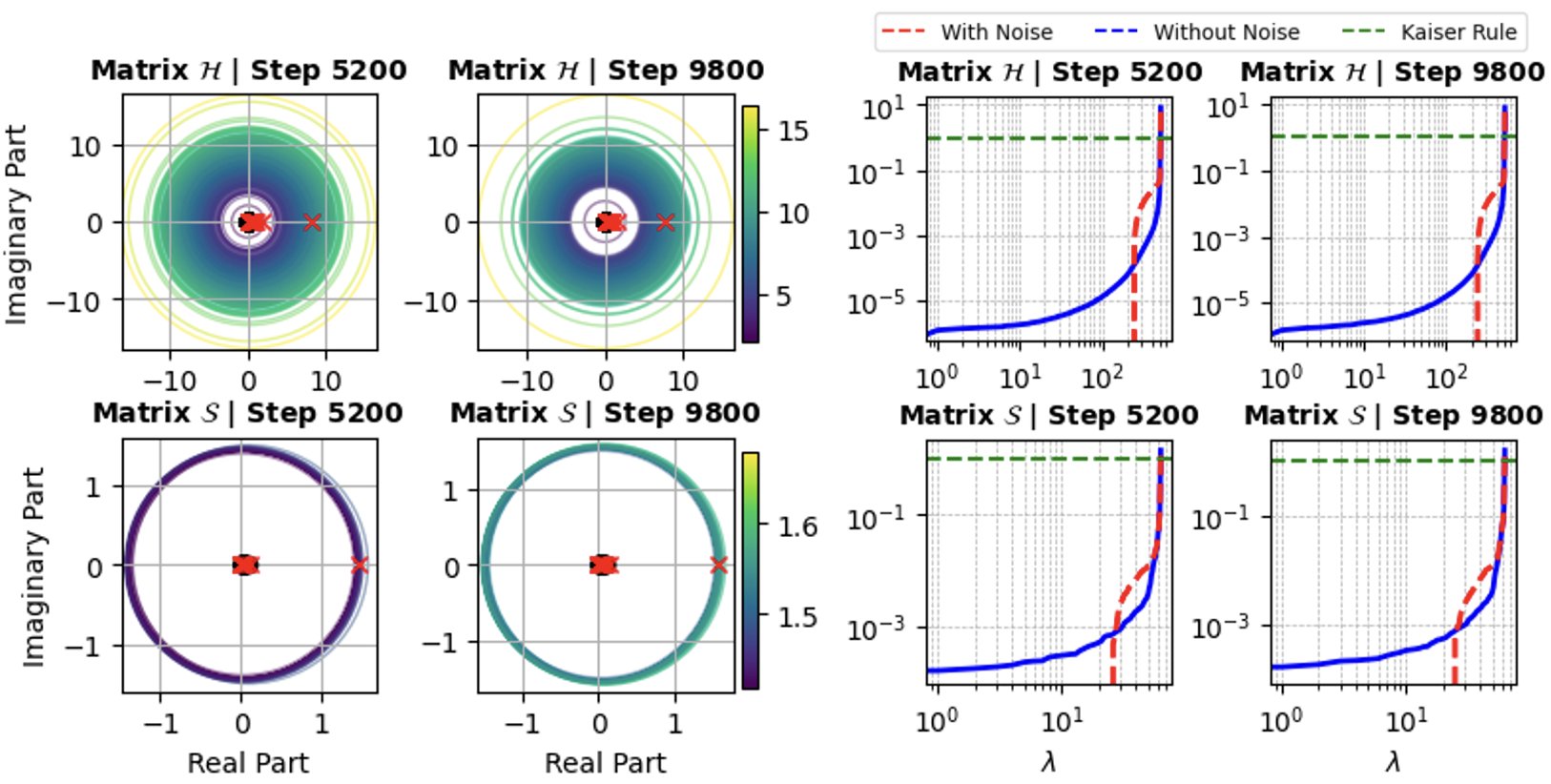}
\caption{\small  Gershgorin disks and eigenvalue perturbation analysis for matrices \(\mathcal{H}\) and \(\mathcal{S}\) at training steps 5200 (middle of training) and 9800 (end of training) in a ResNet-18 Network's Linear Layer ($41$st Layer). The left panel depicts Gershgorin's circles in the complex plane, while the right panel illustrates the magnitude spectrum of eigenvalues with and without the influence of Gaussian noise.}
\label{fig:gershgorin_and_perturbation_lin}
\end{figure}
Our next analysis focus centers on elucidating the behaviors of matrices through consecutive steps in the frequency domain, thereby highlighting the intricate patterns and transformations emergent from the training process. By deploying a Fast Fourier Transform (FFT) on \(\mathcal{H}\) and \(\mathcal{S}\), along with their noise-infused variants \(\hat{\mathcal{H}}\) and \(\hat{\mathcal{S}}\), we aim to dissect the spectral nuances of these factors. The deliberate addition of noise to the off-diagonal serves as a probe to validate our hypothesis that the pivotal information of the KFs is predominantly concentrated along their diagonals. The minimal impact of such noise perturbations observed empirically underscores this diagonal dominance. Our analysis aims to juxtapose the frequency domain representations of both the uncontaminated and the noise-affected matrices at assorted iterative phases, thereby illuminating the inherent stability and tenacity of the Kronecker structures amidst stochastic disturbances.\\
Let \( A \) be a two-dimensional \( m \times n \) matrix. The FFT of \( A \), denoted as \( \mathcal{F}(A) \), is computed as
\begin{equation}
\mathcal{F}(A)_{kl} = \sum_{p=0}^{m-1} \sum_{q=0}^{n-1} A_{pq} \cdot e^{-2\pi i \left(\frac{pk}{m} + \frac{ql}{n}\right)}, \label{eq:fft}
\end{equation}
where \( \mathcal{F}(A)_{kl} \) is the value of the FFT at the \( k \)th row and \( l \)th column of the transformed matrix, \( A_{pq} \) is the value of the original matrix at the \( p \)th row and \( q \)th column, and \( i \) is the imaginary unit~\citep{oppenheim99}.
\begin{figure}[h]
\centering
\includegraphics[width=\textwidth]{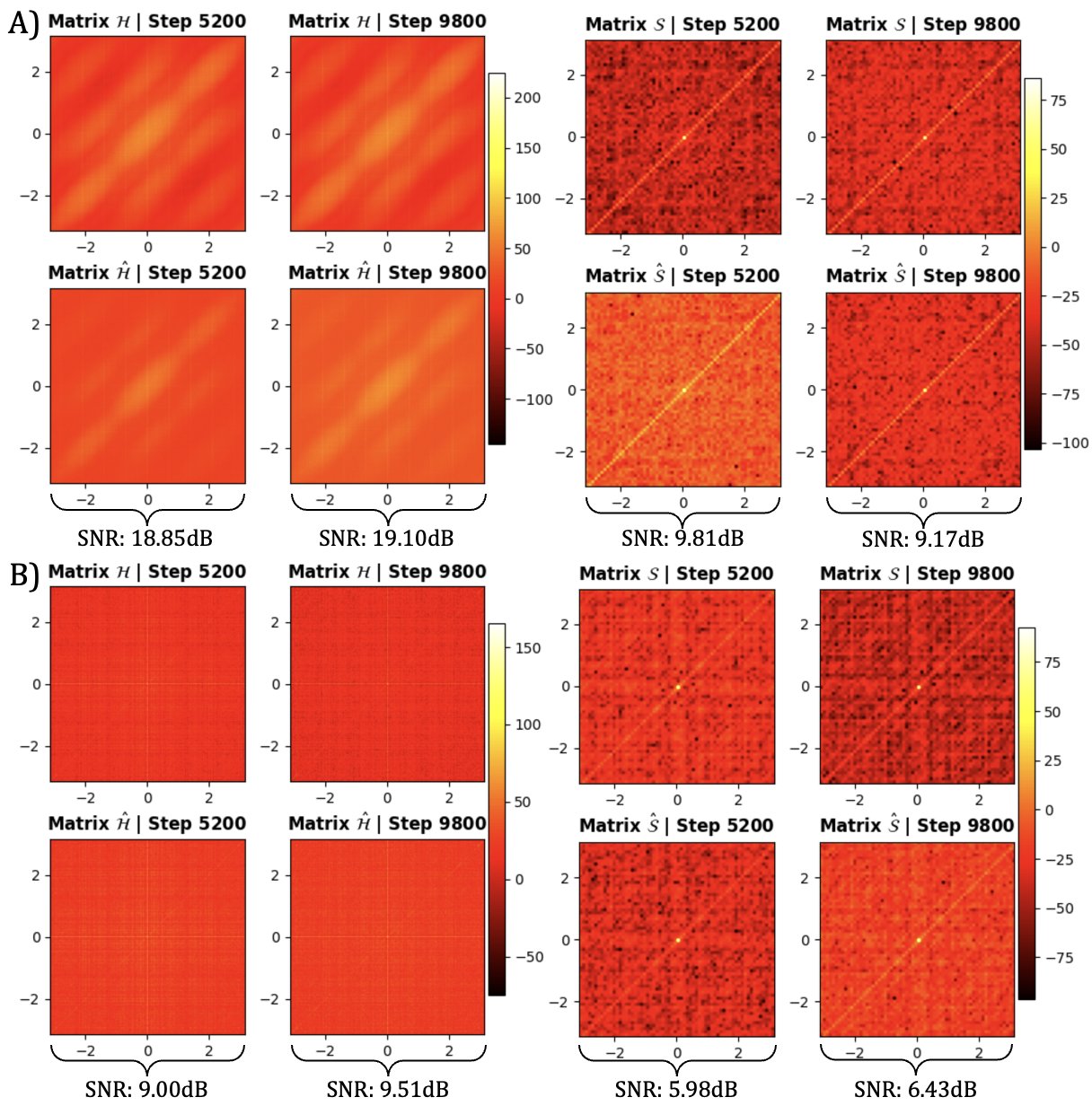}
\caption{\small Comparative Visualization of FFT Outputs for KFs in a ResNet-18 Network's Convolutional and Linear Layers. (A) FFT results for KFs \( \mathcal{H} \) and \( \hat{\mathcal{H}} \) from the $37$th convolutional layer under noise-free conditions (top) and with Gaussian noise (bottom) at iterations 5200 (middle of training) and 9800 (end of training). (B) Analogous FFT results for the KFs \( \mathcal{S} \) and \( \hat{\mathcal{S}} \) from the $41$st linear layer, also contrasted between noise-free (top) and noisy conditions (bottom) at the same iterations.}
\label{fig:fft_conv37}
\end{figure}
Figure~\ref{fig:fft_conv37} demonstrates the Fourier spectral analysis of the KFs \( \mathcal{H} \) and \( \mathcal{S} \) over two distinct iterative stages of training—5200 and 9800 for a convolutional and a linear layers ($37$th and $41$st of a ResNet-18 network respectively). Each KF is analyzed via FFT in both a pristine, noise-free condition and a Gaussian noise-affected state, with the associated Signal-to-Noise Ratios (SNRs) detailed in Eq. (\ref{eq:SNR}). In the noise-free FFT spectra, a pronounced diagonal energy concentration is manifest in the \( \mathcal{H} \) and \( \mathcal{S} \) factors of the convolutional layer, indicative of significant informational preservation along the diagonal. In contrast, the linear layer exhibits a less pronounced but still discernible diagonal energy distribution, suggesting a more diffuse yet still noteworthy diagonal information structure. With the addition of noise, the matrices \( \hat{\mathcal{H}} \) and \( \hat{\mathcal{S}} \) still display a notable diagonal pattern, indicating minimal SNR deterioration. This observation supports the proposition that the KFs primarily encode their information along the diagonal, and the introduction of noise into the off-diagonal elements has a limited impact.
The SNR between a matrix \( \mathcal{M} \) and \( \hat{\mathcal{M}} \) is computed using the formula:
\begin{equation}
\text{SNR} = 10 \cdot \log_{10}\left(\frac{\sum_{i=1}^{N} |\mathcal{M}_{ii}|^2}{\sum_{j>i}^{N} |\hat{\mathcal{M}}_{ij}|^2}\right), \label{eq:SNR}
\end{equation}
where \( \mathcal{M}_{ii} \) denotes the diagonal elements of \( \mathcal{M} \), and \( \hat{\mathcal{M}}_{ij} \) represents the upper triangular elements of \( \hat{\mathcal{M}} \) excluding the diagonal \citep{oppenheim99}. The observed reduction in SNR from step 5200 to step 9800 for the KF \( \mathcal{S} \) in the convolutional layer, under noisy conditions, could suggest an incremental integration of noise effects across iterations. Conversely, for the remaining factors, an increase in SNR throughout the training process is detected, which may indicate an enhancement in signal clarity. Nevertheless, the integrity of the diagonal concentration of energy remains predominantly intact, demonstrating the underlying robustness of the network's feature extraction capability against noise perturbations. Ultimately, the spectral analyses validate the hypothesis that the KFs' informational content is predominantly diagonal and resistant to the effects of off-diagonal Gaussian noise. This durability is sustained through successive iterations, maintaining the primary spectral characteristics of the KFs.
\begin{figure}[h]
\centering
\includegraphics[width=\textwidth]{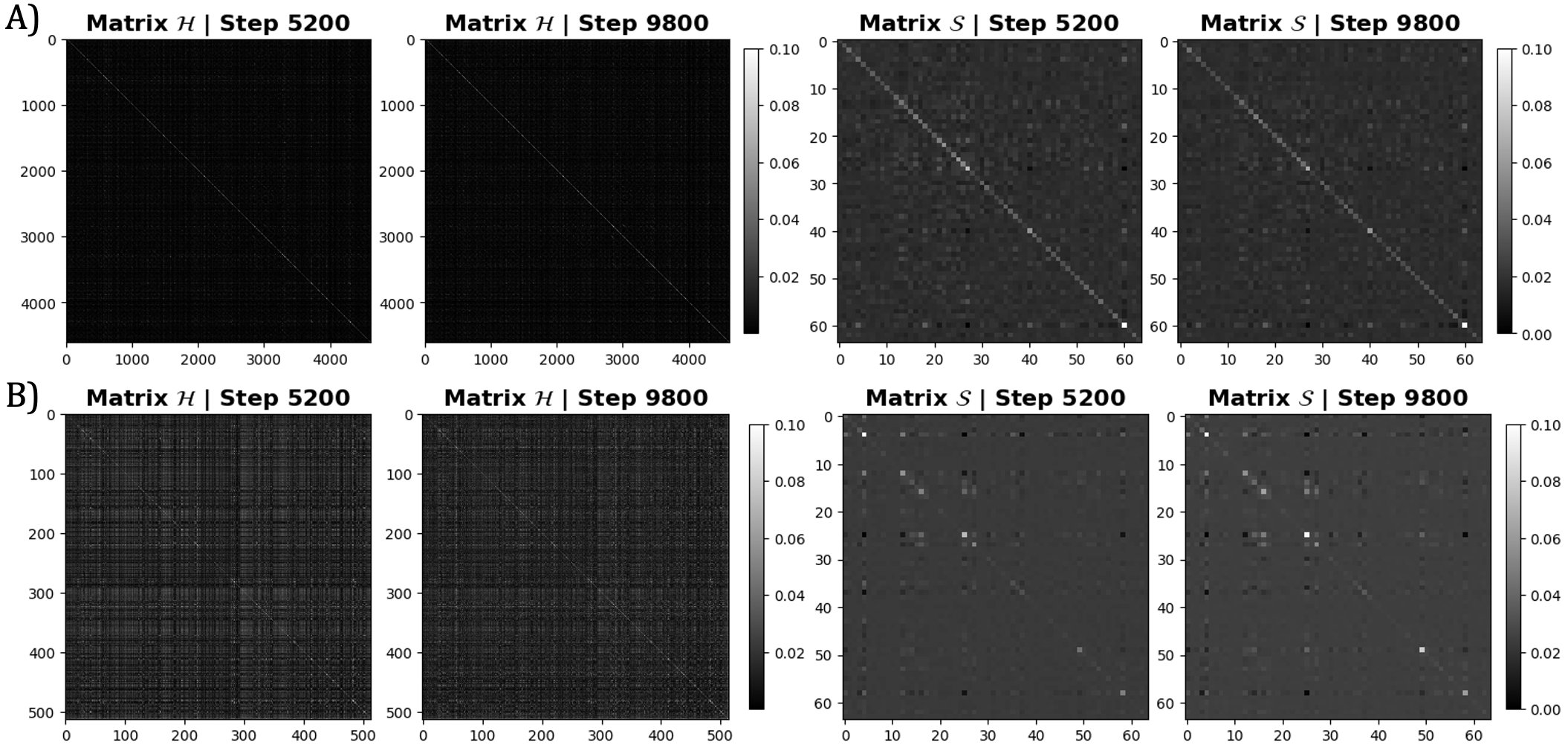}
\caption{\small Visualization of KFs \( \mathcal{H} \) and \( \mathcal{S} \) for convolutional (A) and linear (B) layers at different iteration steps within a ResNet-18 network. For the convolutional layer ($37$th layer), the first two plots in (A) represent factor \( \mathcal{H} \) at steps 5200 (middle of training) and 9800 (end of training), elucidating the matrix's structure at these stages. The subsequent two plots display factor \( \mathcal{S} \), highlighting changes in granularity and contrast with iteration progression. Similarly, in (B) for the linear layer ($41$st position), we observe the structural evolution of factor \( \mathcal{H} \) and \( \mathcal{S} \) over the same iterations, with variations in pattern density and clarity. These visualizations collectively underscore the dynamic nature of the KFs' architecture as training advances.}
\label{fig:conv37}
\end{figure}
Figure~\ref{fig:conv37} offers a visual exposition of the Kronecker Product Factors \( \mathcal{H} \) and \( \mathcal{S} \) at progressive iteration junctures—specifically steps 5200 and 9800 for a convolutional and a linear layers ($37$th and $41$st of a ResNet-18 network respectively). The initial duo of plots in each (A) and (B), delineate the KF \( \mathcal{H} \) at the aforementioned steps, elucidating the matrix's structure at two distinct evolutionary stages. The next duo plots in (A) and (B) represent the KF \( \mathcal{S} \) at different steps of training. This visual examination, in conjunction with the preceding spectral analyses, articulates an integrated story of the developmental trajectory of the KFs. The enduring diagonal salience observed in both \( \mathcal{H} \) and \( \mathcal{S} \) underscores the notion that the informational energy of the KFs is predominantly concentrated along the diagonal. This persistent feature accentuates the structural stability and the focused nature of information encoding within the network's layers.
\subsection{Proofs} \label{sec:proof}
\begin{proposition}[FIM for normalization layer]
Let $(\nu_i, \beta_i) \in \mathbb{R}^{C_i}$ be the scale and shift parameters of a normalization layer $i$. The empirical KFs for the FIM approximation are
\begin{align}
\mathcal{H}_{i-1}\Bigr|_{\nu_i} = \frac{1}{|\mathcal{T}_i|} \sum_{x \in \mathcal{T}_i} h_{i-1,x}h_{i-1,x}^\top, \, \mathcal{H}_{i-1}\Bigr|_{\beta_i} = \mathbf{1}\mathbf{1}^\top, \quad   
\mathcal{S}_i = \frac{1}{|\mathcal{T}_i|} \sum_{x \in \mathcal{T}_i} s_{i,x}s_{i,x}^\top \notag
\end{align}
where $h_{i-1}, s_i \in \mathbb{R}^{C_i \times |\mathcal{T}_i|}$ represent the pre-normalized activations and gradients, respectively. Here, $\mathcal{T}_i$ is the set of dimensions over which normalization statistics are computed, and $C_i$ is the channels/features size.
\end{proposition}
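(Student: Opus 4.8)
The plan is to treat the affine scale–shift transformation of the normalization layer as a weight-shared linear map and then invoke the same Kronecker factorization used for ordinary layers in the Background section. Concretely, for each channel $c$ and each position $x \in \mathcal{T}_i$ over which the normalization statistics are pooled, the layer computes $a_{i,x}^{(c)} = \nu_i^{(c)} h_{i-1,x}^{(c)} + \beta_i^{(c)}$, where $h_{i-1,x}$ denotes the pre-normalized activations feeding the affine part. The scale and shift are shared across all $x \in \mathcal{T}_i$, exactly as a convolutional kernel is shared across spatial locations, so I would first derive the parameter gradients by backpropagation and then import the conv-style K-FAC averaging to collapse the shared positions into empirical Kronecker factors.

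First I would compute the gradients. Since the Jacobian of $a_{i,x}$ with respect to $\nu_i$ is $\mathrm{diag}(h_{i-1,x})$ and with respect to $\beta_i$ is the identity, summing the per-position contributions gives $\nabla_{\nu_i}\mathcal{L} = \sum_{x\in\mathcal{T}_i} s_{i,x}\odot h_{i-1,x}$ and $\nabla_{\beta_i}\mathcal{L} = \sum_{x\in\mathcal{T}_i} s_{i,x}$. For the scale, the per-position contribution $s_{i,x}\odot h_{i-1,x}$ is precisely the linear-layer weight gradient $s_{i,x} h_{i-1,x}^\top$ restricted to its diagonal, so the activation playing the role of $\bar{h}_{i-1}$ is $h_{i-1,x}$ and the pre-activation derivative playing the role of $s_i$ is $s_{i,x}$. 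Applying the factorization step $\mathbb{E}[\bar h\,\bar h^\top\otimes s s^\top]\approx \mathbb{E}[\bar h \bar h^\top]\otimes\mathbb{E}[s s^\top]$ from the Background and estimating both expectations by their empirical averages over $\mathcal{T}_i$ yields $\mathcal{H}_{i-1}|_{\nu_i}=\frac{1}{|\mathcal{T}_i|}\sum_x h_{i-1,x}h_{i-1,x}^\top$ and $\mathcal{S}_i=\frac{1}{|\mathcal{T}_i|}\sum_x s_{i,x}s_{i,x}^\top$. For the shift, the associated input is the constant bias input $\mathbf{1}\in\mathbb{R}^{C_i}$, so its activation factor is the empirical average of $\mathbf{1}\mathbf{1}^\top$, which equals $\mathbf{1}\mathbf{1}^\top$, while the gradient factor $\mathcal{S}_i$ is unchanged.

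The step I expect to be the main obstacle is justifying the averaging over $\mathcal{T}_i$, i.e. reducing the exact block $\mathbb{E}[(\sum_x s_{i,x}\odot h_{i-1,x})(\sum_{x'} s_{i,x'}\odot h_{i-1,x'})^\top]$ to an average of single-position outer products. The exact expression contains cross terms between distinct positions $x\neq x'$, and handling them requires an assumption analogous to the spatially-uncorrelated hypothesis of convolutional K-FAC, namely that contributions from different pooled positions are approximately uncorrelated, so that the shared parameter can be treated as if each position furnished an independent sample of the Kronecker factors. I would state this assumption explicitly, note that it is the direct normalization-layer counterpart of the standard conv K-FAC approximation, and verify that under it the sum-over-positions gradient covariance factorizes into the claimed empirical means; the remaining manipulations (the $\mathrm{diag}$ extraction for the scale and the constant input for the shift) are then routine.
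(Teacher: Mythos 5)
Your proposal follows essentially the same route as the paper's proof: identical gradient expressions $\nabla_{\nu_i}\mathcal{L}=\sum_{x}h_{i-1,x}\odot s_{i,x}$ and $\nabla_{\beta_i}\mathcal{L}=\sum_{x}s_{i,x}$, the same K-FAC-style factorization of the gradient outer product with empirical averaging over $\mathcal{T}_i$, and the same constant-input treatment of the shift factor. Your explicit flagging of the cross-position terms $x\neq x'$ and the spatially-uncorrelated assumption needed to drop them is in fact more careful than the paper, which absorbs that step into a single line labeled ``K-FAC independence assumption.''
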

\begin{proof}
Let $(\nu_i, \beta_i) \in \mathbb{R}^{C_i}$ be the scale and shift parameters of a normalization layer $i$, with transformation 
\begin{align*}
    h_i = a_i = \nu_i \odot h_{i-1} + \beta_i
\end{align*}
where $h_{i-1} \in \mathbb{R}^{C_i \times |\mathcal{T}_i|}$ contains normalized activations and $\odot$ denotes element-wise multiplication. Let $\nabla_{\nu_i}J(\theta) = \sum_{x} h_{i-1,x} \odot s_{i,x}$ and $\nabla_{\beta_i}J(\theta) = \sum_{x} s_{i,x}$ where $s_i = \nabla_{h_i}J(\theta)$. 

\textbf{For $\nu_i$ parameters:}
\begin{align*}
\mathbb{E}[\nabla_{\nu_i}\mathcal{L}\nabla_{\nu_i}\mathcal{L}^\top] 
&= \mathbb{E}\left[\left(\sum_{x} h_{i-1,x} \odot s_{i,x}\right)\left(\sum_{x'} h_{i-1,x'} \odot s_{i,x'}\right)^\top\right] \\
&\approx \mathbb{E}\left[\sum_{x} (h_{i-1,x}h_{i-1,x}^\top) \otimes (s_{i,x}s_{i,x}^\top)\right] \quad \text{(K-FAC independence assumption)}  \\
&= \left(\frac{1}{|\mathcal{T}_i|}\sum_{x} h_{i-1,x}h_{i-1,x}^\top\right) \otimes \left(\frac{1}{|\mathcal{T}_i|}\sum_{x} s_{i,x}s_{i,x}^\top\right) \\
&= \mathcal{H}_{i-1}\Bigr|_{\nu_i} \otimes \mathcal{S}_i 
\end{align*}

\textbf{For $\beta_i$ parameters:}
\begin{align*}
\mathbb{E}[\nabla_{\beta_i}\mathcal{L}\nabla_{\beta_i}\mathcal{L}^\top] 
&= \mathbb{E}\left[\left(\sum_{x} s_{i,x}\right)\left(\sum_{x'} s_{i,x'}\right)^\top\right] \\
&= \mathbf{1}\mathbf{1}^\top \otimes \left(\frac{1}{|\mathcal{T}_i|}\sum_{x} s_{i,x}s_{i,x}^\top\right) \quad \text{(Bias term factorization)} \\
&= \mathcal{H}_{i-1}\Bigr|_{\beta_i} \otimes \mathcal{S}_i 
\end{align*}

Cross-terms between $\nu_i$ and $\beta_i$ are excluded under the diagonal block assumption. 
\end{proof}
\begin{proposition}[Efficient FIM]
Let $\mathcal{H}_{i-1}$ and $\mathcal{S}_{i}$ represent the KFs for a given layer index $i$ within a neural network, where these factors exhibit semi-diagonal characteristics indicating energy concentration predominantly along the diagonal, as elaborated in Section \ref{sec:kroneckerdetail}. Define $g_{i}$ as the gradient obtained through backpropagation at layer $i$. Assume that $\mathcal{H}_{i-1}$ and $\mathcal{S}_{i}$ can be closely approximated by diagonal matrices, denoted by $\mathcal{H}_{D_{i-1}}$ and $\mathcal{S}_{D_{i}}$ respectively at layer $i$, such that $\mathcal{H}_{D_{i-1}} = \text{Diag}(\mathcal{H}_{i-1})$, $\mathcal{S}_{D_{i}} = \text{Diag}(\mathcal{S}_{i})$ where $\text{Diag}(\mathcal{M})$ denote the diagonal approximation of a matrix $\mathcal{M}$, which retains only the main diagonal. Therefore, we define the Empirical FIM as
\begin{align}
\Tilde{F}_{D_{i}} \triangleq \mathcal{H}_{D_{i-1}}' \otimes \mathcal{S}_{D_{i}}' + \lambda \mathbf{I},\label{eq:FIMDiag2}
\end{align}
where $\mathcal{M}'$ denotes the Min-Max normalization technique \cite{patro2015normalization} for \(\mathcal{M} = \mathcal{H}_{D_{i-1}}\) or \(\mathcal{S}_{D_{i}}\).  The regularization parameter \(\lambda\) set to \(0.001\) serves as damping factors, in alignment with the principles of Tikhonov regularization, to enhance computational stability and improve the conditioning of the matrix. The foundational aspects of the K-FAC optimization approach are detailed in~\cite{pmlr-v37-martens15}. Then, the closed-form solution for the preconditioned gradient $\bar{g}^{(t)}$, derived from the diagonal approximation of the FIM, is given by: $\bar{g}^{(t)} = (\tilde{F}_{D}^{(t)})^{-1} g^{(t)}$.
\end{proposition}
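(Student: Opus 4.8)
The plan is to show that the construction of $\tilde{F}_{D_i}$ forces it to be a \emph{diagonal}, and hence trivially invertible, matrix, so that the natural-gradient update $\bar{g}^{(t)} = (\tilde{F}_D^{(t)})^{-1} g^{(t)}$ inherited from Eq.~(\ref{eq:NGD}) admits an explicit element-wise closed form. Using the block-diagonal structure of the EFIM established in Section~\ref{sec:effcomputFIM}, it suffices to treat a single layer block $\tilde{F}_{D_i}$ and then concatenate over layers.

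First I would record the structural facts that make $\tilde{F}_{D_i}$ diagonal. By definition $\mathcal{H}_{D_{i-1}} = \text{Diag}(\mathcal{H}_{i-1})$ and $\mathcal{S}_{D_i} = \text{Diag}(\mathcal{S}_i)$ are diagonal; since Min-Max normalization acts entry-wise, the normalized factors $\mathcal{H}'_{D_{i-1}}$ and $\mathcal{S}'_{D_i}$ remain diagonal. Invoking the elementary identity that the Kronecker product of two diagonal matrices is diagonal, $\mathcal{H}'_{D_{i-1}} \otimes \mathcal{S}'_{D_i}$ is diagonal with entries $[\mathcal{H}'_{D_{i-1}}]_{jj}\,[\mathcal{S}'_{D_i}]_{kk}$. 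Adding the diagonal term $\lambda \mathbf{I}$ then yields a diagonal $\tilde{F}_{D_i}$ whose entries are $[\mathcal{H}'_{D_{i-1}}]_{jj}\,[\mathcal{S}'_{D_i}]_{kk} + \lambda$.

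Next I would establish invertibility and read off the closed form. Because Min-Max normalization maps into $[0,1]$, each factor is nonnegative, so every diagonal entry is bounded below by $\lambda = 0.001 > 0$; the Tikhonov damping therefore guarantees strict positivity and invertibility with no further assumptions on the data. Since the inverse of a diagonal matrix is obtained by reciprocating its diagonal, $(\tilde{F}_{D_i}^{(t)})^{-1}$ is diagonal with entries $1/([\mathcal{H}'_{D_{i-1}}]_{jj}\,[\mathcal{S}'_{D_i}]_{kk} + \lambda)$, and the preconditioned gradient $\bar{g}_i^{(t)} = (\tilde{F}_{D_i}^{(t)})^{-1} g_i^{(t)}$ reduces to an element-wise division of $\text{vec}(g_i^{(t)})$ by these entries. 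Concatenating the per-layer blocks recovers the stated global identity $\bar{g}^{(t)} = (\tilde{F}_D^{(t)})^{-1} g^{(t)}$.

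The step I expect to require the most care is the bookkeeping around the damping term $\lambda \mathbf{I}$. When $\lambda = 0$ one could factor $(\mathcal{H}'_{D_{i-1}} \otimes \mathcal{S}'_{D_i})^{-1} = (\mathcal{H}'_{D_{i-1}})^{-1} \otimes (\mathcal{S}'_{D_i})^{-1}$ and apply the vec identity $\text{vec}(\mathcal{S}^{-1} G \mathcal{H}^{-1}) = (\mathcal{H}^{-1} \otimes \mathcal{S}^{-1})\text{vec}(G)$ to obtain a convenient two-sided scaling form, but the additive $\lambda \mathbf{I}$ destroys this exact Kronecker factorization. The point I would emphasize is that \emph{diagonality}, unlike the Kronecker factorization, is preserved under the addition of $\lambda \mathbf{I}$; hence the element-wise inverse remains exact and the closed form holds despite the damping.
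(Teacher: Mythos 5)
Your proposal is correct and follows essentially the same route as the paper's proof: both rest on the facts that the Kronecker product of diagonal matrices is diagonal, that Min-Max normalization keeps the entries (equivalently, the spectrum) in $[0,1]$, and that the additive $\lambda \mathbf{I}$ therefore bounds the diagonal entries below by $\lambda>0$, guaranteeing invertibility and the stated closed form for $\bar{g}^{(t)}$. The only differences are cosmetic --- the paper phrases the positivity argument via Loewner ordering and eigenvalue bounds $\lambda \leq \lambda_k(\tilde{F}_{D_i}) \leq 1+\lambda$ and adds a relative-error remark, while you make the element-wise reciprocal form of the inverse explicit and note that damping breaks the Kronecker factorization of the inverse but not diagonality; neither changes the substance.
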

\begin{proof}
The justification of our approach comprises two principal components: the rationale for adopting a diagonal approximation of the KFs and the methodology for normalization and regularization of these factors.

\textbf{Part 1: Diagonalization of KFs}

The assumption of independent neuronal activity within layers is foundational to our approach. This assumption posits that the covariance matrices \( \mathcal{H} \) and \( \mathcal{S} \), encapsulating the second-order statistics of activations and sensitivities, respectively, are diagonal. This diagonal nature arises because independence among random variables implies a covariance of zero for any pair of distinct variables, thereby nullifying all off-diagonal elements of these covariance matrices.

Consider matrices \( A \) and \( B \), each being diagonal with elements \( a_{ii} \) and \( b_{jj} \), respectively. The Kronecker product \( A \otimes B \), by definition, generates elements \( a_{ii}b_{jj} \) at the corresponding \( (i,j) \) positions. For diagonal \( A \) and \( B \), this product maintains non-zero values exclusively at diagonal positions where \( i = j \), resulting in:
\begin{align*}
    A \otimes B = \text{diag}(a_{11}b_{11}, \ldots, a_{nn}b_{mm}),
\end{align*}
yielding a purely diagonal matrix. Moreover, we have empirically demonstrated that the energy of the KFs is concentrated along the diagonal, as detailed in Sections~\ref{sec:kroneckerdetail} and~\ref{sec:kroneckercontinueappendix}. These arguments support our initial premise.

\textbf{Part 2: Normalization and Regularization}

Let $\mathcal{M} \in \{\mathcal{H}_{D_i}, \mathcal{S}_{D_i}\}$ be a diagonal matrix with entries $m_k > 0$. The min-max normalized matrix $\mathcal{M}'$ satisfies
\begin{align*}
\mathcal{M}' = D^{-1}(\mathcal{M} - m_{\min}I)D^{-1}, \quad D = \mathrm{diag}(\sqrt{m_{\max} - m_{\min}})
\end{align*}
where $m_{\min} = \min_k m_k$, $m_{\max} = \max_k m_k$. This affine transformation ensures that $0 \preccurlyeq \mathcal{M}' \preccurlyeq I$ where $\preccurlyeq$ denotes Loewner ordering. Combined with Tikhonov regularization, the modified FIM, $\tilde{F}_{D_i} = \mathcal{H}'_{D_{i-1}} \otimes \mathcal{S}'_{D_i} + \lambda \mathbf{I}$ admits eigenvalue bounds
\begin{align*}
\lambda \leq \lambda_k(\tilde{F}_{D_i}) \leq 1 + \lambda \quad \forall k
\end{align*}
which guarantees numerical stability for inversion. This approach ensures that all elements are scaled uniformly, preserving their relative magnitudes and distances. Compared to other normalization methods, such as z-score normalization~\citep{patro2015normalization}, Min-Max normalization offers several advantages such as the normalization Stability, where for $\mathcal{M}' = (\mathcal{M} - m_{\min}I)/(m_{\max} - m_{\min})$ we have $\sigma(\mathcal{M}') \subseteq [0,1]$ where $\sigma(\cdot)$ denotes matrix spectrum. Moreover, the Kronecker product satisfies $\sigma(\mathcal{H}'_{D_{i-1}} \otimes \mathcal{S}'_{D_i}) \subseteq [0,1]$, thus $\lambda_{\min}(\tilde{F}_{D_i}) \geq \lambda > 0$, guaranteeing invertibility. And the relative error satisfies $\frac{\|\tilde{F}_{D_i}^{-1} - F_{D_i}^{-1}\|}{\|F_{D_i}^{-1}\|} \leq \mathcal{O}(\epsilon + \lambda)$ where $\epsilon$ measures diagonal approximation error. Therefore, the preconditioned gradient can be written has $\bar{g}^{(t)} = (\mathcal{H}_{D_{i-1}}' \otimes \mathcal{S}_{D_{i}}' + \lambda \mathbf{I})^{-1} g^{(t)} = (\tilde{F}_{D_{i}}^{(t)})^{-1} g^{(t)}$.
\end{proof}

\begin{proposition}[Convergence in convex optimization]
For the FIM defined in Eq. (\ref{eq:FIMDiag2}), the updating scheme $\theta^{(t+1)} = \theta^{(t)} -\alpha  (\tilde{F}_{D}^{(t)})^{-1} \nabla J(\theta^{(t)})$ converges. Moreover, if $\nabla J$ is Lipschitz, i.e., $||\nabla J(\theta) - \nabla J(\theta')||_2 \leq L ||\theta - \theta'||$ for any $\theta $ and $ \theta'$, then for the $k$-step iteration with a fixed step size $\alpha\leq 1/L$, then 
\begin{equation*}
J(\theta^{(k)}) - J(\theta^*) \leq \frac{||\theta^{(0)} - \theta^*||_2^2}{2\alpha k},
\end{equation*}
where $J(\theta^*)$ is the optimal value.  
\end{proposition}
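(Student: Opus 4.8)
The plan is to follow the classical convergence template for preconditioned gradient descent, leveraging the spectral bounds on $\tilde{F}_{D}$ established in the preceding proposition, namely $\lambda \le \lambda_k(\tilde{F}_{D}^{(t)}) \le 1+\lambda$ for all $k$. Consequently the preconditioner $P_t := (\tilde{F}_{D}^{(t)})^{-1}$ is symmetric positive definite with $\tfrac{1}{1+\lambda}\mathbf{I} \preccurlyeq P_t \preccurlyeq \tfrac{1}{\lambda}\mathbf{I}$, and writing $g_t = \nabla J(\theta^{(t)})$ the update reads $\theta^{(t+1)} = \theta^{(t)} - \alpha P_t g_t$.

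For the first claim (convergence under strong convexity and smoothness) I would begin from the descent inequality implied by $L$-smoothness, $J(\theta^{(t+1)}) \le J(\theta^{(t)}) + g_t^\top(\theta^{(t+1)} - \theta^{(t)}) + \tfrac{L}{2}\|\theta^{(t+1)} - \theta^{(t)}\|_2^2$. Substituting the update gives $J(\theta^{(t+1)}) \le J(\theta^{(t)}) - \alpha\, g_t^\top P_t g_t + \tfrac{L\alpha^2}{2}\|P_t g_t\|_2^2$, and bounding $\|P_t g_t\|_2^2 \le \lambda_{\max}(P_t)\, g_t^\top P_t g_t$ shows that for $\alpha$ sufficiently small the coefficient multiplying $g_t^\top P_t g_t$ is strictly negative. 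Hence $\{J(\theta^{(t)})\}$ is monotonically nonincreasing; since strong convexity makes $J$ bounded below and coercive, the sequence converges and $g_t^\top P_t g_t \to 0$, which together with $P_t \succ 0$ forces $g_t \to 0$ and thus $\theta^{(t)} \to \theta^*$.

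For the explicit $\mathcal{O}(1/k)$ rate I would run the standard convex argument: combine the descent inequality with convexity, $J(\theta^{(t)}) \le J(\theta^*) + g_t^\top(\theta^{(t)} - \theta^*)$, and rearrange the per-step decrease into a difference of squared distances to $\theta^*$, producing a telescoping bound of the form $J(\theta^{(t+1)}) - J(\theta^*) \le \tfrac{1}{2\alpha}\big(\|\theta^{(t)} - \theta^*\|_2^2 - \|\theta^{(t+1)} - \theta^*\|_2^2\big)$. Summing over $t = 0,\dots,k-1$ collapses the right-hand side to $\|\theta^{(0)} - \theta^*\|_2^2/(2\alpha)$, and invoking the monotone decrease of $J$, so that $k\,(J(\theta^{(k)}) - J(\theta^*)) \le \sum_{t=1}^{k}(J(\theta^{(t)}) - J(\theta^*))$, yields the claimed bound.

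The main obstacle is reconciling the non-identity, time-varying preconditioner $P_t$ with the clean Euclidean-norm statement. In plain gradient descent the step $\theta^{(t+1)} - \theta^{(t)} = -\alpha g_t$ makes the cross term $g_t^\top(\theta^{(t+1)} - \theta^{(t)})$ cancel exactly against the telescoping of $\|\theta^{(t)} - \theta^*\|_2^2$; with the update $-\alpha P_t g_t$ this cancellation fails in the Euclidean metric. I would resolve this by carrying the telescoping argument out in the $\tilde{F}_{D}^{(t)}$-weighted norm $\|x\|_{\tilde{F}_{D}^{(t)}}^2 = x^\top \tilde{F}_{D}^{(t)} x$ (equivalently, via the change of variables $\phi = (\tilde{F}_{D}^{(t)})^{1/2}\theta$, under which the update becomes ordinary gradient descent) and then converting back through the eigenvalue sandwich $\lambda \mathbf{I} \preccurlyeq \tilde{F}_{D}^{(t)} \preccurlyeq (1+\lambda)\mathbf{I}$, absorbing the resulting constant factors. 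Controlling the variation of the metric between consecutive steps is precisely what requires the slow-variation/monotonicity hypothesis on $\tilde{F}_{D}^{(t)}$ that also appears in the non-convex proposition.
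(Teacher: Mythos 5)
Your first half follows essentially the same route as the paper's proof: both arguments are the descent-lemma-plus-spectral-sandwich computation. The paper phrases it via the Newton decrement $\lambda(\theta^{(t)})^2 = (g^{(t)})^\top(\tilde{F}^{(t)})^{-1}g^{(t)}$, which is exactly your $g_t^\top P_t g_t$, bounds $\|\Delta\theta^{(t)}\|^2$ by the lower eigenvalue of $\tilde{F}^{(t)}$, and concludes a sufficient-decrease inequality $J(\theta^{(t+1)})-J(\theta^{(t)})\leq -c\|g^{(t)}\|^2$; your monotone-decrease-plus-coercivity conclusion is the same. One difference in your favour: you ground the spectral sandwich in the bounds $\lambda \mathbf{I}\preceq \tilde{F}_{D}^{(t)}\preceq (1+\lambda)\mathbf{I}$ actually established by the min-max normalization in the preceding proposition, whereas the paper simply asserts $\alpha I\preceq \tilde{F}^{(t)}\preceq\beta I$ with the same constants it introduced for the Hessian, which it does not justify (and it overloads $\alpha$ as both the strong-convexity constant and the step size).

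On the explicit $O(1/k)$ rate the comparison is more substantive: the paper does not prove it at all --- it defers in one sentence to Ryu and Boyd (2016). You attempt the standard telescoping argument and correctly identify the genuine obstacle: with a time-varying, non-identity preconditioner the cross term $g_t^\top P_t(\theta^{(t)}-\theta^*)$ does not match the convexity inequality in the Euclidean metric, so one must telescope in the $\tilde{F}_{D}^{(t)}$-weighted norm, which requires a monotonicity condition on the metric across steps and, after converting back through the eigenvalue sandwich, leaves multiplicative constants in $[\lambda, 1+\lambda]$. Be aware that this means your argument delivers the stated bound only up to such a factor, not verbatim; the clean constant-free bound $\|\theta^{(0)}-\theta^*\|_2^2/(2\alpha k)$ is the vanilla gradient-descent bound and holds exactly only when the preconditioner is the identity. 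This is a gap in the proposition as stated rather than in your reasoning --- your proposal is more explicit about where the difficulty lies than the paper's proof, which simply omits the rate argument.
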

\begin{proof}
We follow the same proof as in \cite{yao2021adahessian}. Assume that $J(\theta)$ is a strongly convex and strictly smooth function in $\mathbb{R}^d$, such that there exist positive constants $\alpha$ and $\beta$ so that $\alpha I \leq  \nabla^2 J(\theta) \leq \beta I$ for all $w$. We can show that the update formulation $\triangle \theta^{(t)} = (\tilde{F}^{(t)})^{-1} g^{(t)}$ converges by showing that with the proper learning rate:
$$\triangle \theta^{(t)} \coloneq J(\theta^{(t+1)}) - J(\theta^{(t)}) \leq  - \frac{\alpha}{2\beta^{2}}||g^{(t)}||^2$$
Note that when $k = 0$ or 1, the convergence rate is the same as gradient descent or Newton method, respectively. Our proof is similar to \cite{Boyd2004} for the Newton method. We denote $\lambda(\theta^{(t)}) = (g^{(t)})^\top (\tilde{F}^{(t)})^{-1}g^{(t)})^{1/2}$. Since $J(\theta)$ is strongly convex, we have 
\begin{align*}
    J(\theta^{(t)} - \eta \triangle \theta^{(t)}) &\leq J(\theta^{(t)}) - \eta (g^{(t)})^\top\triangle \theta^{(t)} + \frac{\eta^2 \beta ||\triangle \theta^{(t)}||^2}{2} \\
    & \leq J(\theta^{(t)}) -\eta \lambda(\theta^{(t)})^2 + \frac{\beta}{2\alpha}\eta^2\lambda(\theta^{(t)})^2.
\end{align*}
The second inequality comes from the fact that 
\begin{equation*}
    \lambda(\theta^{(t)})^2 = \triangle (\theta^{(t)})^\top \tilde{F}^{(t)}\triangle \theta^{(t)} \geq \alpha ||\triangle \theta^{(t)}||^2.
\end{equation*}
Therefore, the step size $\hat{\eta} = \alpha/\beta$ will make $f$ decrease as follows,
$$J(\theta^{(t)} - \hat{\eta} \triangle \theta^{(t)}) - J(\theta^{(t)}) \leq -\frac{1}{2}\hat{\eta}\lambda(\theta^{(t)})^2.$$
Since $\alpha I\preceq \tilde{F}^{(t)}\preceq\beta I$, we have 
$$\lambda(\theta^{(t)})^2 = (g^{(t)})^\top (\tilde{F}^{(t)})^{-1}g^{(t)}\geq \frac{1}{\beta}||g^{(t)}||^2.$$
Therefore, 
\begin{equation}\label{eq:upper_bound1}
    J(\theta^{(t)} - \hat{\eta} \triangle \theta^{(t)}) - J(\theta^{(t)})\leq -\frac{1}{2\beta}\hat{\eta}||g^{(t)}||^2 = -\frac{\alpha}{2\beta^{2}}||g^{(t)}||^2
\end{equation}
Since $F_{D}^{(t)}$ is positive definite, hence Eq. (\ref{eq:upper_bound1}) holds true. For the bound on convergence rate, we refer to \cite{Ryu2016} for the details of the complete proof.
\end{proof}
\begin{proposition}[Convergence in non-convex stochastic optimization] Under the assumptions:\\
(i) $f$ is lower bounded and differentiable; $||\nabla J(\theta) - \nabla J(\theta')||_2\leq L ||\theta - \theta'||_2$, $||\tilde{F}_{D}^{(t)}||_\infty<L,\, \forall t, \theta, \theta'$.\\
(ii) Both the true and stochastic gradient are bounded, i.e. $||\nabla J(\theta^{(t)})||_2 \leq \lambda$ and $||g_t||_2 \leq \lambda$, $\forall t$ for some $\lambda>0$.\\
(iii) Unbiased and independent noise in $g^{(t)}$, i.e. $g^{(t)} = \nabla J(\theta^{(t)}) + \zeta^{(t)}$, $\mathbb{E}[\zeta^{(t)}] = 0$, and $\zeta^{(t)}\perp\zeta^{(t)}$, $\forall i \neq j$.

Assume $\eta^{(t)} = \frac{\eta}{\sqrt{t}}$, $\beta^{(t)}\leq \beta\leq 1$ is non-increasing, $\frac{\tilde{F}_{D}^{(t-1)}[j]}{\eta^{(t-1)}}\leq \frac{\tilde{F}_{D}^{(t)}[j]}{\eta^{(t)}}$, $\forall t\in [T], j\in [d]$, we then have 
\begin{equation}\label{eq:gradient_bound}
\min_{t\in [T]}\mathbb{E}[||\nabla J(\theta^{(t)})||_2^2] \leq \frac{L}{\sqrt{T}}(C_1 \eta^2 \lambda^2(1+ \log T) + C_2 d\eta + C_3 d\eta^2 + C_4)
\end{equation}
where $C_1, C_2, C_3$ are constants independent of $d$ and $T$, $C_4$ is a constant independent of $T$, the expectation is taken w.r.t all the randomness corresponding to $\{g^{(t)}\}$.
\end{proposition}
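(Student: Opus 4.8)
The plan is to cast AdaFisher into the generalized Adam-type template of \citet{chen2018on} and then invoke a suitably specialized version of their non-convex convergence theorem. Writing the update as $\theta^{(t+1)} = \theta^{(t)} - \eta^{(t)} (\tilde{F}_D^{(t)})^{-1} m^{(t)}$, I would first identify $(\tilde{F}_D^{(t)})^{-1}$ as the effective preconditioning matrix playing the role of the diagonal scaling in their framework (AdaFisher's omission of the square root only changes the identification of the effective step matrix, not the abstract structure). The decisive preliminary observation is that the earlier Efficient FIM proposition already supplies uniform spectral control: since $\lambda \leq \lambda_k(\tilde{F}_D^{(t)}) \leq 1 + \lambda$ for every eigenvalue and every $t$, the preconditioner and its inverse are bounded above and below by constants independent of $t$, so the effective learning rates $\eta^{(t)}/\tilde{F}_D^{(t)}[j]$ stay within a fixed band. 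Together with assumption (i) ($\|\tilde{F}_D^{(t)}\|_\infty < L$) and the monotonicity hypothesis $\tilde{F}_D^{(t-1)}[j]/\eta^{(t-1)} \le \tilde{F}_D^{(t)}[j]/\eta^{(t)}$, these are exactly the structural conditions the generalized framework requires.

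With the framework matched, the core of the argument is standard. Starting from $L$-smoothness I would write the descent inequality
\begin{equation*}
J(\theta^{(t+1)}) \leq J(\theta^{(t)}) + \langle \nabla J(\theta^{(t)}), \theta^{(t+1)} - \theta^{(t)}\rangle + \tfrac{L}{2}\|\theta^{(t+1)} - \theta^{(t)}\|_2^2,
\end{equation*}
substitute the update, and take the expectation conditioned on the filtration generated by $\{g^{(s)}\}_{s<t}$. The unbiasedness and independence of the noise (assumption (iii)) make the leading inner-product term align with $-\eta^{(t)} \nabla J(\theta^{(t)})^\top (\tilde{F}_D^{(t)})^{-1} \nabla J(\theta^{(t)})$, which, by the lower spectral bound, is bounded above by $-c\,\eta^{(t)} \|\nabla J(\theta^{(t)})\|_2^2$ for some constant $c > 0$; this is the term that eventually produces the left-hand side after summation. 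The bounded-gradient assumption (ii) controls the quadratic remainder and the momentum cross-terms, with $\sum_{t=1}^T (\eta^{(t)})^2 = \eta^2 \sum_{t=1}^T t^{-1} \leq \eta^2(1 + \log T)$ producing the $C_1\eta^2\lambda^2(1+\log T)$ contribution, while the coordinate-wise variation of the preconditioner summed over the $d$ dimensions yields the $C_2 d\eta$ and $C_3 d\eta^2$ terms and $J(\theta^{(1)}) - \inf J$ yields $C_4$.

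The main obstacle, as in all Adam-type analyses, is the statistical coupling between the stochastic gradient $g^{(t)}$, the momentum buffer $m^{(t)}$, and the preconditioner $(\tilde{F}_D^{(t)})^{-1}$: all three are built from the same sample, so $(\tilde{F}_D^{(t)})^{-1}$ is not independent of the current noise $\zeta^{(t)}$ and the inner-product term cannot be handled by a naive tower-property argument. The remedy I would follow from \citet{chen2018on} is to replace $(\tilde{F}_D^{(t)})^{-1}$ by the one-step-stale $(\tilde{F}_D^{(t-1)})^{-1}$, which \emph{is} independent of $\zeta^{(t)}$, inside the critical expectation, and then bound the resulting discrepancy $\|(\tilde{F}_D^{(t)})^{-1} - (\tilde{F}_D^{(t-1)})^{-1}\|$. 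The monotonicity hypothesis is precisely what makes these discrepancy terms telescope across $t$, and the uniform Loewner bounds from the Efficient FIM proposition keep each term summable; controlling this correlation carefully is where essentially all the work lies. After that, summing over $t$, dividing by $\sum_{t=1}^T \eta^{(t)} = \Theta(\eta\sqrt{T})$, and lower-bounding $\min_t$ by the weighted average gives the stated $O(\log T/\sqrt{T})$ rate.
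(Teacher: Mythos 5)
Your proposal follows essentially the same route as the paper's proof: both cast AdaFisher as a generalized Adam-type method and invoke the generic bound of Chen et al. (2018), then control the three variation terms via $\sum_{t=1}^T 1/t \le 1+\log T$ and the telescoping enabled by the monotonicity of $\tilde{F}_{D}^{(t)}[j]/\eta^{(t)}$, and finally convert the weighted sum of gradient norms into the $\min_{t}$ bound using $\|\tilde{F}_{D}^{(t)}\|_\infty < L$. The only real difference is that you sketch the internals of the cited result (the descent lemma and the one-step-stale decoupling of the preconditioner from the noise), which the paper treats as a black box.
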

\begin{proof}
Follow \cite{chen2018on}, as AdaFisher is an Adam-type method with the condition $||\eta^{(t)} m^{(t)}/\tilde{F}_{D}^{(t)}||_2 \leq G$ for some $G$ (which can be obtained by $\eta^{(t)}< \eta$, $||g^{(t)}||_2\leq \lambda$ and $||\tilde{F}_{D}^{(t)}||_2\geq 1$), we have 
\begin{align}\label{eq:upper_bound2}
\mathbb{E}\Bigg[\sum_{t=1}^{T} \eta^{(t)}\langle\nabla J(\theta^{(t)}), \nabla J(\theta^{(t)})/\tilde{F}_{D}^{(t)}\rangle\Bigg] \leq &\mathbb{E}\Bigg[C_1\sum_{t=1}^T \left\Vert\frac{\eta^{(t)} g^{(t)}}{\tilde{F}_{D}^{(t)}}\right\Vert_2^2 + C_2\sum_{t=1}^T\left\Vert \frac{\eta^{(t)}}{\tilde{F}_{D}^{(t)}} - \frac{\eta^{(t-1)}}{\tilde{F}_{D}^{(t-1)}}\right\Vert_1 \notag\\
& + C_3\sum_{t=1}^T\left\Vert \frac{\eta^{(t)}}{\tilde{F}_{D}^{(t)}} - \frac{\eta^{(t)}}{\tilde{F}_{D}^{(t)}}\right\Vert_2^2\Bigg] + C_4.
\end{align}
We first bound non-constant terms in RHS of Eq. (\ref{eq:upper_bound2}). For the term with $C_1$, since $||\tilde{F}_{D}^{(t)}||_2\geq 1$, we have
\begin{align*}
\mathbb{E}\Bigg[\sum_{t=1}^T \left\Vert\frac{\eta^{(t)} g^{(t)}}{\tilde{F}_{D}^{(t)}}\right\Vert_2^2\Bigg] &\leq \mathbb{E}\Bigg[\sum_{t=1}^T ||\eta^{(t)} g^{(t)}||_2^2\Bigg]\\
&= \mathbb{E}\Bigg[\sum_{t=1}^T \left\Vert\frac{\eta}{\sqrt{t}}g^{(t)}\right\Vert_2^2\Bigg]\\
&\leq \eta^2\lambda^2\sum_{t=1}^T \frac{1}{t}\leq \eta^2\lambda^2 (1 + \log T).
\end{align*}
For the term with $C_2$, we have
\begin{align*}
\mathbb{E}\Bigg[\sum_{t=1}^T\left\Vert \frac{\eta^{(t)}}{\tilde{F}_{D}^{(t)}} - \frac{\eta^{(t-1)}}{\tilde{F}_{D}^{(t-1)}}\right\Vert_1 \Bigg] &=  \mathbb{E}\Bigg[\sum_{j=1}^d \sum_{t=2}^T \left( \frac{\eta^{(t-1)}}{\tilde{F}_{D}^{(t-1)}[j]} - \frac{\eta^{(t)}}{\tilde{F}_{D}^{(t)}[j]}\right) \Bigg]\\
&= \mathbb{E}\Bigg[\sum_{j=1}^d \frac{\eta^{(1)}}{\tilde{F}_{D}^{(1)}[j]} - \frac{\eta^{(T)}}{\tilde{F}_{D}^{(T)}[j]}\Bigg]\\
& \leq \mathbb{E}\Bigg[\sum_{j=1}^d \frac{\eta^{(1)}}{\tilde{F}_{D}^{(1)}[j]}\Bigg] \leq d\eta
\end{align*}
where the first equality is due to  $\frac{\tilde{F}_{D}^{(t-1)}[j]}{\eta^{(t-1)}}\leq \frac{\tilde{F}_{D}^{(t)}[j]}{\eta^{(t)}}$, $\forall t\in [T], j\in [d]$.

For the term with $C_3$, we have
\begin{align*}
\mathbb{E}\Bigg[\sum_{t=1}^T\left\Vert \frac{\eta^{(t)}}{\tilde{F}_{D}^{(t)}} - \frac{\eta^{(t-1)}}{\tilde{F}_{D}^{(t-1)}}\right\Vert_2^2 \Bigg] &= \mathbb{E}\Bigg[\sum_{t=1}^T\sum_{j=1}^d\left( \frac{\eta^{(t)}}{\tilde{F}_{D}^{(t)}[j]} - \frac{\eta^{(t-1)}}{\tilde{F}_{D}^{(t)}[j]}\right)^2 \Bigg]\\
&= \mathbb{E}\Bigg[\sum_{t=1}^T\sum_{j=1}^d\left| \frac{\eta^{(t)}}{\tilde{F}_{D}^{(t)}[j]} - \frac{\eta^{(t-1)}}{\tilde{F}_{D^{(t-1)}}[j]}\right| \cdot \left| \frac{\eta^{(t)}}{\tilde{F}_{D}^{(t)}[j]} - \frac{\eta^{(t-1)}}{\tilde{F}_{D}^{(t-1)}[j]}\right| \Bigg]\\
&\leq \mathbb{E}\Bigg[\sum_{t=1}^T\sum_{j=1}^d\left| \frac{\eta^{(t)}}{\tilde{F}_{D}^{(t)}[j]} - \frac{\eta^{(t-1)}}{\tilde{F}_{D}^{(t-1)}[j]}\right| \cdot \left| \frac{\eta}{\sqrt{t}\tilde{F}_{D}^{(t)}[j]} - \frac{\eta}{\sqrt{t-1}\tilde{F}_{D}^{(t-1)}[j]}\right|\Bigg] \\
&\leq \mathbb{E}\Bigg[\eta\sum_{t=1}^T\sum_{j=1}^d\left| \frac{\eta_t}{\tilde{F}_{D}^{(t)}[j]} - \frac{\eta^{(t-1)}}{\tilde{F}_{D}^{(t-1)}[j]}\right|\Bigg]\\
&= \eta\mathbb{E}\Bigg[\sum_{t=1}^T\left\Vert \frac{\eta^{(t)}}{\Tilde{F}_{D}^{(t)}} - \frac{\eta^{(t-1)}}{\Tilde{F}_{D}^{(t-1)}}\right\Vert_1 \Bigg]\\
&\leq d\eta^2
\end{align*}
Hence 
\begin{align}\label{eq:bound_1}
\mathbb{E}\Bigg[C_1\sum_{t=1}^T \left\Vert\frac{\eta^{(t)} g^{(t)}}{\tilde{F}_{D}^{(t)}}\right\Vert_2^2 &+ C_2\sum_{t=1}^T\left\Vert \frac{\eta^{(t)}}{\tilde{F}_{D}^{(t)}} - \frac{\eta^{(t-1)}}{\tilde{F}_{D}^{(t-1)}}\right\Vert_1 \notag + C_3\sum_{t=1}^T\left\Vert \frac{\eta^{(t)}}{\tilde{F}_{D}^{(t)}} - \frac{\eta^{(t-1)}}{\tilde{F}_{D^{(t-1)}}}\right\Vert_2^2\Bigg] + C_4\\
&\leq C_1 \eta^2 \lambda^2(1+ \log T) + C_2 d\eta + C_3 d\eta^2 + C_4
\end{align}
Now we lower bound the LHS of Eq. (\ref{eq:gradient_bound}). With the assumption $||\tilde{F}_{D}^{(t)}||_\infty\leq L$, we have 
$$(\eta^{(t)}/\tilde{F}_{D}^{(t)})_j \geq \frac{\eta}{L\sqrt{t}}.$$
Thus 
\begin{equation}\label{eq:bound_2}
\mathbb{E}\Bigg[\sum_{t=1}^{T} \eta^{(t)}\langle\nabla J(\theta^{(t)}), \nabla J(\theta^{(t)})/\tilde{F}_{D}^{(t)}\rangle\Bigg]\geq \mathbb{E}\Bigg[\sum_{t=1}^{T}\frac{\eta}{L\sqrt{t}}||\nabla J(\theta^{(t)})||_2^2\Bigg] \geq \frac{\sqrt{T}}{L}\min_{t\in [T]}\mathbb{E}[||\nabla J(\theta^{(t)})||_2^2]
\end{equation}
Combining Eq. (\ref{eq:bound_1}) and (\ref{eq:bound_2}) gives the desired result.
\end{proof}
\subsection{Computation of KFs}\label{sec:kroneckerfcators}
The KFs $\mathcal{H}$ and $\mathcal{S}$, which are integral to the AdaFisher optimizer, are computed following methodologies similar to those described in \cite{grosse2016kroneckerfactored, pmlr-v37-martens15}. This section revisits the key equations used for this computation.
For a given layer $i$ in a neural network, the empirical KFs are computed as follows:
\begin{itemize}
    \item For \textbf{fully connected layers}, the KFs are:
    \begin{align*}
    \mathcal{H}_{D_{i-1}} =\text{diag}(\bar{h}_{i-1}\bar{h}_{i-1}^\top), \quad \mathcal{S}_{D_{i}} = \text{diag}(s_{i}s_{i}^\top);
    \end{align*}
    \item For \textbf{convolutional layers}, the computation accounts for the spatial positions within the layer, denoted as $\mathcal{T}$:
    \begin{align*}
    \mathcal{H}_{D_{i-1}} = \text{diag} \left(\frac{\llbracket \bar{h}_{i-1} \rrbracket \llbracket \overline{h}_{i-1} \rrbracket^\top}{|\mathcal{T}|}\right), \quad \mathcal{S}_{D_{i}} = \text{diag} \left(\frac{s_{i}s_{i}^\top}{|\mathcal{T}|} \right);
    \end{align*}
     The algorithm employs the expansion operation denoted by $\llbracket \cdot \rrbracket$~\citep{grosse2016kroneckerfactored}. \textit{This operation essentially takes the patches surrounding spatial locations, stretches them into vectors, and compiles these vectors into a matrix}.
    \item For \textbf{Normalization layers} (BatchNorm \& LayerNorm) refer to Proposition. \ref{prop:proposition_normalization}
    \item For all \textbf{other type of layers} the KFs are:
     \begin{align*}
    \mathcal{H}_{D_{i-1}} = \textbf{I}_{P_{i-1}^{out}+1}, \quad \mathcal{S}_{D_{i}} = \textbf{I}_{P_{i}^{out}};
    \end{align*}
\end{itemize}
\begin{table}[h]
\centering
\scriptsize
\setlength{\tabcolsep}{36pt}
\caption{\small AdaFisher training time per epoch (s) across various numbers of GPUs on ResNet-50 ImageNet-1k.}
\label{tab:linearImageNet}
\begin{tabular}{cccc}
\toprule
GPU amount & Batch Size & \multicolumn{2}{c}{AdaFisher training time per epoch (s)} \\
\midrule
1 & 256 & \multicolumn{2}{c}{2882} \\
2 & 512 & \multicolumn{2}{c}{1438} \\
3 & 768 & \multicolumn{2}{c}{963} \\
4 & 1024 & \multicolumn{2}{c}{720} \\
\bottomrule
\end{tabular}
\end{table}
\subsection{Distributed AdaFisher} \label{sec:distributedAdaFisher}
The efficacy of AdaFisher hinges on its innovative approximation of the FIM, denoted as $\tilde{F}$, which leverages KFs for computation. In a distributed setting, it is crucial to aggregate these KFs across multiple GPUs before updating the model parameters. Consider a training environment consisting of $K$ GPUs. For any given layer $i$, the KFs are computed and aggregated across all GPUs as
\begin{align}
    \mathcal{H}_{D_{i-1}}^{(\text{SUM})} = \frac{1}{K} \sum_{k=1}^{K} \mathcal{H}_{D_{i-1}}^{(k)}, \quad \mathcal{S}_{D_i}^{(\text{SUM})} = \frac{1}{K} \sum_{n=1}^{K} \mathcal{S}_{D_i}^{(k)} \label{eq:distributedkf}
\end{align}
\noindent
The theoretical justification for this aggregation lies in the linearity of expectation and the unbiasedness of the local KF estimates. Specifically, if each $\mathcal{H}_{D_{i-1}}^{(k)}$ and $\mathcal{S}_{D_i}^{(k)}$ are unbiased estimators of their respective true factors $\mathcal{H}_{D_{i-1}}$ and $\mathcal{S}_{D_i}$ for $k=1, \dots, K$, then the averaged factors $\mathcal{H}_{D_{i-1}}^{\text{(SUM)}}$ and $\mathcal{S}_{D_i}^{\text{(SUM)}}$ remain unbiased estimators of $\mathcal{H}_{D_{i-1}}$ and $\mathcal{S}_{D_i}$. Consequently, using Eq. (\ref{eq:distributedkf}), the aggregated EFIM for layer $i$ can be calculated as
\begin{align*}
    \tilde{F}_{D_i}^{\text{SUM}} = \mathcal{H}_{D_{i-1}}'^{(\text{SUM})} \otimes \mathcal{S}_{D_i}'^{(\text{SUM})} + \lambda \mathbf{I}
\end{align*}
where $\lambda$ is a regularization parameter added to ensure numerical stability. This methodology ensures that each GPU contributes to a comprehensive update of the model, enhancing both convergence and performance in large-scale distributed training environments. We assessed the distributed version of AdaFisher on ImageNet-1k, utilizing batch sizes of 512 and 1024 (refer to Table~\ref{tab:imagenetresults} and Figure~\ref{fig:results_imageNet_dist} for details). Our findings indicate that AdaFisher scales nearly linearly with the number of GPUs, as evidenced in Table~\ref{tab:linearImageNet}. There remains scope for additional low-level optimizations within the implementation to further enhance performance.
\section{Ablation Studies} \label{sec:appendixablationstudies}
Building on the ablative studies detailed in Section~\ref{sec:stabilityanalysis}, this section extends our stability analysis to explore the impact of various learning rate schedulers and convergence efficiency, as discussed in Section~\ref{sec:lrschedulersCE}. Additionally, we conduct an in-depth examination of the key components of AdaFisher. This includes analyzing the effects of the EMA, the use of square root transformations, our novel approximation of the FIM, and the critical role of computing the FIM for normalization layers, all of which are detailed in Section~\ref{sec:componentsAdaFisher}. We have consolidated the key findings of each ablation study in Table~\ref{tab:ablation_summary}.
\begin{table}[!h]
\centering
\caption{Summary of Ablation Studies for AdaFisher Optimizer.}
\label{tab:ablation_summary}
\begin{tabular}{|p{1.7cm}|p{3cm}|p{8cm}|}
\hline
\textbf{Ablation Study} & \textbf{Component Studied} & \multicolumn{1}{c|}{\textbf{Key Findings}} \\
\hline
Learning rate schedulers & Impact of Cosine Annealing, StepLR, and no scheduler on AdaFisher & AdaFisher maintains stable and efficient performance across various schedulers, demonstrating its robustness and adaptability in diverse training environments. Further details are given in Section~\ref{sec:lrschedulersCE}. \\
\hline
Convergence Efficiency & Performance and alignment of FIM with Hessian & AdaFisher shows marked performance improvements towards the end of training, with FIM alignment to the Hessian enhancing rapid convergence and stable generalization across training and testing phases. Further details are provided in Section~\ref{sec:lrschedulersCE}. \\
\hline
Square Root Utilization & Effect of omitting square root in update rules & Eliminating the square root enhances AdaFisher's performance and stability, outperforming both its own version with the square root and Adam without the square root, while also improving computational efficiency. Further details are listed in Section~\ref{sec:componentsAdaFisher}.\\
\hline
EMA of KFs & Utilization of EMA for curvature estimation & Using EMA on KFs enhances AdaFisher's curvature estimation, leveraging data from multiple mini-batches for continuous updates, demonstrating significant benefits in methods with diagonal or block-diagonal curvature approximations. Further analysis are included in Section~\ref{sec:componentsAdaFisher}.\\
\hline
Importance of Fisher Computation for Normalization Layers & Impact of EFIM in normalization layers & Incorporating Fisher computation in normalization layers significantly improves AdaFisher's generalization and stability by enhancing parameter sensitivity and gradient variability insights, crucial for optimizing training dynamics and model convergence. Further details are given in Section~\ref{sec:componentsAdaFisher}.\\
\hline
New Approximation of the FIM & Diagonal approximation of the FIM & Our novel method focuses on the diagonal elements of the FIM, enhancing computation efficiency without losing critical information. Validation shows our approximation closely aligns with the true Fisher, confirming its efficacy. Further details are contained in Section~\ref{sec:componentsAdaFisher}.\\
\hline
\end{tabular}
\end{table}
\subsection{Evaluating Stability Across Learning Rate Schedulers, and Assessing Convergence Efficiency} \label{sec:lrschedulersCE}
\textbf{Learning rate schedulers.} This analysis evaluates the impact of different learning rate schedulers--Cosine Annealing, StepLR, and no scheduler—on the performance of AdaFisher, as depicted in Figure~\ref{fig:stability_schedulers_squareroot}. AdaFisher exhibits remarkable robustness across these scheduling strategies. Notably, its performance remains stable and efficient, whether it is paired with the gradual adjustments of Cosine Annealing, the abrupt changes of StepLR, or even in the absence of any scheduler. This underscores AdaFisher’s adaptability and effectiveness in diverse training environments.

\textbf{Convergence Efficiency.} As training progresses, AdaFisher optimizer demonstrates a significant enhancement in performance compared to its counterparts, especially evident towards the end of the training period (see Appendix~\ref{sec:resultsappendix}). This rapid convergence is attributed to AdaFisher's approach by incorporating the FIM. Early and mid-training, the FIM serves as an approximation to the Hessian matrix, equivalent to the Generalized Gauss Newton Matrix~\citep{eschenhagen2024kronecker}. However, as the model approaches a local minimum, the FIM increasingly aligns precisely with the Hessian~\citep{martens2020new}. This precise alignment accelerates convergence, markedly improving the optimizer's efficiency in the final phases of training. Additionally, AdaFisher’s tendency to converge to flat local minima leads to more stable generalization when transitioning from training to testing distributions~\citep{cha2021swad}, contrasting sharply with other optimizers. To support these points, we analyze the training distribution of our diagonal block-Kronecker FIM during the training of ResNet18 on CIFAR10. Specifically, we examine the FIM distribution for the first (Panel A), middle (Panel B) convolutional layers and the last linear layer (Panel C), as shown in Figure~\ref{fig:hist_fisher_all}. It is evident that for each layer, the FIM distribution with AdaFisher narrows to smaller values with fewer variations compared to that with Adam. This pattern demonstrates AdaFisher’s convergence toward flatter local minima, as the Fisher Information, an approximation of the Hessian, contains crucial curvature information. 
\begin{figure}[!h]
    \centering
    \includegraphics[width=\textwidth]{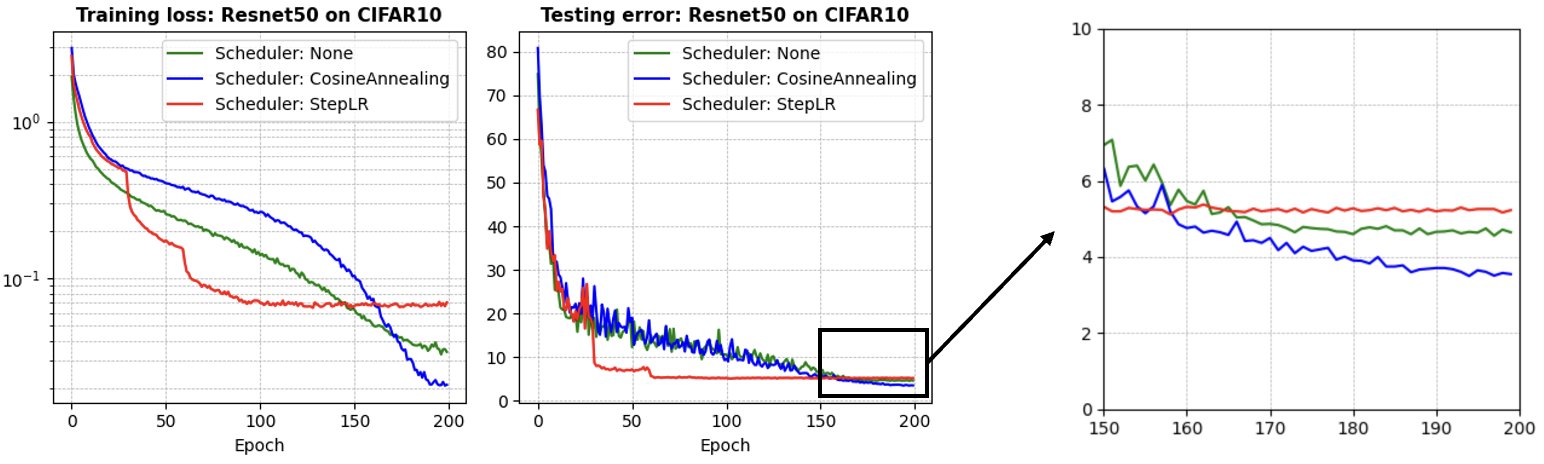}
    \caption{\small Performance comparison of AdaFisher using the ResNet50 on the CIFAR10 with a batch size of 256 with different learning rate schedulers.}
    \label{fig:stability_schedulers_squareroot}
\end{figure}
\begin{figure}[!h]
    \centering
    \includegraphics[width=\textwidth]{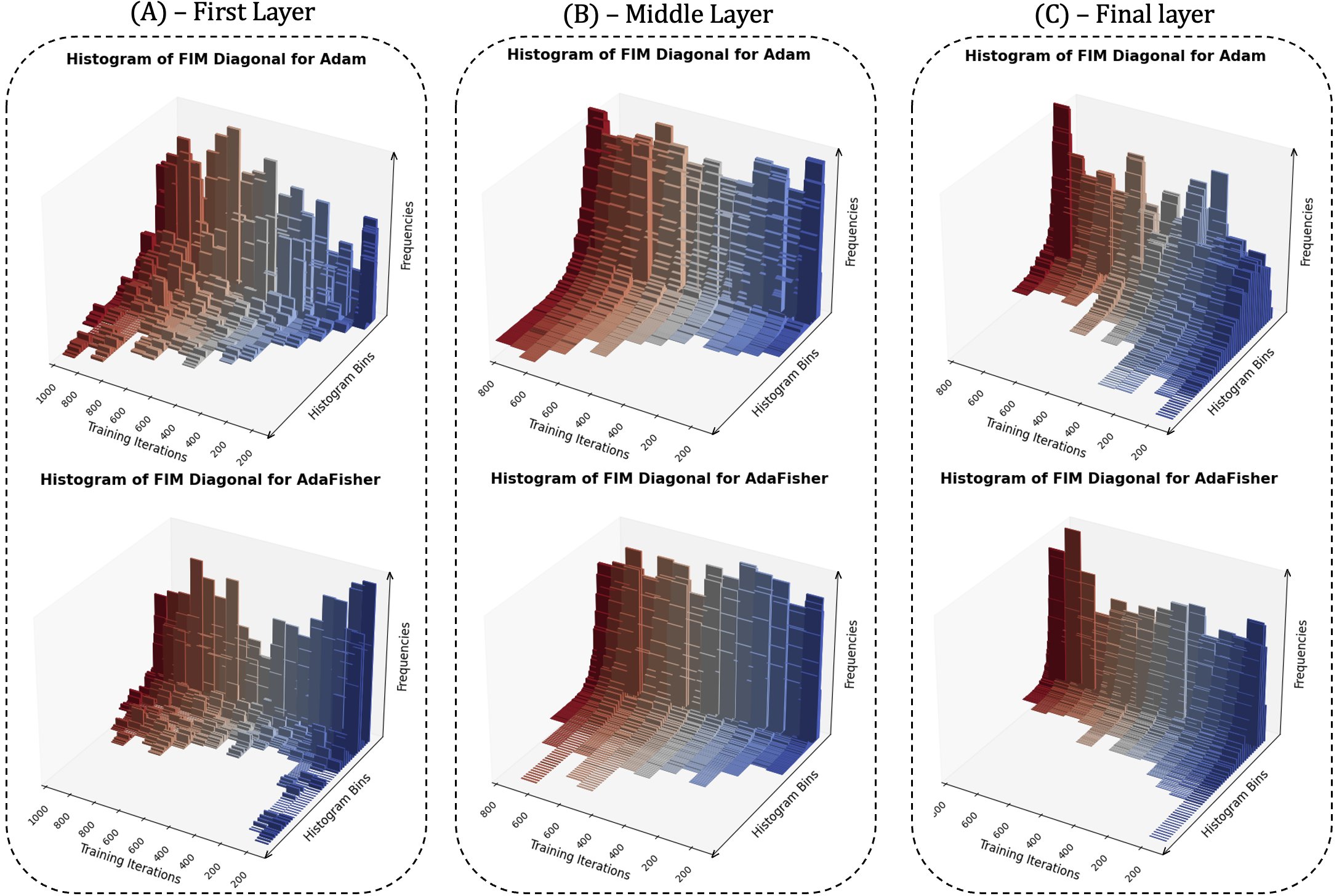}
    \caption{\small Comparison of FIM Diagonal Histograms during ResNet18 Training on CIFAR10 with Adam and AdaFisher over 1,000 training iterations. Panel (A) displays the FIM diagonal elements for the first convolutional layer; Panel (B) illustrates the FIM diagonal elements for the middle convolutional layer; Panel (C) shows the FIM diagonal elements for the last Linear layer.}
    \label{fig:hist_fisher_all}
\end{figure}
\begin{figure}[!h]
    \centering
    \includegraphics[width=\textwidth]{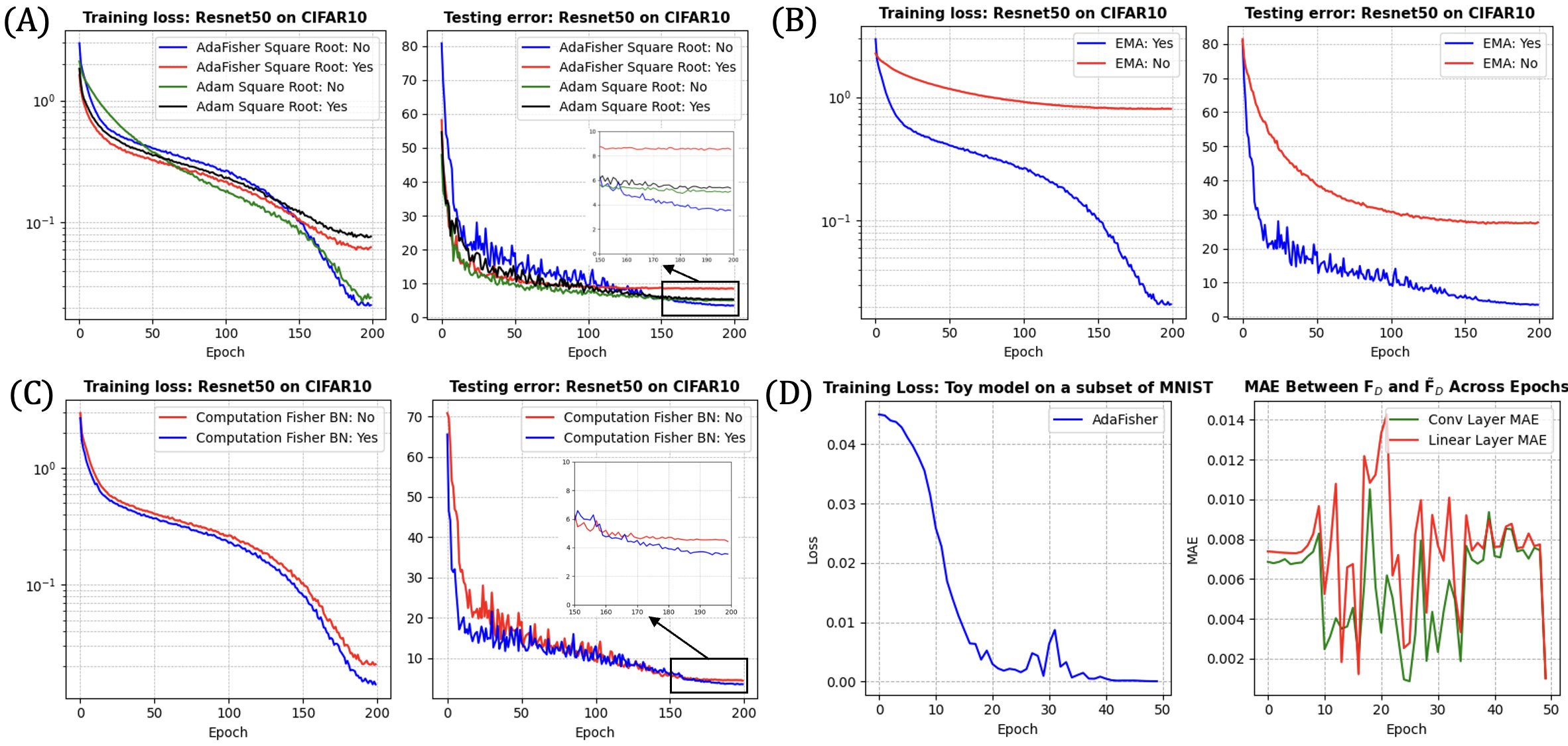}
    \caption{\small AdaFisher Component Analysis. (A)  Comparison of MAE between the true FIM $F_D$ and our approximation $\tilde{F}_D$ across convolutional and dense layers. (B) Performance comparison of AdaFisher with and without the EMA of KFs. (C) Assessment of AdaFisher's performance with and without the computation of EFIM for Batch Normalization (BN) layers.}
    \label{fig:AdaFisher_elements}
\end{figure}
\subsection{Component Analysis: Evaluating the Significance of AdaFisher's Elements} \label{sec:componentsAdaFisher}
AdaFisher incorporates several key components, including a novel approximation of the FIM, the EMA of the KFs, the omission of the square root in the update rule, and a new EFIM formula for normalization layers. In this part, we elucidate each component and its significance within the AdaFisher optimizer.

\textbf{Square Root Utilization.} Recent studies, such as \citep{lin2024can}, have reevaluated the necessity of the square root operation in the Adam family's update rules. These studies suggest that eliminating the square root does not affect convergence and may even narrow the generalization gap compared to SGD in CNN models. Our analysis, shown in panel (A) of Figure~\ref{fig:AdaFisher_elements}, investigates this aspect by comparing the performance of AdaFisher and Adam, both with and without the square root operation. The findings reveal that removing the square root not only boosts the performance and stability of both optimizers but also significantly enhances computational efficiency. Specifically, AdaFisher without the square root not only outperforms the version with the square root but also surpasses Adam without the square root. However, Adam without the square root typically requires an additional \textbf{scaling factor} proportional to the batch size, denoted as $f \propto \text{batch size}$, to function correctly. Without this factor, Adam, without the square root, fails to learn effectively, making direct comparisons with AdaFisher invalid.

\textbf{EMA of KFs.} As elucidated in Section~\ref{sec:effcomputFIM}, employing an EMA over the KFs facilitates a more sophisticated curvature estimation. This technique leverages data across multiple mini-batches, enabling continuous updates to the Fisher information rather than relying solely on the data from a single batch. Panel (B) of Figure~\ref{fig:AdaFisher_elements} underscores, using ResNet-50 on CIFAR10 over 200 epochs, the benefits of using EMA on KFs, a strategy particularly advantageous in methods that utilize diagonal or block-diagonal approximations of the curvature matrix.

\textbf{Importance of Fisher Computation for Normalization Layers.} The integration of the EFIM in normalization layers, as detailed in Proposition~\ref{prop:proposition_normalization}, significantly enhances the generalization process. Panel (C) of Figure~\ref{fig:AdaFisher_elements} illustrates the impact of incorporating Fisher computation in these layers during the training of AdaFisher with ResNet-50 on CIFAR10 over 200 epochs. In contrast, the identity matrix is employed when Fisher's computation is omitted. The superior performance of AdaFisher when incorporating Fisher computation can be attributed to the critical role normalization layers play in adjusting the input distribution for each mini-batch. This adjustment substantially enhances the neural network's learning stability \citep{jiang2024pre}. By quantifying the information each output \( y \) carries about the parameters \( \theta \) under the model distribution \( p(y|x; \theta) \), the computation of the FIM in these layers provides valuable insights into parameter sensitivity and gradient variability. This insight is crucial for optimizing training dynamics and enhancing model convergence—areas that are often inadequately addressed by existing optimizers.

\textbf{New Approximation of the FIM.}
In Proposition~\ref{prop:proposition1}, we introduce a new methodology for approximating the FIM that diverges from the K-FAC optimizer. Unlike K-FAC, which utilizes the full Kronecker product, our approach focuses solely on the diagonal elements of the FIM, where, as demonstrated in Section~\ref{sec:kroneckerdetail}, the energy of the KFs is predominantly concentrated. This method enables a more efficient computation of the FIM without sacrificing critical information. To validate our approach, we compare the true FIM diagonal with our approximation in convolutional and dense layers using a toy model composed of 2 convolutional layers and two linear layers on a subset of the MNIST dataset \citep{6296535} over 50 epochs. Specifically, we calculate the true Fisher using the NNgeometry Python package \citep{george_nngeometry}, which facilitates the computation of the FIM, Gauss-Newton Matrix, or Neural Tangent Kernels applied to neural networks. We estimate $p(y|x)$ through Monte-Carlo sampling. During each epoch, we collected both the empirical and true Fisher information and calculated the Mean Absolute Error (MAE) between these two measures. Panel (D) of Figure~\ref{fig:AdaFisher_elements} showcases the close approximation of AdaFisher's empirical diagonal to the true Fisher, thus validating the efficacy of our approximation method.
\section{Visualization}\label{sec:Appendixvisualization}
The convergence rate of an optimizer is crucial, serving as an indicator of its robustness against saddle points and its ability to generalize effectively. In this section, we introduce a novel methodology for visualizing the convergence behavior of optimizers through a statistical model, as depicted in Figure~\ref{fig:heatloss}. Initially, our process employs Principal Component Analysis (PCA) for dimensionality reduction, reducing the dataset dimensions from $\mathcal{D} \in \mathbb{R}^{m \times n}$ to $\hat{\mathcal{D}} \in \mathbb{R}^{m \times 2}$, following the protocol established in \cite{doi:10.1080/14786440109462720}. We then apply this reduced dataset to a toy model composed of an $L$-layer multi-layer perceptron (MLP). Notably, we focus on the first weight matrix $W_{1}^{e}$ of this MLP, which resides in $\mathbb{R}^2$, where $e$ denotes the epoch number. For consistency and to ensure comparability, all layers’ weights are initialized identically across different optimizers.
\begin{figure}[!h]
    \centering
    \includegraphics[width=\textwidth]{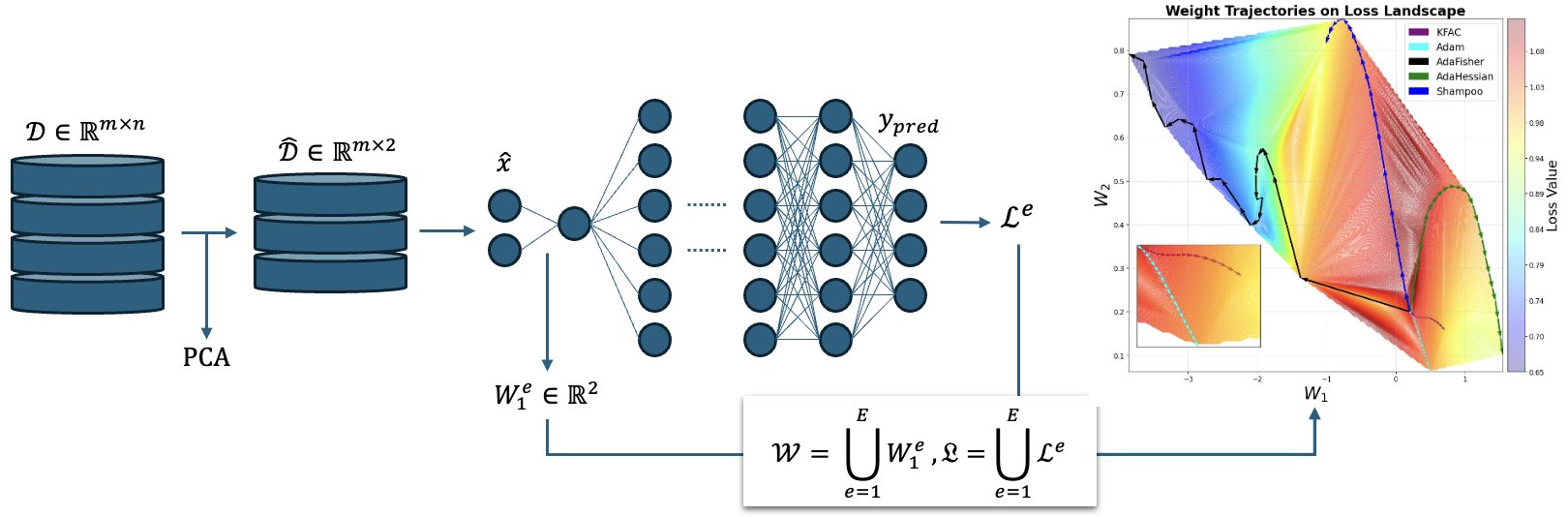}
    \caption{\small Pipeline for visualization of optimization paths for various algorithms on a loss surface, comparing their convergence efficiency.}
    \label{fig:visualization}
\end{figure}
Following the training phase with various optimizers where we denote a set of optimizer results \(\mathcal{O}\), we analyze both the collection of first-layer weights, $\mathcal{W}$, and the evolution of the loss function, $\mathfrak{L}$ defined as: 
\[
\mathcal{W} = \begin{bmatrix}
(W_1^1)^\top \\
(W_1^2)^\top \\
\vdots \\
(W_1^E)^\top 
\end{bmatrix}, \quad \mathfrak{L} = [\mathcal{L}_1^1, \mathcal{L}_1^2, \ldots, \mathcal{L}_1^E]^\top
\]
where \((W_1^e)^\top \) represents the weight vector at the \(e\)th epoch, and \(\mathcal{L}_1^e\) represents the loss at the \(e\)th epoch, extracted from the optimization results \(\mathcal{O}\). We construct a grid \((\mathbf{X}, \mathbf{Y})\) spanning the range of weight parameters, discretized into \(200\) linearly spaced points along each axis:
\[
\mathbf{X}, \mathbf{Y} = \text{meshgrid}\left( \min(\mathcal{W}_{:,1}), \max(\mathcal{W}_{:,1}), \min(\mathcal{W}_{:,2}), \max(\mathcal{W}_{:,2}), 200 \right)
\]

Finally, we interpolate the loss values \(\mathcal{L}\) over the grid using cubic interpolation to obtain a smooth loss surface \(\mathbf{Z}\):
\[
\mathbf{Z} = \text{griddata}(\mathcal{W}, \mathcal{L}, (\mathbf{X}, \mathbf{Y}), \text{method}=\texttt{cubic})
\]
These elements are integral to the visualization process, which elucidates the optimizer’s trajectory through the parameter space across training epochs. It is important to note that while we focus on the first layer's weight matrix for clarity, the methodology can be adapted to visualize the weights of any layer within the network. Figure~\ref{fig:visualization} summarizes the pipeline.

In the experiment depicted in Figure~\ref{fig:heatloss}, we selected the IRIS dataset~\citep{rz7n-kj20-18}, owing to its widespread recognition and compatibility with PCA application. Our model employs a 2-layer MLP architecture. We specifically attend to the weight matrix of the first layer, denoted by $W_{1} \in \mathbb{R}^2$. This particular focus is informed by the empirical observation that the parameters of the first layer tend to exhibit a faster convergence rate compared to those of subsequent layers in the network. Such a phenomenon can be attributed to the more direct influence of the input features on the first layer's weights, which often results in a more pronounced and expedited learning dynamic. Given the classification nature of the task, we employed the Cross-Entropy loss function~\citep{zhang2018generalized}. The network was trained over 20 epochs using a suite of optimizers: Adam, AdaHessian, K-FAC, Shampoo, and AdaFisher. We standardized the learning rate across all optimizers at $1 \times 10^{-3}$ to ensure comparability of results. Examination of Figure~\ref{fig:heatloss} reveals that AdaFisher's convergence is markedly superior to that of its counterparts, achieving rapid convergence to the local minimum of the loss landscape concerning the first weight parameter within a few iterations. Conversely, the alternative optimizers demonstrate convergence to less optimal local minima.  Note that while the results may vary due to the stochastic nature of parameter initialization, AdaFisher typically converges to a better local minimum compared to its counterparts.  

\section{Experiments}\label{sec:experimentsappendix}
\subsection{Hardware}
In total, we had a server with 6 NVIDIA RTX 6000 Ada Generation GPUS with 48 gigabytes of VRAM and 128 gigabytes of RAM available for all experiments. All experiments described in this report were conducted on a system equipped with a single NVIDIA RTX 6000 Ada Generation GPU and 64 gigabytes of RAM, except for training AdaFisher on ImageNet-1k with batch sizes of 512 and 1024, where four GPUs were utilized.
\subsection{Image Classification}
We provide further results and detailed descriptions of our image classification experiments in this section. We conducted five trials with random initializations for the CIFAR experiments and one trial each for Tiny ImageNet and ImageNet-1k. We present the mean and standard deviation of the results for these trials.\\
\textbf{Note on training time.} Given that various optimizers demonstrate significantly different epoch durations, we have standardized our comparisons by restricting training to the total WCT consumed by 200 epochs using AdaFisher for both CIFAR and Tiny ImageNet experiments. Conversely, for ImageNet-1k, we report the results based on 90 WCT training epochs using Adam, as, surprisingly, AdaFisher and Adam exhibited the same duration in this experiment. The final selected number of epochs for each optimizer is detailed in Table~\ref{num_epochs}. Note that we were unable to train AdaHessian on ImageNet-1k due to the significant computational resources required by this optimizer.
\begin{table*}[!ht]
    \centering
    \caption{\small Comparison of the final epoch counts for various optimizers across different datasets.}
    \scriptsize
    \setlength{\tabcolsep}{2pt}
    \begin{tabular}{c|c|c|c|c|c|c||c|c|c|c} & \multicolumn{6}{c||}{CIFAR10/100 \& Tiny ImageNet} & \multicolumn{4}{c}{ImageNet-1k} \\ \midrule
     Optimizers & SGD&Adam/AdamW & AdaHessian & K-FAC & Shampoo & AdaFisher/AdaFisherW & Adam & K-FAC & Shampoo & AdaFisher  \\
    \midrule
    Epochs & 226 & 210 & 89 & 107 & 36 & 200 & 90 & 60 & 26 & 90 
    \label{num_epochs}
    \end{tabular}
\end{table*}
\subsubsection{HP Tuning} \label{sec:HP}
Effective HP tuning is crucial for optimizing the performance of deep learning models. In this study, we systematically explored various HPs for both CNNs and ViTs across multiple image classification tasks. The following subsections detail the tuning strategies employed for each model architecture and dataset.

\textbf{CNNs.} 
For all image classification tasks involving CNNs, we utilized ResNet18 as the backbone architecture and evaluated its performance on the CIFAR-10 dataset with a fixed batch size of 256 trained on 50 epochs. The HP tuning process encompassed the following components:
\begin{itemize}
    \item \textbf{Optimizer Selection and Learning Rate Tuning:} 
    Each optimizer was fine-tuned using ResNet18 on CIFAR-10. We performed a grid search to identify the optimal learning rate from the set \(\{0.0001, 0.0003, 0.0005, 0.0009, \dots, 0.1, 0.3, 0.5, 0.9\}\).
    \item \textbf{Learning Rate Scheduling:} 
    A cosine annealing learning rate decay strategy was employed, aligning with the number of training epochs specified for each optimizer in Table~\ref{num_epochs}. This approach follows the methodology proposed by \citet{loshchilov2017decoupled} and was determined to be optimal for our experimental setup.
    \item \textbf{Weight Decay:} 
    We applied a uniform weight decay of \(5 \times 10^{-4}\) across all optimizers for CIFAR-10 and Tiny ImageNet. An exception was made for MobileNetV3, where the weight decay was set to \(1 \times 10^{-5}\). For experiments on ImageNet-1k, the weight decay was established at \(1 \times 10^{-4}\).
    \item \textbf{Damping Parameter Tuning:} 
    \begin{itemize}
        \item \textbf{AdaFisher, K-FAC, and Shampoo:} 
        \begin{itemize}
            \item \textbf{K-FAC and AdaFisher:} The damping parameter was searched within \(\{0.0001, 0.0003, 0.0005, 0.0009, 0.001, 0.003, 0.005, 0.009, 0.01, 0.03, 0.05, 0.09\}\). This range was chosen based on prior research \citep{pmlr-v37-martens15} and our own experiments, which indicated optimal damping values around \(1 \times 10^{-3}\).
            \item \textbf{Shampoo:} The damping parameter was tuned within \(\{1 \times 10^{-6}, 3 \times 10^{-6}, 5 \times 10^{-6}, 9 \times 10^{-6}, 1 \times 10^{-5}, 3 \times 10^{-5}, 5 \times 10^{-5}, 9 \times 10^{-5}, 1 \times 10^{-4}, 3 \times 10^{-4}, 5 \times 10^{-4}, 9 \times 10^{-4}\}\), as optimal values typically reside around \(1 \times 10^{-5}\).
        \end{itemize}
        \item \textbf{AdaHessian:} 
        The Hessian power was tuned within the range \(\{0.1, 0.2, \dots, 0.9, 1.0\}\).
        \item \textbf{SGD:} The momentum of SGD was tuned within the range \(\{0.1, 0.2, \dots, 0.9, 1.0\}\).
        \item \textbf{AdaFisher Decay Factors:} 
         The decay factor \(\gamma\) for AdaFisher was tuned within \(\{0.1, 0.2, \dots, 0.9,0.99\}\). The optimal value is: $\gamma = 0.8$.
    \end{itemize}
    \item \textbf{Implementation Details:} 
    For the Shampoo and K-FAC optimizers, we utilized the ASDL library as implemented in PyTorch provided by \citet{osawa2023asdl}.
\end{itemize}

\textbf{ViTs.} 
For ViT-based image classification tasks, we employed the Tiny Swin Transformer on the CIFAR-10 dataset with a batch size of 256. The HP tuning strategy for ViTs included the following elements:
\begin{itemize}
    \item \textbf{Weight Decay:} 
    Weight decay values were set as indicated in the respective original publications for each model:
    \begin{itemize}
        \item Tiny Swin: \(1 \times 10^{-2}\)
        \item FocalNet: \(5 \times 10^{-2}\)
        \item CCT-2/3$\times$2: \(6 \times 10^{-2}\)
    \end{itemize}
    \item \textbf{Learning Rate Tuning:} 
    For SGD, AdaFisher, AdaHessian, K-FAC, and Shampoo optimizers, we conducted a grid search over the learning rates \(\{0.3, 0.15, 0.1, 0.05, 0.03, 0.015, 0.01, 0.005, 0.003, 0.0015, 0.001\}\), as these optimizers typically operate with higher learning rates compared to Adam-based optimizers. For AdamW, the learning rates were adopted from the original publications:
    \begin{itemize}
        \item Tiny Swin and FocalNet: \(1 \times 10^{-4}\)
        \item CCT-2/3$\times$2: \(5.5 \times 10^{-5}\)
    \end{itemize}
    \item \textbf{Damping Parameter Tunning:}  
    We performed the same grid search over the damping parameter for K-FAC, Shampoo and AdaFisher, the Hessian power for AdaHessian, the momentum for SGD, and the decay factors for AdaFisher as explained in the CNNs part. 
\end{itemize}
This meticulous HP tuning process ensures that each optimizer is optimally configured for the respective model architectures and datasets, thereby facilitating a fair and comprehensive comparison of their performance across different image classification tasks. The final learning rates for all optimizers and models are detailed in Table~\ref{learning_rate}.
\begin{table*}[!ht]
    \centering
    \caption{\small Final selected learning rates for each optimizer, tuned using ResNet18 (for CNN) and Tiny Swin (for ViT) on CIFAR10 using a batch size of 256. We selected based on final validation top-1 accuracy.}
    \scriptsize
    \setlength{\tabcolsep}{7pt}
    \begin{tabular}{c|c|c|c|c|c|c|c|c}
    Architecture & SGD & Adam & AdamW & AdaHessian & K-FAC & Shampoo & AdaFisher  & AdaFisherW\\
    \midrule
    CNNs & 0.1& 0.001 & - & 0.15 & 0.3 & 0.3 & 0.001 & -\\
    ViTs & 0.01& - & 0.0001/0.000055 & 0.01 & 0.003 & 0.003 & - & 0.001
    \label{learning_rate}
    \end{tabular}
\end{table*}
\subsubsection{Dataset Details}\label{sec:dataaugmentation}
\textbf{CIFAR.} The training/test sets for Cifar10/100 dataset contain 50k/10k images, respectively. We consider a batch size of 256. For CIFAR-related experiments, we perform $32 \times 32$ random-resize cropping, random horizontal flipping and cutout~\citep{devries2017improved} as data augmentations. Refer to \cite{Takahashi_2020} for more details.\\
\textbf{Tiny ImageNet.} The training/test sets for Tiny ImageNet \cite{le2015tiny} contains 100k/10k images. We perform $64 \times 64$ random-resize cropping and random horizontal flipping. The batch size is set to be 256.\\
\textbf{ImageNet-1k.} The training/test sets for ImageNet-1k \cite{ILSVRC15} contains 1,281,167/150k images. We consider a batch size of 256, as we performed experiments on a single GPU instance without any GPU parallelism. We follow~\cite{he2016deep} and perform random resized cropping to $224 \times 244$ and random horizontal flipping on the train set and $256 \times 256$ resizing with $224 \times 224$ center cropping on the test set.
\begin{table*}[!ht]
    \centering
    \caption{\small Final selected learning rates for each optimizer with ImageNet-1k pretrained weights, tuned using ResNet50 on CIFAR10 using a batch size of 256. We tuned by completing a full WCT epoch training cycle and selected based on final validation top-1 accuracy.}
    \scriptsize
    \setlength{\tabcolsep}{20pt}
    \begin{tabular}{c|c|c|c|c|c}
    SGD &Adam & AdaHessian & K-FAC & Shampoo & AdaFisher \\
    \midrule
    0.01& 0.0001 & 0.15 & 0.3 & 0.03 & 0.001 
    \label{learning_rate_pretrained}
    \end{tabular}
\end{table*}
\begin{table*}[!ht]
    \centering
    \caption{\small Final selected epoch counts for various optimizers across transfer learning task.}
    \scriptsize
    \setlength{\tabcolsep}{15pt}
    \begin{tabular}{c|c|c|c|c|c}
    SGD & Adam/AdamW & AdaHessian & K-FAC & Shampoo & AdaFisher/AdaFisherW\\
    \midrule
    58&55 & 22 & 27 & 18 & 50
    \label{table:epoch_num_pretrained}
    \end{tabular}
\end{table*}
\subsubsection{Transfer Learning}\label{sec:transferlearningappendix}
For transfer learning, weights are initialized to the values provided by the publicly available checkpoints by PyTorch, except the first convolutional for the ResNet architecture and last dense layers for all networks, which change size to accommodate the new kernel size and number of classes, respectively, that are randomly initialized. We train all models with weight decay $1e^{-4}$ as suggested in \cite{wightman2021resnet}, except for MobileNetV3, where weight decay is set to be $1e^{-5}$. Moreover, we did a grid search for each optimizer for selecting the best learning rate of the range $\{0.3, 0.15, 0.1, 0.03, 0.015, 0.01, \dots, 1e-5\}$ where we tabulate the selected learning rate for each optimizer in Table~\ref{learning_rate_pretrained}. We use a batch size of 256 and cosine learning rate decay. We use the same augmentation policy (without Cutout) as in the previous experiments. The results were obtained using the WCT technique over 50 training epochs of AdaFisher, with the final epoch count detailed in Table~\ref{table:epoch_num_pretrained}. All other parameters remained unchanged.
\begin{table*}[!ht]
    \noindent
    \centering
    \setlength{\tabcolsep}{18pt}
    \caption{\small Performance of various networks and optimizers on Tiny ImageNet using batch size 256. Reported using wall clock time of 200 AdaFisher training epochs as the cutoff.}
    \scriptsize
    \centering
    \begin{tabular}{c|c|c|c|c|c}
    Network&Adam&AdaHessian&K-FAC&Shampoo&AdaFisher\\
    \midrule ResNet50&$53.06$&$50.21$&$50.05$&$53.53$&$\mathbf{57.41}$\\
    Big Swin&$48.11$&$-$&$8.89$&$4.11$&$\mathbf{48.86}$
    \label{TinyImageNet}
    \end{tabular}
\end{table*}
\subsubsection{Results} \label{sec:resultsappendix}
Table~\ref{TinyImageNet} displays the results for the Tiny ImageNet dataset using ResNet50 and Big Swin networks, with visualizations provided in Figure~\ref{fig:results_TinyImageNet}. AdaFisher and AdaFisherW consistently outperform current SOTA optimizers. Notably, Figure~\ref{fig:results_TinyImageNet} illustrates that although AdaFisher converges slower than K-FAC during ResNet50 training, it achieves superior generalization. This is evidenced by lower testing errors, suggesting that AdaFisher tends to converge to a flatter local minimum, enabling smoother transitions between training and testing datasets with minimal generalization loss. For further explanation, see \cite{cha2021swad}. Note that due to AdaHessian's high memory consumption, we were unable to train it on Big Swin. Table~\ref{table_cutout_vs_nocutout} presents the performance of various networks on CIFAR10/100 datasets using different optimizers, both with and without the cutout augmentation technique. AdaFisher and AdaFisherW consistently outperform their counterparts in both scenarios, demonstrating stable training and robustness to the augmentation techniques. The training losses and test errors for the CIFAR experiments, both with and without cutout, are visually represented in Figures~\ref{fig:results_cifar10_no_cutout}, \ref{fig:results_cifar100_no_cutout}, \ref{fig:results_cifar10_cutout}, and \ref{fig:results_cifar100_cutout}. Moreover, Figure~\ref{fig:results_pretrained} shows the training and validation error of transfer learning task. Finally, Figure~\ref{fig:results_imageNet_dist} illustrates the training and validation error of the distributed version of AdaFisher on ImageNet-1k across various batch sizes. AdaFisher not only outperforms its counterparts with smaller batch sizes (256), but it also continues to achieve superior generalization as batch sizes increase. Furthermore, these results reinforce the stability analysis concerning batch sizes presented in Section~\ref{sec:stabilityanalysis}, extending it to a more challenging dataset.
\begin{table*}[!ht]
    \caption{\small Performance metrics (mean, std) of different networks and optimizers on CIFAR10 and CIFAR100 using batch size 256 (a) without Cutout and (b) with Cutout. Reported using WCT of 200 AdaFisher training epochs as the cutoff.}
    \label{table_cutout_vs_nocutout}
    \begin{subtable}{\textwidth}
    \noindent
    \setlength{\tabcolsep}{0.001pt}
    \caption{Without Cutout}
    \scriptsize
    \centering
    \begin{tabular}{c|c|c|c|c|c|c||c|c|c|c|c|c}
    &\multicolumn{6}{c||}{CIFAR10}&\multicolumn{6}{c}{CIFAR100}\\ \cline{2-13} Network&SGD&Adam&AdaHessian&K-FAC&Shampoo&AdaFisher&SGD&Adam&AdaHessian&K-FAC&Shampoo&AdaFisher\\
    \midrule ResNet18&$94.89_{0.1}$&$93.64_{ 0.1}$&$94.05_{0.1}$&$94.04_{0.2}$&$94.52_{0.1}$&$\mathbf{95.02_{ 0.1}}$&$76.42_{0.1}$&$72.71_{0.2}$&$73.64_{0.2}$&$74.79_{0.2}$&$76.53_{0.1}$ & $\mathbf{77.10_{0.2}}$\\ ResNet50&$95.07_{0.2}$&$93.89_{02}$&$94.26_{0.1}$&$94.25_{0.1}$&$94.92_{0.1}$&$\mathbf{95.42_{0.2}}$&$77.50_{0.2}$&$73.12_{0.7}$&$75.29_{0.3}$&$75.49_{0.2}$&$77.81_{0.2}$ & $\mathbf{78.91_{0.9}}$\\ ResNet101&$94.77_{0.1}$&$93.14_{0.1}$&$94.73_{0.9}$&$94.23_{0.1}$&$94.22_{0.1}$&$\mathbf{95.51_{0.1}}$&$78.76_{0.2}$&$73.23_{0.4}$&$72.19_{0.2}$&$75.46_{0.3}$&$78.82_{0.1}$ & $\mathbf{79.74_{0.3}}$\\ DenseNet121&$95.11_{0.1}$&$93.74_{0.2}$&$94.54_{0.1}$&$94.97_{0.1}$&$94.99_{0.1}$&$\mathbf{95.29_{0.1}}$&$78.61_{0.2}$&$75.38_{0.3}$&$72.54_{0.9}$&$77.09_{0.3}$&$78.70_{0.3}$ & $\mathbf{79.03_{0.2}}$\\
    MobileNetV3&$92.13_{0.2}$&$91.95_{0.1}$&$91.4_{3.1}$&$91.92_{0.1}$&$91.91_{0.2}$&$\mathbf{92.89_{0.1}}$&$73.81_{0.2}$&$65.64_{0.2}$&$60.78_{3.6}$&$69.87_{0.3}$&$68.01_{0.2}$ & $\mathbf{73.15_{0.2}}$\\
    \hline
    Tiny Swin&$80.08_{0.2}$&$87.47_{0.2}$&$78.34_{0.2}$&$66.84_{0.3}$&$68.44_{0.2}$&$\mathbf{89.08}_{0.1}$&$57.43_{0.3}$&$62.20_{0.2}$&$54.12_{0.3}$&$36.12_{0.3}$&$33.75_{0.3}$ & $\mathbf{66.47_{0.2}}$\\
    FocalNet&$80.87_{0.2}$&$85.65_{0.1}$&$71.03_{0.3}$&$42.92_{0.2}$&$41.49_{0.2}$&$\mathbf{86.92}_{0.1}$&$45.66_{0.3}$&$52.88_{0.3}$&$38.05_{0.3}$&$11.23_{0.3}$&$11.06_{0.3}$ & $\mathbf{52.9_{0.1}}$\\
    CCT-2/3$\times$2 & $73.12_{0.2}$&$83.95_{0.1}$ & $-$ & $34.63_{1.1}$ & $35.1_{0.8}$ & $\mathbf{84.63_{0.3}}$ & $52.12_{1.2}$& $60.14_{1.1}$ & $-$ & $8.06_{0.6}$ & $9.76_{0.3}$ &$\mathbf{60.63_{0.6}}$
    \end{tabular}\\
    $^*$Note that Adam and AdaFisher were used for all CNN architectures, while AdamW and AdaFisherW were applied for all ViT experiments.
    \end{subtable}
    \bigskip 
    \begin{subtable}{\textwidth}
    \noindent
    \centering
    \setlength{\tabcolsep}{0.001pt}
    \caption{With Cutout}
    \scriptsize
    \centering
    \begin{tabular}{c|c|c|c|c|c|c||c|c|c|c|c|c}
    &\multicolumn{6}{c||}{CIFAR10}&\multicolumn{6}{c}{CIFAR100}\\ \cline{2-13} Network&SGD&Adam&AdaHessian&K-FAC&Shampoo&AdaFisher&SGD&Adam&AdaHessian&K-FAC&Shampoo&AdaFisher\\
    \midrule ResNet18&$95.64_{0.1}$&$94.85_{ 0.1}$&$95.44_{0.1}$&$95.17_{0.2}$&$94.08_{0.2}$&$\mathbf{96.25_{ 0.2}}$&$76.56_{0.2}$ &$75.74_{0.1}$&$71.79_{0.2}$&$76.03_{0.3}$&$76.78_{0.2}$ & $\mathbf{77.28_{0.2}}$\\ ResNet50 &$95.71_{0.1}$&$94.45_{0.2}$&$95.54_{0.1}$&$95.66_{0.1}$&$94.59_{0.1}$&$\mathbf{96.34_{0.2}}$&$78.01_{0.1}$&$74.65_{0.5}$&$75.81_{0.3}$&$77.40_{0.4}$&$78.07_{0.4}$ & $\mathbf{79.77_{0.4}}$\\ ResNet101&$95.98_{0.2}$&$94.57_{0.1}$&$95.29_{0.6}$&$96.01_{0.1}$&$94.63_{0.1}$&$\mathbf{96.39_{0.1}}$&$78.89_{0.2}$&$75.56_{0.3}$&$73.38_{0.2}$&$77.01_{0.4}$&$78.83_{0.2}$ & $\mathbf{80.65_{0.4}}$\\ DenseNet121&$96.09_{0.1}$&$94.86_{0.1}$&$96.11_{0.1}$&$96.12_{0.1}$&$95.66_{0.1}$&$\mathbf{96.72_{0.1}}$&$80.13_{0.4}$&$75.87_{0.4}$&$74.80_{0.9}$&$79.79_{0.2}$&$80.24_{0.3}$ & $\mathbf{81.36_{0.3}}$\\
    MobileNetV3&$94.43_{0.2}$&$93.32_{0.1}$&$92.86_{3.1}$&$94.34_{0.1}$&$93.81_{0.2}$&$\mathbf{95.28_{0.1}}$&$73.89_{0.3}$&$70.62_{0.3}$&$56.58_{4.5}$&$73.75_{0.3}$&$70.85_{0.3}$ & $\mathbf{77.56_{0.1}}$\\
    \hline
    Tiny Swin&$82.34_{0.2}$&$87.37_{0.6}$&$84.15_{0.2}$&$64.79_{0.5}$&$63.91_{0.4}$&$\mathbf{88.74_{0.4}}$&$54.89_{0.4}$ &$60.21_{0.4}$&$56.86_{0.5}$&$34.45_{0.4}$&$30.39_{1.2}$ & $\mathbf{66.05_{0.5}}$\\
    FocalNet&$82.03_{0.2}$&$86.23_{0.1}$&$64.18_{0.2}$&$38.94_{0.8}$&$37.96_{0.7}$&$\mathbf{87.90}_{0.1}$&$47.76_{03}$&$52.71_{0.5}$&$32.33_{0.3}$&$9.98_{0.6}$&$9.18_{0.1}$ & $\mathbf{53.69_{0.3}}$\\
    CCT-2/3$\times$2&$78.76_{0.3}$&$83.89_{0.4}$&$-$&$33.08_{2.3}$&$35.16_{0.4}$&$\mathbf{84.94}_{0.3}$&$54.05_{0.4}$&$59.78_{0.5}$&$-$&$7.17_{0.2}$&$8.60_{0.1}$ & $\mathbf{62.91_{0.5}}$
    \end{tabular}\\
    $^*$Note that Adam and AdaFisher were used for all CNN architectures, while AdamW and AdaFisherW were applied for all ViT experiments.
    \end{subtable}
    \vspace{-10mm}
\end{table*}
\begin{figure}[!h]
    \centering
    \includegraphics[width=\textwidth]{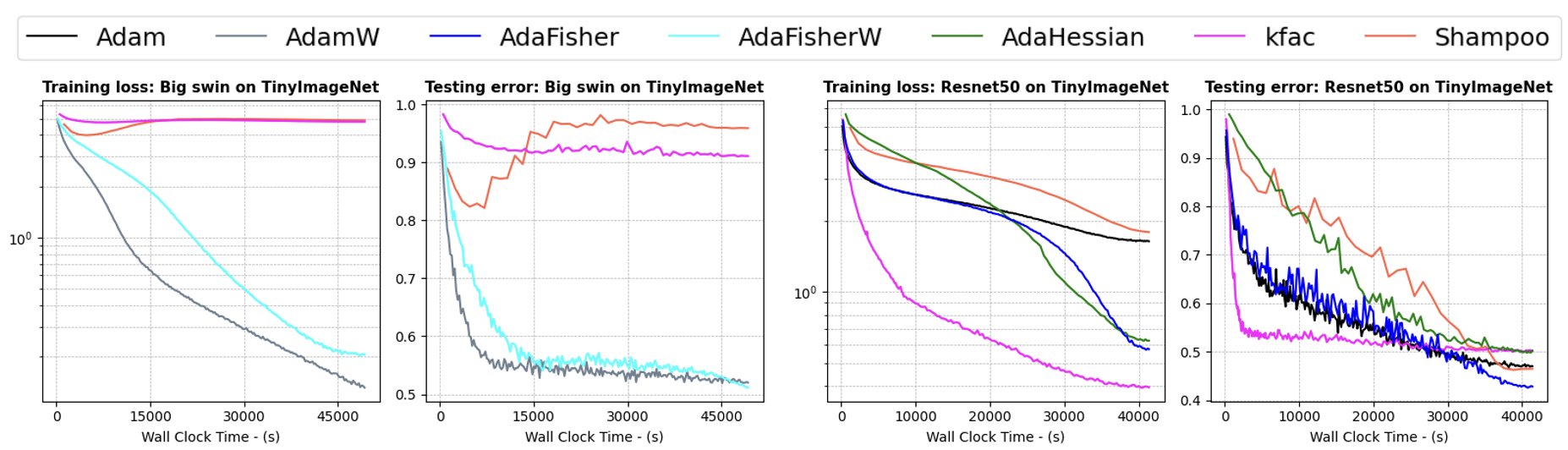}
    \caption{\small WCT training loss and testing error curves of several optimizers on Tiny ImageNet dataset, ResNet-50 and Big Swin with batch size of 256. AdaFisher consistently achieves lower test error as compared to Adam, AdaHessian, K-FAC and Shampoo. The final accuracy results are reported in Table~\ref{TinyImageNet}.}
    \label{fig:results_TinyImageNet}
\end{figure}
\begin{figure}[!h]
    \centering
    \includegraphics[width=\textwidth]{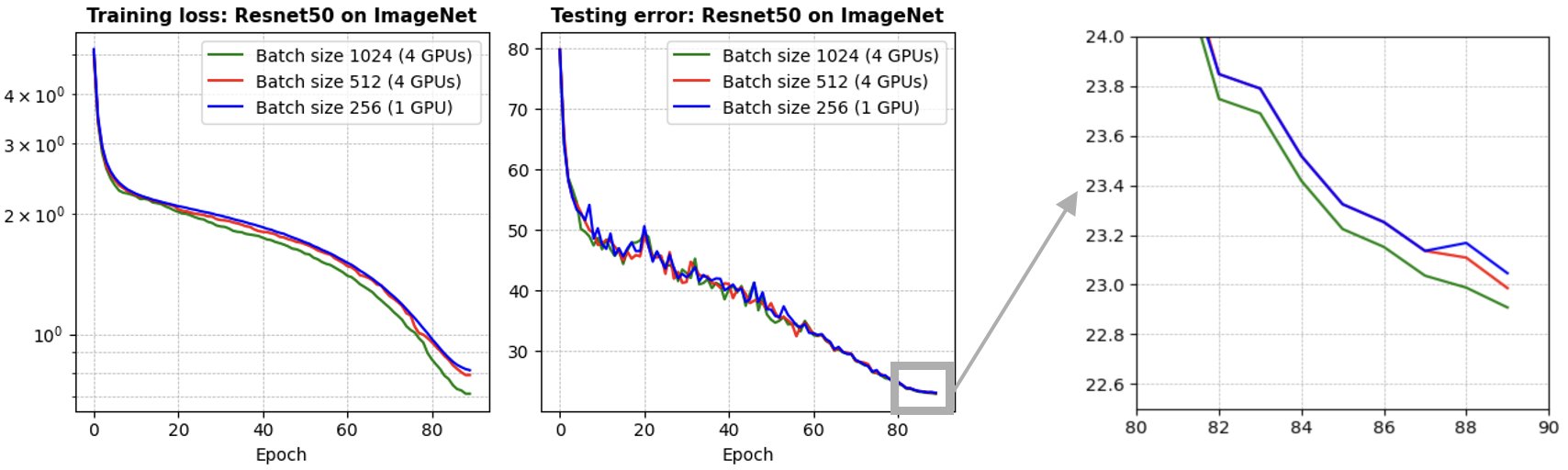}
    \caption{\small Performance of distributed AdaFisher using ResNet50 on ImageNet-1k with different batch sizes for 90 epochs. The final accuracy results are reported in Table~\ref{tab:imagenetresults}.}
    \label{fig:results_imageNet_dist}
\end{figure}
\vspace{-7mm}
\begin{figure}[!h]
    \centering
    \includegraphics[width=\textwidth]{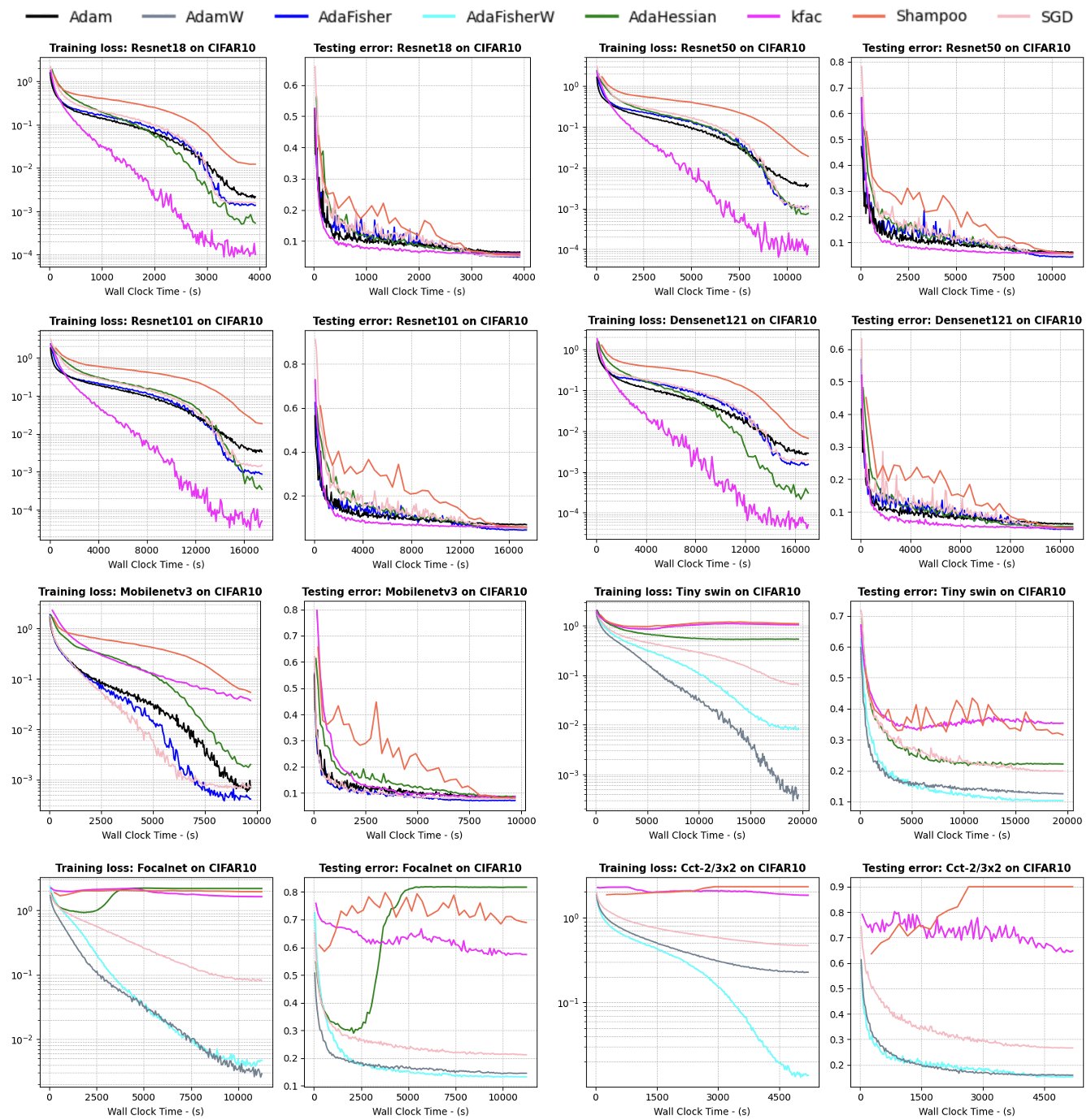}
    \caption{\small WCT training loss, test error, for CNNs and ViTs on CIFAR10 experiments, without Cutout. A batch size of 256 was used, and all networks were tuned using ResNet18 applied on CIFAR10. The final accuracy results are reported in Table~\ref{table_cutout_vs_nocutout} (a).}
    \label{fig:results_cifar10_no_cutout}
\end{figure}

\begin{figure}[!h]
    \centering
    \includegraphics[width=\textwidth]{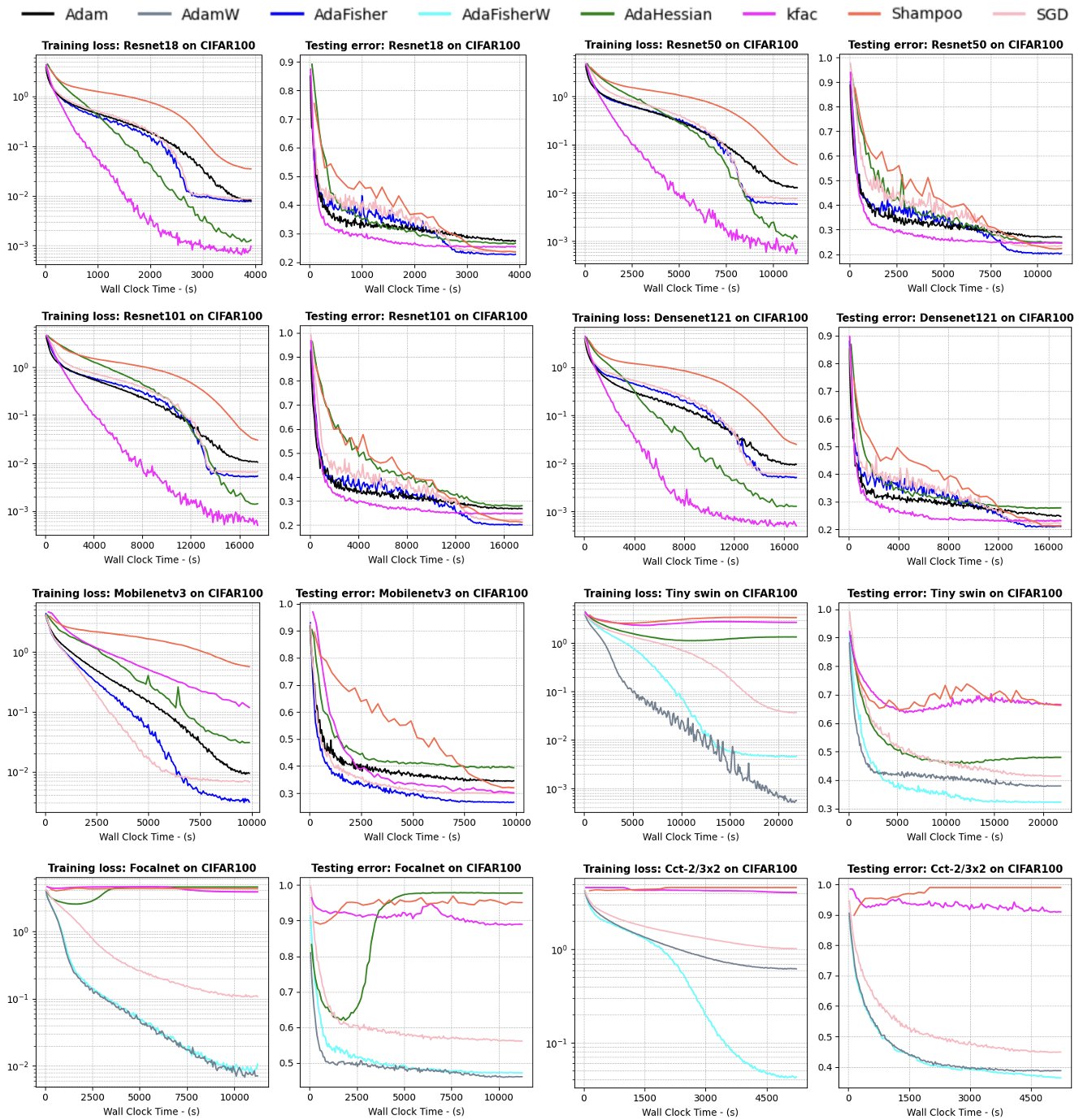}
    \caption{\small   WCT training loss, test error, for CNNs and ViTs on CIFAR100 experiments, without Cutout. A batch size of 256 was used, and all networks were tuned using ResNet18 applied on CIFAR10. The final accuracy results are reported in Table~\ref{table_cutout_vs_nocutout} (a).}
    \label{fig:results_cifar100_no_cutout}
\end{figure}

\begin{figure}[!h]
    \centering
    \includegraphics[width=\textwidth]{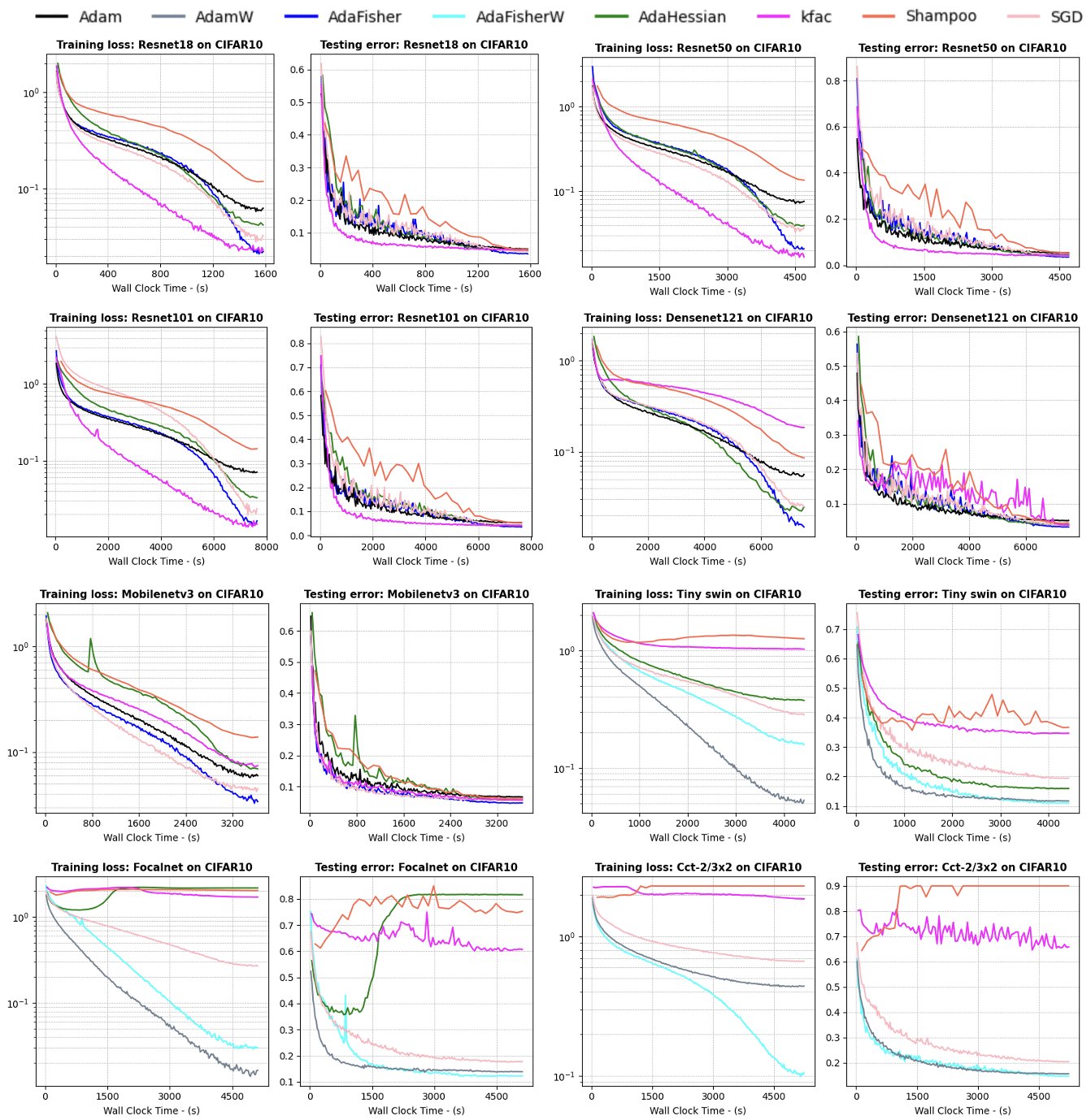}
    \caption{\small WCT training loss, test error, for CNNs and ViTs on CIFAR10 experiments, with Cutout. A batch size of 256 was used, and all networks were tuned using ResNet18 applied on CIFAR10. The final accuracy results are reported in Table~\ref{table_cutout_vs_nocutout} (b).}
    \label{fig:results_cifar10_cutout}
\end{figure}

\begin{figure}[!h]
    \centering
    \includegraphics[width=\textwidth]{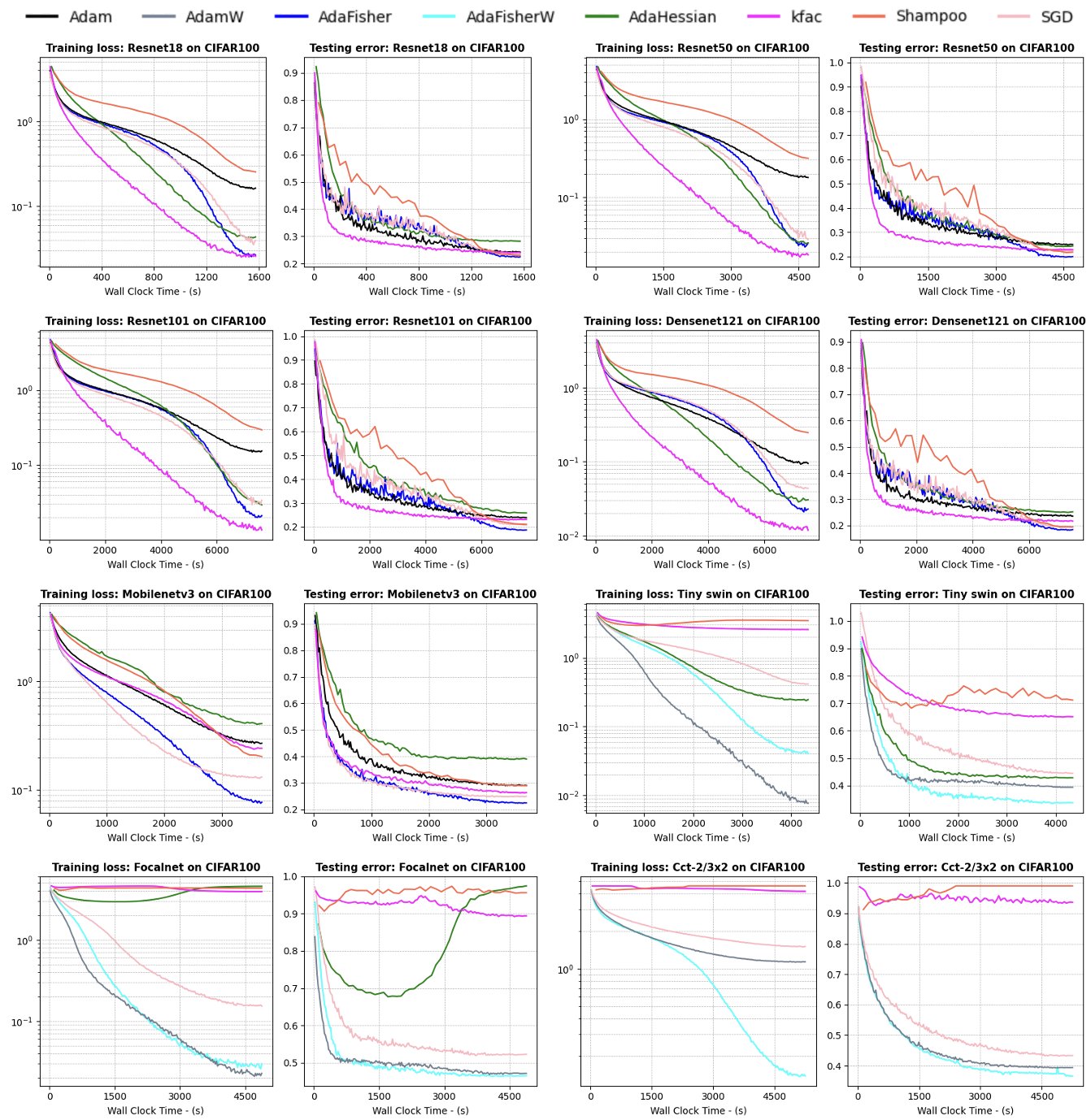}
    \caption{\small WCT training loss, test error, for CNNs and ViTs on CIFAR100 experiments, with Cutout. A batch size of 256 was used, and all networks were tuned using ResNet18 applied on CIFAR10. The final accuracy results are reported in Table~\ref{table_cutout_vs_nocutout} (b).}
    \label{fig:results_cifar100_cutout}
\end{figure}
\clearpage
\begin{center}
\begin{figure}[!h]
    \centering
    \includegraphics[width=\textwidth]{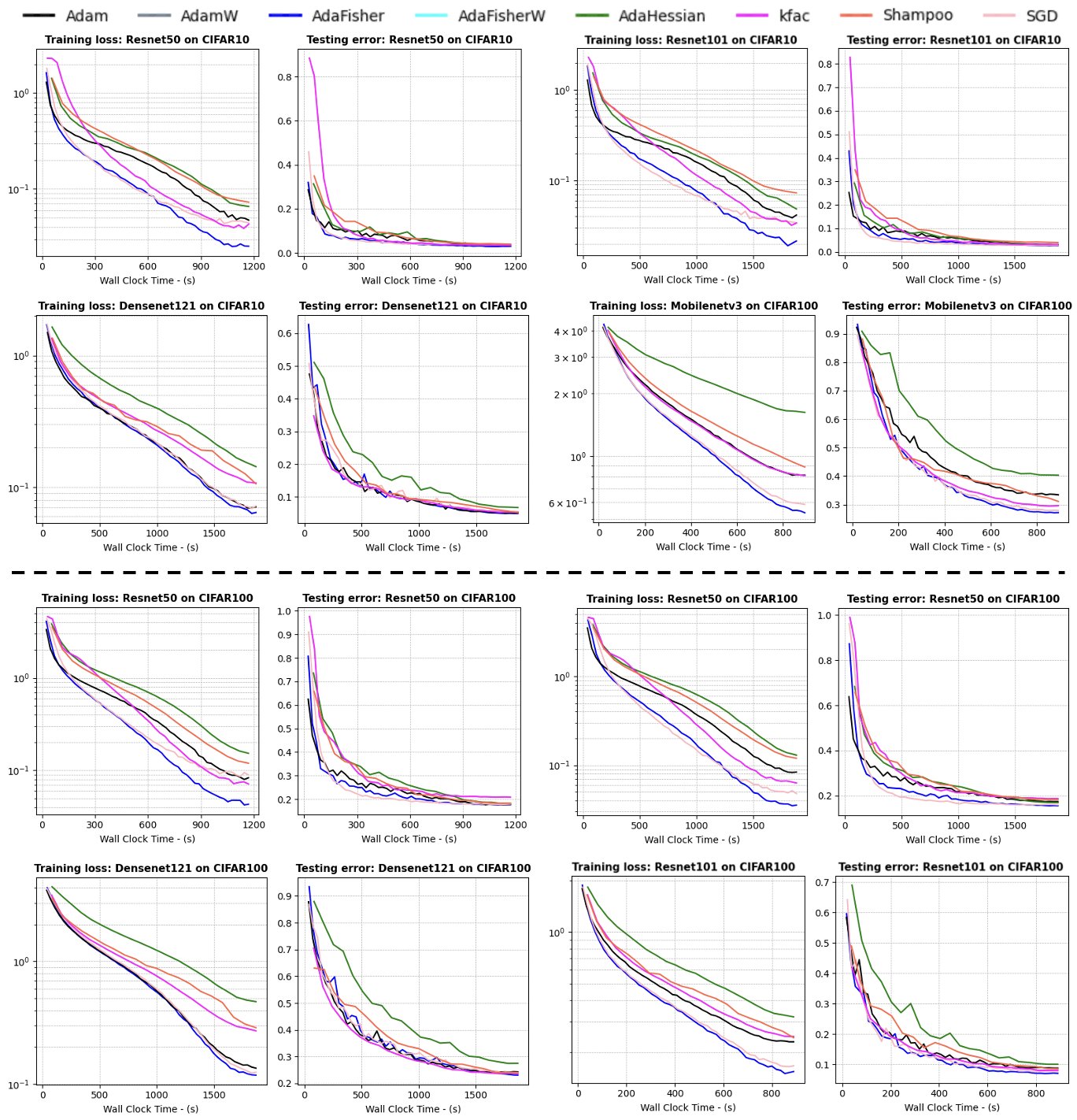}
    \caption{\small WCT training loss and test error for CNNs on CIFAR-10/100 experiments with pretrained weights on ImageNet-1k. A batch size of 256 was used, and all networks were tuned using ResNet50 applied on CIFAR10. The final accuracy results are reported in Table~\ref{pretrained_cifar}.}
    \label{fig:results_pretrained}
\end{figure}
\end{center}
\newpage 
\subsubsection{Comparison with Other Relevant Methods}

In this section, we compare AdaFisher with six baseline optimizers for image classification: SGD, Adam/AdamW, AdaHessian, KFAC, and Shampoo. These baselines were selected because they either represent the current state of the art or utilize second-order gradients, making them suitable comparisons for evaluating second-order optimizers. However, other optimizers, such as AdaFactor \citet{shazeer2018adafactor} and EVA \citet{zhang2023eva}, are also relevant in this context. AdaFactor is an enhanced Adam memory-efficient optimizer that approximates second-order moments using row and column factorizations, reducing memory consumption for large-scale models. EVA is a second-order optimizer designed to leverage the FIM with efficient matrix inversion techniques. Therefore, we experimentally compare AdaFisher against the optimizer baselines, including Eva and AdaFactor. Regarding the HPs for EVA, we used the optimal values reported in its original paper and trained the model for 119 epochs using the WCT technique. For AdaFactor, we fine-tuned the learning rate as described in Appendix~\ref{sec:HP}, identifying $0.001$ as the optimal value, and trained the model for 216 epochs. Figure~\ref{fig:eva_adafactor} illustrates the performance comparison on two distinct models: ResNet-18 with CIFAR-100 and MobileNetV3 with CIFAR10. The same data augmentation techniques were applied across all experiments, as detailed in Appendix~\ref{sec:dataaugmentation}. The best test accuracies achieved are summarized in Table~\ref{tab:ada_eva_comp}. AdaFisher demonstrates superior performance compared to the new optimizer baselines, outperforming both EVA and AdaFactor.
\begin{table*}[!ht]
 \vspace{-3mm}
    \noindent
    \centering
    \setlength{\tabcolsep}{8pt}
    \caption{\small Performance comparison of AdaFisher and other optimizers using ResNet-18 (CIFAR100) and MobileNet-V3 (CIFAR10). Reported using WCT of 200 AdaFisher training epochs as the cutoff. }
    \scriptsize
    \centering
    \begin{tabular}{c|c|c|c|c|c|c|c|c}
    
    Network|Optimizer&SGD&Adam&AdaFactor&AdaHessian&K-FAC&Eva&Shampoo&AdaFisher\\
    \midrule MobileNet-V3&$94.43$&$93.32$&$93.21$&$92.86$&$94.34$&$94.41$&$93.81$&$\mathbf{95.28}$\\
     ResNet-18& $76.56$&$75.74$&$69.45$&$71.79$&$76.03$&$76.69$&$76.78$&$\mathbf{77.28}$ 
     
    \label{tab:ada_eva_comp}
    \end{tabular}
\end{table*}
\vspace{-6mm}
\begin{figure}[!h]
    \centering
    \includegraphics[width=\textwidth]{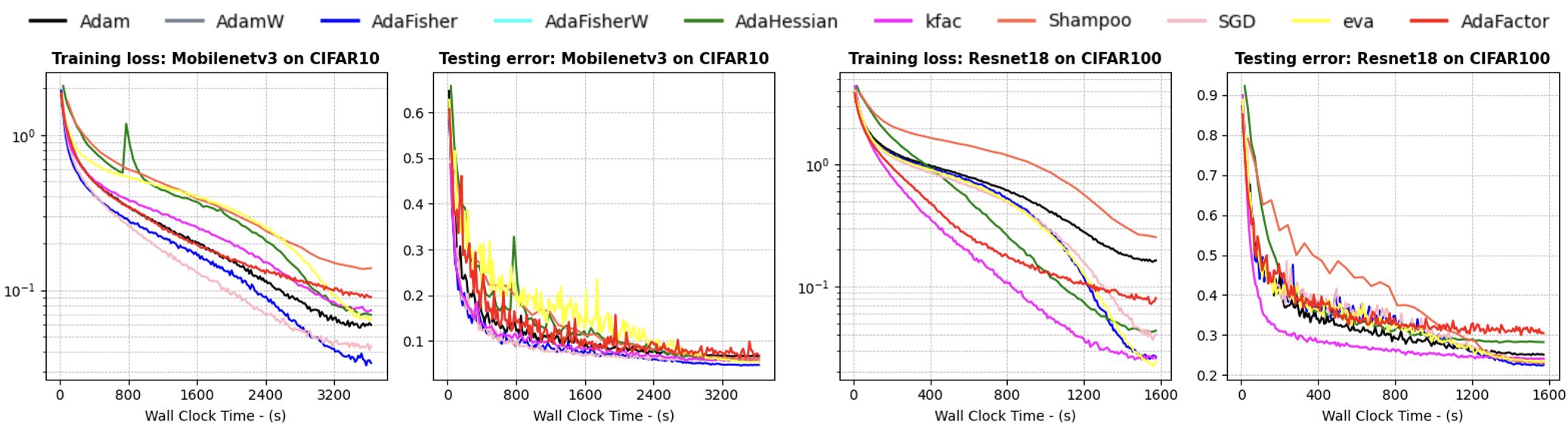}
    \caption{\small WCT training loss, test error, for ResNet-18 on CIFAR100 and MobileNet-V3 on CIFAR10. A batch size of 256 was used. The final accuracy and training time results are summarized in Table~\ref{tab:ada_eva_comp}.}
    \label{fig:eva_adafactor}
\end{figure}
\vspace{-6mm}
\begin{table*}[!ht]
    
    \caption{\small Performance comparison of AdaFisher and other optimizers using (a) ResNet-18 and (b) MobileNet-V3 on CIFAR100 for 200 epochs.}
    \label{comp_epochs}
    \begin{subtable}{\textwidth}
    \noindent
    \setlength{\tabcolsep}{7pt}
    \caption{ResNet-18}
    \scriptsize
    \centering
    \begin{tabular}{c|c|c|c|c|c|c|c|c}
    Optimizer&SGD&Adam&AdaFactor&AdaHessian&K-FAC&Eva&Shampoo&AdaFisher\\
    \midrule Test Acc&$76.52$&$75.71$&$69.78$&$76.86$&$76.96$&$77.08$&$\mathbf{77.35}$&$77.28$\\
    Training Time (min) & $20.03$&$23.33$&$21.67$&$96.67$&$46.46$&$43.18$&$216.67$&$26.58$ 
    \end{tabular}
    \end{subtable}
    \bigskip 
    \begin{subtable}{\textwidth}
    \noindent
    \centering
    \setlength{\tabcolsep}{7pt}
    \caption{MobileNet-V3}
    \scriptsize
    \centering
    \begin{tabular}{c|c|c|c|c|c|c|c|c}
    Optimizer&SGD&Adam&AdaFactor&AdaHessian&K-FAC&Eva&Shampoo&AdaFisher\\
    \midrule Test Acc&$73.42$&$70.53$&$71.08$&$62.36$&$75.16$&$75.48$&$70.65$&$\mathbf{77.56}$\\
    Training Time (min) & $50.03$&$56.63$&$54.22$&$206.28$&$116.86$&$96.78$&$487.21$&$60.12$ 
    \end{tabular}
    \end{subtable}
    \bigskip 
    \begin{subtable}{\textwidth}
    \noindent
    \centering
    \setlength{\tabcolsep}{7pt}
    \caption{ResNet-50}
    \scriptsize
    \centering
    \begin{tabular}{c|c|c|c|c|c|c|c|c}
    Optimizer&SGD&Adam&AdaFactor&AdaHessian&K-FAC&Eva&Shampoo&AdaFisher\\
    \midrule Test Acc&$76.12$&$73.03$&$70.78$&$76.18$&$77.66$&$78.01$&$78.89$&$\mathbf{78.91}$\\
    Training Time (min) & $70.13$&$76.67$&$73.32$&$502.28$&$149.36$&$138.58$&$583.11$&$83.02$
    \end{tabular}
    \end{subtable}
    \vspace{-4mm}
\end{table*}
\subsubsection{Comparison with Consistent Epoch Counts}
We evaluated AdaFisher and its counterparts, including two prominent optimizers, Eva and Adafactor, over 200 epochs on ResNet-18, ResNet-50 and MobileNet-V3 using the CIFAR100 dataset. Figure~\ref{fig:comp_epochs} illustrates the training loss and test error trends over epochs, along with the best test error achieved as a function of training time per epoch for all optimizers across both models. Table~\ref{comp_epochs} summarizes the highest test accuracy and total training time for each method on both network architectures. Notably, while Shampoo achieved marginally better test accuracy than AdaFisher on ResNet-18, it required approximately eight times longer training time. Conversely, AdaFisher outperformed all baseline optimizers, including Shampoo, in the MobileNet-V3 and ResNet-50 experiments, achieving superior test accuracy while maintaining high efficiency comparable to first-order optimizers.
\begin{figure}[!h]
    \centering
    \includegraphics[width=\textwidth]{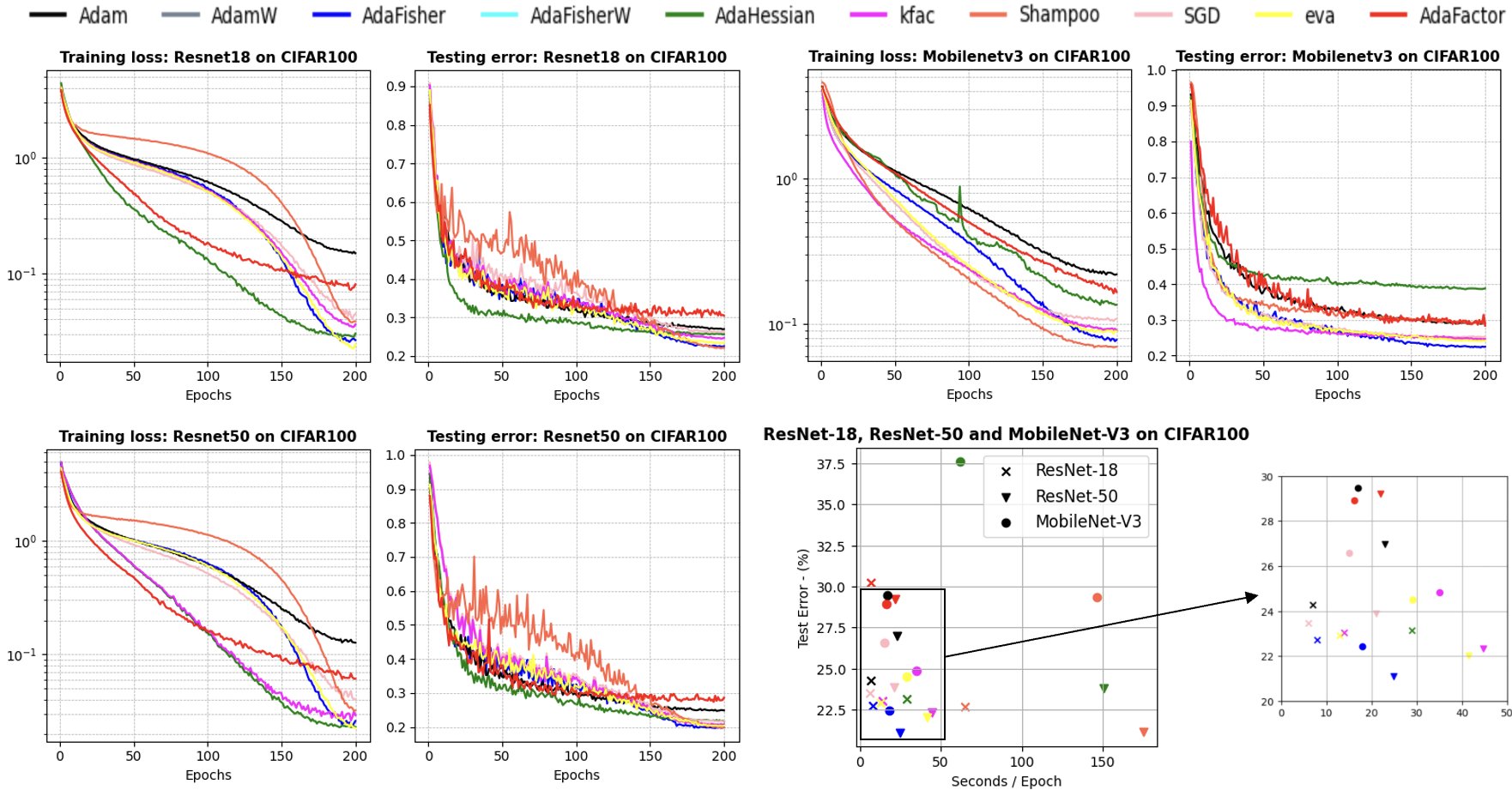}
    \caption{\small Performance comparison of AdaFisher and other well-finetuned optimizers at their best performances using ResNet-18 and MobileNet-V3 on CIFAR-100 for 200 epochs. A batch size of 256 was used. The final accuracy and training time results are summarized in Table~\ref{comp_epochs}.}
    \label{fig:comp_epochs}
\end{figure}
\vspace{-5mm}

\subsubsection{Comparison of Training Speed and Memory Utilization}
\label{sec:complexitycost}
As discussed in Section~\ref{sec:stabilityanalysis}, AdaFisher emerges as a balanced trade-off between time complexity and performance. Similarly, its memory footprint is comparable to that of Adam, showcasing efficient VRAM utilization. We extend our stability analysis to the CIFAR10 dataset to provide a dataset-independent evaluation of performance metrics, as depicted in panel (A) of Figure~\ref{fig:stability_cifar10}. Additionally, we analyze the memory usage for different batch sizes using the ResNet-50 model on the CIFAR-10/100, presented in panel (B) of Figure~\ref{fig:stability_cifar10}. The analysis reveals that AdaFisher while maintaining high accuracy levels, uses memory comparably to Adam, especially evident in larger batch sizes. This suggests that AdaFisher can achieve competitive performance without excessive VRAM consumption, making it an optimal choice for scenarios with memory constraints.
\begin{figure}[!h]
    \centering
    \includegraphics[width=\textwidth]{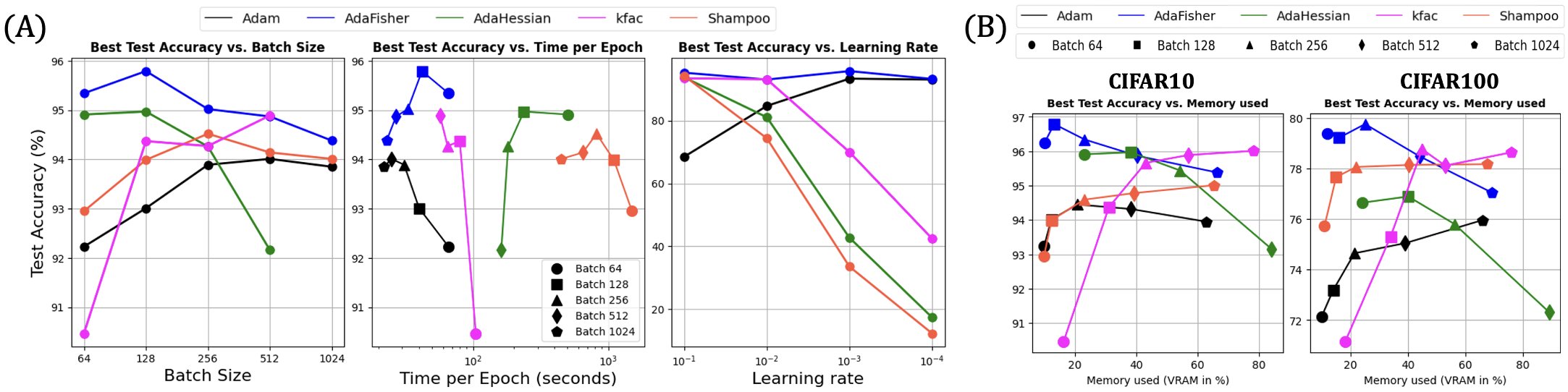}
    \caption{\small (A) Performance comparison of AdaFisher and other optimizers across various batch sizes, epoch times and learning rates (with a batch size of 256), evaluated using the ResNet50 on the CIFAR-10. (B) Performance comparison of AdaFisher and other optimizers regarding the memory used, assessed using ResNet50 and CIFAR10/100 across different batch sizes. This figure highlights how AdaFisher competes closely with Adam in terms of memory efficiency and performance.}
    \label{fig:stability_cifar10}
\end{figure}
\begin{figure}[!h]
    \centering
    \includegraphics[width=\textwidth]{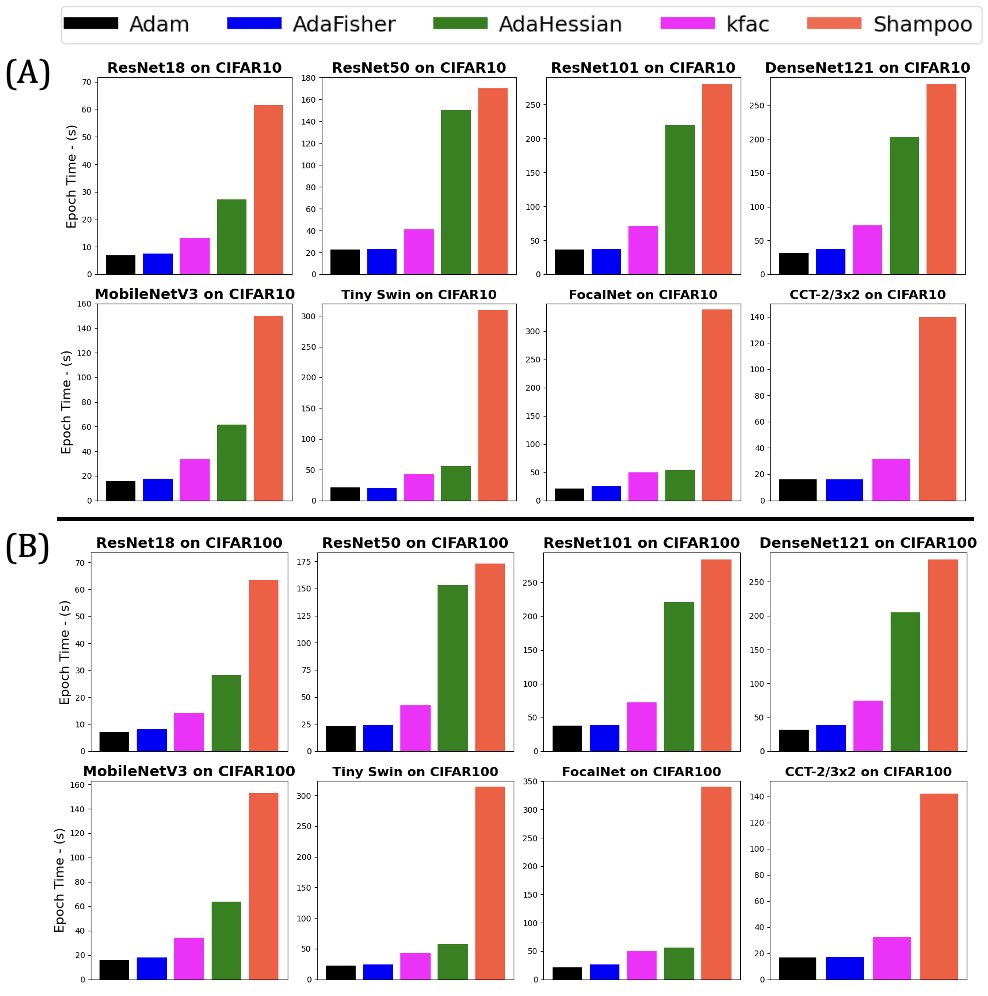}
    \caption{\small Epoch times for various networks on CIFAR10 (A) and CIFAR100 (B) using Adam, AdaFisher, K-FAC, AdaHessian and Shampoo.}
    \label{fig:bar_plot_time}
\end{figure}

\textbf{Epoch Times.} Continuing our analysis of the time complexity for each optimizer, we present in Figure~\ref{fig:bar_plot_time}  the epoch times for various network architectures and datasets. Specifically, we compare the epoch times of Adam, AdaFisher, K-FAC, AdaHessian, and Shampoo optimizers on CIFAR10 and CIFAR100 datasets. As depicted in Figure~\ref{fig:bar_plot_time} panel (A), AdaFisher demonstrates a comparable training time to Adam across multiple network architectures on the CIFAR10 dataset. This indicates that AdaFisher achieves efficient optimization without incurring significant additional computational costs. Similarly, in Figure~\ref{fig:bar_plot_time} panel (B), we observe that the epoch times for AdaFisher remain close to those of Adam on the CIFAR100 dataset. While K-FAC and AdaHessian exhibit increased training times, Shampoo shows the highest epoch times across all tested networks. This further highlights the efficiency of AdaFisher as an optimizer, combining the advantages of advanced optimization techniques with practical training times.
\subsection{Language Modeling}\label{sec:languagemodelling}
\subsubsection{Dataset Details}
The Wikitext-2 dataset, derived from high-quality Wikipedia articles, contains over two million words and is structured into training, validation, and test sets. It is widely used for benchmarking language models in natural language processing, especially assessing perplexity to evaluate predictive performance. This dataset offers a balance between computational efficiency and linguistic complexity, making it ideal for practical language model training and evaluation.
\subsubsection{Network Details}
\textbf{Network.} We utilize a streamlined GPT-1 architecture, which incorporates four self-attention layers, a reduction from the original twelve. This configuration retains core modeling capabilities while reducing complexity, encompassing a total of 28,351,488 learnable parameters.\\
\textbf{Embeddings \& Parameter Sharing.} To expedite training, we employ pretrained embeddings from OpenAI's GPT, leveraging the benefits of parameter sharing for enhanced efficiency and faster convergence.

\subsubsection{HPs}
The model underwent training for 50 WCT epochs using AdaFisher on the WikiText-2 and PTB datasets, with the final epoch counts for each optimizer detailed in Table~\ref{table:epoch_num_LM}.
\begin{table*}[!ht]
    \centering
    \caption{\small Final selected epoch counts for various optimizers across language modeling task}
    \scriptsize
    \setlength{\tabcolsep}{33pt}
    \begin{tabular}{c|c|c|c}
    AdamW & AdaHessian & Shampoo & AdaFisherW\\
    \midrule
    55 & 18 & 12 & 50
    \label{table:epoch_num_LM}
    \end{tabular}
\end{table*}
For AdamW, we follow the learning rate setting in \cite{elnokrashy2022depth}. For the other optimizers, we select the learning rate by doing a grid search of $\{0.3, 0.15, 0.1, 0.05, 0.03, 0.015, 0.01, \dots, 1e^{-5}\}$. We tabulate the learning rate that we use in Table~\ref{learning_rate_lm}. The batch size was configured to 32, and the weight decay was established at $0.1$. Despite optimizing the configuration of HPs, Shampoo failed to converge, and K-FAC could not be trained at all.
\begin{table*}[!ht]
    \centering
    \caption{Final selected learning rates for each optimizer, tuned using GPT1 on WikiText-2 and PTB using a batch size of 32. We selected based on final validation PPL.}
    \scriptsize
    \setlength{\tabcolsep}{33pt}
    \begin{tabular}{c|c|c|c}
    AdamW & AdaHessian & Shampoo & AdaFisherW\\
    \midrule
    $5e^{-5}$ & $0.015$ & $0.003$ & $1e{-4}$
    \label{learning_rate_lm}
    \end{tabular}
\end{table*}
\subsubsection{Results}
Figure~\ref{fig:lm_results} displays the training loss and testing error curves, clearly showing that AdaFisher surpasses both Adam and AdaHessian in performance on the WikiText-2 and PTB datasets.
\begin{figure}[!h]
    \centering
    \includegraphics[width=\textwidth]{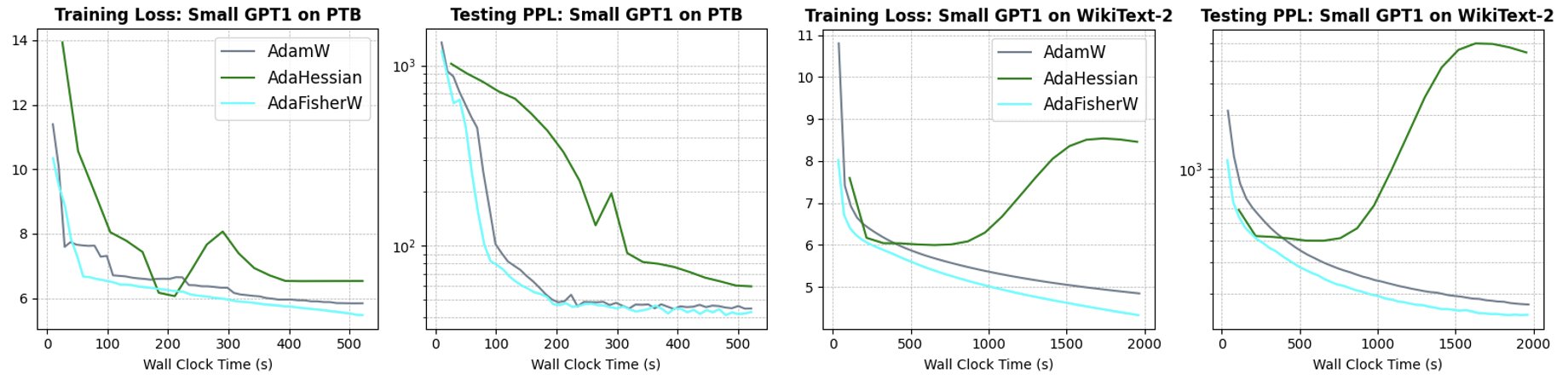}
    \caption{\small Training Loss and Test Perplexity of Small GPT-1 Model on WikiText-2 and PTB Datasets. Experiments were conducted using a batch size of 32 and optimal settings for all optimizers.}
    \label{fig:lm_results}
\end{figure}

\end{document}